\documentclass[lettersize,journal]{IEEEtran}
\usepackage{array}
\usepackage{subcaption}
\usepackage{textcomp}
\usepackage{stfloats}
\usepackage{verbatim}
\usepackage{graphicx}
\usepackage{cite}
\usepackage{xcolor}
\usepackage{amsmath, amssymb, amsfonts, amsthm}
\usepackage{mathtools}
\usepackage{booktabs}
\usepackage{hyperref}
\usepackage{url}
\usepackage{enumitem}
\usepackage{cases}
\usepackage[ruled]{algorithm2e}

\theoremstyle{definition}

\theoremstyle{plain}
\newtheorem{theorem}{Theorem}

\newtheorem{lemma}{Lemma}

\newtheorem{assumption}{Assumption}

\allowdisplaybreaks

\newtheorem{remark}{Remark}
\title{\LARGE \bf Sparse Network Inference under Imperfect Detection \\ and its Application to Ecological Networks}

\author{
  Aoran Zhang, Tianyao Wei, Maria J. Guerrero, C\'esar A. Uribe
  \thanks{This work was funded by the Ken Kennedy Institute, Fulbright, and the National Science Foundation, United States, under Grant \#2443064. }
\thanks{A.Z. and C.A.U. (\textit{\{az73,cauribe\}@rice.edu}) are with the Department of Electrical and Computer Engineering and the Ken Kennedy Institute. T.W. (\textit{\{cw158\}@rice.edu}) is with the Department of Computational and Applied Mathematics at Rice University. M.J.G. (\textit{\{mg160\}@rice.edu}) is with SISTEMIC, Facultad de Ingenier\'ia, Universidad de Antioquia and the Department of Electrical and Computer Engineering at Rice University. }
}

\begin{document}
\maketitle
\vspace{-0.2in}

\begin{abstract}
    Recovering latent structure from count data has received considerable attention in network inference, particularly when one seeks both cross-group interactions and within-group similarity patterns in bipartite networks, which is widely used in ecology research. Such networks are often sparse and inherently imperfect in their detection. Existing models mainly focus on interaction recovery, while the induced similarity graphs are much less studied. Moreover, sparsity is often not controlled, and scale is unbalanced, leading to oversparse or poorly rescaled estimates with degrading structural recovery. To address these issues, we propose a framework for structured sparse nonnegative low-rank factorization with detection probability estimation. We impose nonconvex $\ell_{1/2}$ regularization on the latent similarity and connectivity structures to promote sparsity within-group similarity and cross-group connectivity with better relative scale. The resulting optimization problem is nonconvex and nonsmooth. To solve it, we develop an ADMM-based algorithm with adaptive penalization and scale-aware initialization and establish its asymptotic feasibility and KKT stationarity of cluster points under mild regularity conditions. Experiments on synthetic and real-world ecological datasets demonstrate improved recovery of latent factors and similarity/connectivity structure relative to existing baselines.
\end{abstract}

\begin{IEEEkeywords}
augmented Lagrangian, nonconvex nonsmooth optimization, nonnegative matrix factorization, link prediction, ecological network inference, structured sparse recovery
\end{IEEEkeywords}

\section{Introduction}
Latent recovery of connectivity structure from biased and partial observations is a recurring problem in signal processing and network inference~\cite{Dong2016, Mateos2019}. In many scenarios, the object of interest is a bipartite interaction matrix with entries encoding frequencies, affinities, or event intensities between two classes of entities. When the data are observed through noisy counting processes, inference goes beyond standard matrix completion tasks as one must jointly account for latent low-rank structure, distorted measurements, and potential structural constraints of the recovered network. This setting is inherent in sensing and monitoring applications~\cite{shafique2022imputing, nakashima2022double}, where observations, such as counts, are obtained via an imperfect sampling process.

In this paper, we are interested in ecological interaction networks describing how species associate with locations and how environments shape biodiversity patterns~\cite{Dawson2026, Suresh2025}. These systems can naturally be represented as bipartite graphs where one node set corresponds to species and the other to geographic areas~\cite{Guerrero2025, Duarte2021}. The intensity of interaction is recorded as the frequency of observations in field surveys. However, observations are limited by imperfect detection: even when a species is present at one location, its presence may go unrecorded due to factors such as limited sampling times, occlusions, or misidentification. As a result, the observed count matrix provides noisy information about the latent network~\cite{jordano2016sampling}. 
The goal here is to denoise the measurements and, at the same time, recover interpretable structures: cross-group connectivity between locations and species, as well as within-group similarity patterns among locations and among species. 
This naturally motivates a structured latent-factor model under imperfect observations.  Imperfect detection must be modeled explicitly, or otherwise, zeros and small counts may be incorrectly interpreted as weak or absent interactions rather than missing recordings.

Prior work addressed this issue by modeling latent interaction intensity through nonnegative matrix factorization and estimating detection probabilities separately in an alternating scheme~\cite{fu2019link}. While this provides an important starting point, it does not fully exploit an additional structural prior that is central in many networked systems: sparsity in the induced within-group and cross-group relations. In many bipartite network recovery problems, the quantities of interest are not limited to the cross-group interaction matrix itself. The similarity structures within each group can be equally important for interpretation and downstream analysis. In ecological applications, for example, it is expected that only a subset of latent relationships will be active. This reflects selective co-occurrence, habitat specialization, or structured overlap in detectability. For instance, monitoring sites within the same habitat type may exhibit similar acoustic community compositions due to shared environmental conditions. However, sites separated by land-use barriers or fragmentation may display significantly different species assemblages, even when they share the same cover classification~\cite{Guerrero2025}. This motivates us to impose regularization on structured sparsity on both cross-group connectivity and within-group similarity matrices.

The resulting estimation problem is difficult for several reasons: a) the parameterization of the latent intensity is bilinear with nonconvex likelihood; b) the structured sparsity penalties with non-negative constraints introduce non-smooth and nonconvex regularization; c) the regularizers consist of quadratic functions of factors, creating coupling across variables and ruling out direct applications of standard alternating minimization methods that are developed for separable matrix factorization models. As a result, the problem is a nonconvex, nonsmooth, and block-structured optimization problem with nonlinear, nonseparable constraints.

In this work, we formulate latent connectivity recovery under imperfect detection as a structured, sparse, nonnegative, low-rank estimation problem. The contributions of this paper are as follows:
\begin{itemize}[leftmargin=2.0em,left=0pt,itemsep=1pt, parsep=-1pt, topsep=-0pt, partopsep=-1pt]
\item We propose a structured latent-factor model for bipartite count data under imperfect detection that jointly captures cross-group interaction intensity, within-group similarity, and feature-dependent detectability.
\item We introduce nonconvex $\ell_{1/2}$ regularization on the induced similarity and connectivity matrices, leading to a nonlinear, nonsmooth, and nonseparable split formulation.
\item We develop a structure-exploiting ADMM-style augmented-Lagrangian scheme for this split problem. The specific variable splitting turns the nonsmooth regularizers into exact half-thresholding updates for the auxiliary blocks, while the factor blocks are handled by projected gradient steps under nonnegativity constraints, and the detectability block remains convex.
\item We provide a model-specific convergence analysis for the proposed split formulation. Under bounded iterates and vanishing first-order residuals of the inner factor updates, we establish asymptotic feasibility and stationarity of cluster points for the split problem.
\item We demonstrate the recovery performance on large-scale synthetic and real-world ecological datasets, showing that the proposed structured formulation can improve the recovery of latent connectivity and similarity patterns relative to baselines that do not impose a structured sparsity prior.
\end{itemize}

The rest of this paper is organized as follows: Section II reviews related work. Section III introduces the proposed model and problem formulation. Section IV presents the optimization algorithm. Section V gives the convergence analysis. Section VI reports experimental results on synthetic and real datasets.

\section{Related Work}
\paragraph{N-mixture Model} 
Ecological interaction networks can be represented as bipartite networks in which interactions occur across two groups, e.g., plants and pollinators~\cite{jordano2003invariant, bascompte2007networks}, hosts and parasites~\cite{dallas2017predicting}, or soundscape signals and locations~\cite{Guerrero2025}. In such systems, observed interaction counts arise from both underlying ecological processes and detection mechanisms, providing an indirect view of the latent interaction structure. This distinction is well established in ecological modeling, where hierarchical approaches such as N-mixture models explicitly separate latent ecological processes from the stochastic detection effects.~\cite{royle2004n, joseph2009modeling,
mackenzie2002estimating, kellner2014accounting}. 
A recent work combined the N-mixture framework with link prediction methods to extend imperfect-detection models to ecological interaction networks~\cite{fu2019link}. For link prediction problems, interaction intensity is often modeled via low-rank matrix factorization, where the interaction strength across groups is parameterized through latent embeddings~\cite{liben2003link, koren2009matrix, candes2012exact}. In this
representation, the entries of the interaction matrix are modeled as the inner product of the two latent factors to capture shared
characteristics of the two groups~\cite{fu2019link}.

While most ecological network studies focus on cross-group interactions, less attention has been given to the structural relationships within each group~\cite{Guerrero2025}.
% Within-group similarity may arise from shared traits, niche overlap, or indirect competitive or facilitative effects among species, and these within-group associations are conceptually distinct from the observed bipartite interactions. 
Modeling both cross-group connectivity and within-group similarity can thus offer a more informative description of the ecological network structure. 
Another issue is that, in practice, ecological interaction networks are known to be sparse and highly structured with many potential links absent or unobserved
\cite{jordano2016sampling, jordano2003invariant,
terry2020finding, barker2018reliability}. Motivated by these challenges, we aim to recover both the cross-group interaction structure and the within-group similarity using structured latent embeddings with sparsity regularization to promote interpretable and ecologically plausible network recovery.

\paragraph{Nonnegative Matrix Factorization}
Low-rank factorization represents target matrices as low-dimensional embeddings (e.g., $X=UV^\top$) by optimizing a nonconvex objective via alternating minimization or projected gradient methods~\cite{koren2009matrix, lee2000algorithms, haeffele2014structured, jain2013low}. Under suitable incoherence and sampling conditions, these methods can converge to optimal solutions despite the nonconvexity~\cite{huang2016flexible, park2017non, park2018finding}. 

Beyond global low-rank patterns, many studies decompose the target matrix into a low-rank component and a sparse component. Representative formulations include robust PCA~\cite{candes2011robust} and low-rank–plus–sparse models~\cite{bertsimas2023sparse} where the data matrix is expressed as $D = L + S$ with $L$ low rank and $S$ sparse. Optimization can be performed via branch and bound~\cite{bertsimas2023sparse}, projected-gradient method~\cite{zhang2018unified}, and ADMM-based methods~\cite{gu2016low, shang2014recovering}. 
The work closest to ours is sparse reduced-rank regression where $X$ is decomposed directly into $UV^\top$ and group sparsity is imposed on $UV^\top$~\cite{dubois2019fast, haeffele2019structured}. 
While conceptually similar, an orthogonality constraint is enforced to remove rotational ambiguity ~\cite{dubois2019fast}, and the sparsity constraint is convex or separable~\cite{dubois2019fast, haeffele2019structured}.
Different from these approaches, our formulation imposes only nonnegativity constraints on the latent factors and introduces non-convex sparsity regularizers on similarity structures. These regularizers are nonseparable across the factor variables, leading to a nonconvex and nonsmooth optimization problem. To address this challenge, we employ an ADMM-based framework that introduces auxiliary variables to isolate the nonsmooth penalties and enables efficient iterative updates.

\paragraph{ADMM and related primal--dual splitting} The classical convergence theory for nonconvex ADMM is strongest in affine-coupled settings. In~\cite{hong2016admm}, the authors analyzed nonconvex consensus and sharing formulations, and proved convergence of ADMM to stationary solutions under a sufficiently large fixed penalty parameter. In~\cite{wang2019global}, the authors extended this line to a broader class of nonconvex, possibly nonsmooth objectives with coupled linear equality constraints. These results are foundational, but they do not directly cover our formulation because the splitting constraints are nonlinear quadratic equalities rather than affine couplings. For more general nonlinearly coupled constraints, the penalty dual decomposition (PDD) framework in~\cite{shi2020penalty} provides a broader primal--dual approach with convergence to KKT points. More recent works include inexact or increasing-penalty ADMM variants, such as the inexact linearized ADMM for nonlinear equality constraints ~\cite{el2025convergence} and the IPDS-ADMM framework~\cite{yuanadmm}. Our goal is different from these general frameworks. We design a structure-exploiting ADMM-style scheme specialized to the present ecological factorization model. The key advantage of this specialization is that the auxiliary blocks admit exact half-thresholding updates, while the primary variable blocks reduce to smooth nonnegative subproblems that can be handled efficiently by projected gradient steps.

\section{Model and Problem Statement}
We model the observed counts as the interactions between two groups in a bipartite network through a Poisson $N$-mixture process~\cite{fu2019link} with repeated observations (replicates). For each member of one group $i \in \{1,\ldots,I\}$ and the other group $j \in \{1,\ldots,J\}$, replicate $m \in \{1,\ldots,M\}$, and the observed count $Y_{ij}^{(m)}$, we denote $\mathbf u_i,\mathbf v_j \ge \mathbf 0$ as the $i$-th row and $j$-th row of the latent factor matrices $U \in \mathbb R^{I \times F}$ and $V \in \mathbb R^{J \times F}$ where $U$ and $V$ encode the latent embeddings. Here, $F$ denotes the latent rank, which induces a low-rank structure on the interaction-intensity matrix. The latent interaction intensity between $i$ and $j$ is then $\lambda_{ij} = \mathbf  u_i^\top \mathbf v_j$. Conditioning on $\lambda_{ij}$, each replicate of observation has its own latent Poisson count with imperfect detection:
% \vspace{-0.2em}
\begin{align*}
N_{ij}^{(m)} &\sim \mathrm{Poisson}(\lambda_{ij}),  \\
Y_{ij}^{(m)} \big| N_{ij}^{(m)} &\sim \mathrm{Binomial}\!\left(N_{ij}^{(m)},p_{ij}\right).  
\end{align*}
Following the link prediction model of~\cite{fu2019link}, we parameterize the detection probabilities across all replicates via 
\vspace{-0.6em}
\[p_{ij} = \boldsymbol{\alpha}^\top Z^{(ij)}, \quad \text{subject to} \quad  0 \leq p_{ij } \leq 1,\]
where $\boldsymbol{\alpha} \in \mathbb R^R$ is a shared coefficient vector and $Z^{(ij)} \in \mathbb R^R$ is an interaction-specific feature vector encoding domain knowledge of characteristics that influence detectability. For example, in sonotype-specific acoustic data, this includes the dominant frequency, the minimum and maximum vocalization frequencies, and the temporal duration of the vocalization. 

Assuming the distribution of latent interaction and detection probability is the same over replicates, collecting all observations $Y := \{Y_{ij}^{(m)}\}$, $Y_{ij} \sim \mathrm{Poisson}\!\big(Mp_{ij}\lambda_{ij}\big)$. Up to additive constants, the log-likelihood of $\lambda = \{\lambda_{ij}\} = \{\mathbf u_i^\top \mathbf v_j\}$ and the detection probability matrix $P := \{p_{ij}\}$ is
\begin{multline*}
L(U, V, P) =   \sum_{ijm} \log \big(
\sum_{n = y_{ij}^{(m)}}^\infty
\mathbb P\!\left(N_{ij}^{(m)} = n;\lambda_{ij}\right)\\
\mathbb P\!(Y_{ij}^{(m)} = y_{ij}^{(m)} \big| N_{ij}^{(m)} = n;p_{ij})\big) 
 =  \sum_{ijm} \big[
y_{ij}^{(m)} \log p_{ij}\\
+ y_{ij}^{(m)} \log\big(\mathbf u_i^\top \mathbf v_j\big)
- p_{ij}  \mathbf u_i^\top \mathbf v_j
- \log\big(y_{ij}^{(m)}!\big)
\big].
\end{multline*}
\paragraph{Sparsity} In the application domain of ecological networks, the latent connectivity matrices $UU^\top$, $VV^\top$, and $UV^\top$ are expected to be sparse. To promote sparse structure in these latent patterns, we impose penalties on the norm $UU^\top$, $VV^\top$, and $UV^\top$.  While the $\ell_1$ penalty is a standard convex surrogate for sparsity, it introduces stronger amplitude bias on large coefficients and can overly attenuate significant latent relations, leading to oversmoothed graph estimates. Though $\ell_0$ enforces stronger sparsity while exerting less bias on large nonzero entries, it leads to a combinatorial problem whose complexity grows rapidly in the present factorized and constrained setting. The $\ell_{1/2}$ penalty offers a favorable compromise: it preserves the sparsity-inducing behavior of nonconvex regularization, yet still admits an explicit half-thresholding characterization for the corresponding proximal subproblem. Thus, in our approach, we impose $\ell_{1/2}$ quasi-norm penalties on these matrices: 
\vspace{-0.3em}
\[
\lambda_{UU}\|UU^\top\|_{1/2}
+\lambda_{UV}\|UV^\top\|_{1/2}
+\lambda_{VV}\|VV^\top\|_{1/2},
\]
Denoting 
$M_{UU} := UU^{\top}, \  M_{UV} := UV^{\top}, \ M_{VV}:= VV^{\top}$, the optimization problem becomes:
    \begin{align}
    \min_{U, V, P} &
    - \sum_{i,j,m} \Big[
        y_{ij}^{(m)} \log p_{ij}
        + y_{ij}^{(m)} \log(\mathbf u_i^\top \mathbf v_j)
        - p_{ij} (\mathbf u_i^\top \mathbf v_j)
      \Big]  \nonumber\\
    & + \sum_{X \in \{UU, VV, UV\}} \lambda_{X}\lVert M_X \rVert_{1/2} \label{eq:overall_problem}\\
    \text{subject} & \text{ to }
    \mathbf  u_{i} {\ge} \mathbf  0, \  \mathbf  v_{j} {\ge} \mathbf  0, \ 0 {\le} p_{ij}  {\le} 1, \ \boldsymbol{\alpha}^\top Z^{(ij)} {= } p_{ij}, \ \forall i,j,m . \nonumber
\end{align}

which is a non-convex objective over $(U,V,P)$ with a block structure, motivating an alternating strategy. Specifically, we treat the detectability block and the factor/sparsity blocks separately: for fixed $(U,V)$, the $\boldsymbol{\alpha}$-subproblem is handled as in~\cite{fu2019link}, while for fixed $P$ we solve the $(U,V)$ block through a split augmented-Lagrangian scheme with auxiliary variables for the nonsmooth sparsity terms.

\section{Optimization Algorithm}

We exploit the structure in Problem~\eqref{eq:overall_problem}, and propose the ADMM-based Algorithm~\ref{alg:merged_soundscape_code}. We discuss each subproblem in the following subsections, and its convergence analysis is presented in Section~\ref{sec:converg:analysis}.

\begin{algorithm}[t]
\caption{ADMM for Sparse NMF}
\label{alg:merged_soundscape_code}

\KwIn{$Y \in \mathbb R_+^{I \times J \times M}$, 
 $Z\in\mathbb{R}^{(IJ)\times L}$,
 $F \in \mathbb N$, $\gamma > 1$,   $p_{0} \in (0,1)$, $\{\epsilon_X^k\}$ with $\epsilon_X^k >0$, $\lambda_{X},\rho_X > 0$ for $X \in \{UU, UV, VV\}$\;} 

\textbf{Initialize} $U^0 \in \mathbb R^{I \times F},V^0 \in \mathbb R^{J \times F}$ via Appendix~\ref{sec:init}, $\boldsymbol{\alpha}^0, A_{X}^0 , W_{X}^0$ for $X \in \{UU, UV, VV\}$\;

\For{$k = 1,2,\ldots$}{
    % \textbf{(A) Update $\boldsymbol{\alpha}, p$}\;
    \vspace{-0.2cm}
    \begin{align*}
    (\boldsymbol{\alpha}, \mathbf{p})
    \approx_{\xi_k}&
    \arg\min_{\boldsymbol{\alpha},\mathbf{p}} 
    \sum_{ijm}
    -\left[
    Y_{ij}^{(m)}\log p_{ij}
    - p_{ij}\mathbf{u}_{i}^\top\mathbf{v}_j
    \right]\\
    \text{subject to } & \quad 0 \le p_{ij} \le 1, \boldsymbol{\alpha}^\top Z^{(ij)}=p_{ij}.
    \end{align*}
  
    Replace missing entries of $Y$ with $\mathbf{p} \odot UV^\top$\;
    \vspace{-0.4cm}
    \begin{align*} 
     & U  \approx_{\varepsilon_k}   \arg\min_{U \ge 0}   f(U,V) + \sum_{X} \frac{\rho_{X}}{2}\|M_X - A_{X} + W_{X}\|^2.\\
 & V  \approx_{\varepsilon_k} 
    \arg\min_{V \ge 0}
    f(U,V)+ \sum_{X} \frac{\rho_{X}}{2}\|M_X {-} A_{X} + W_{X}\|^2.\\
 & A_X  \gets \arg\min_{A_X}
\lambda_X \|A_X\|_{\frac 1 2} {+}
\frac{\rho_X}{2}
\|A_X {-} (M_X{+}W_X)\|_F^2 . 
\end{align*}
\eIf{$\|M_X - A_X\|_F \le \epsilon_X^k$}{ 
        $W_X = W_X + M_X - A_X$\;
    }{
        $\rho_X = \gamma \rho_X$,
        $W_X = \frac{1}{\gamma}\left(W_X + M_X - A_X\right)$.  
    }
}

\Return{$U, V, \boldsymbol{\alpha}, \mathbf{p}$}\;
\end{algorithm}

\begin{remark}
    The details on the approximation for $\alpha$-subproblem will be referred to in Assumption~\ref{assum:alpha_inexact}. Similarly, the approximation for the $U/V$-subproblem will be referred to in Assumption~\ref{assum:err_con}.
\end{remark}

\subsection{\texorpdfstring{$\boldsymbol{\alpha}$}{alpha}-update}\label{sec:alpha-update}
Following~\cite{fu2019link}, the $\boldsymbol{\alpha}$-subproblem is solved by introducing an auxiliary variable $\tilde P \in \mathbb R^{I \times J}$ such that $\tilde P_{ij} = p_{ij} = \boldsymbol{\alpha}^\top Z^{(ij)}$ and $W(i,j)$ as the scaled dual variable associated with $\boldsymbol{\alpha}^\top Z^{(ij)} = p_{i,j}$. Denote $\mathbf p = \mathrm{vec}(\tilde P)$, $\boldsymbol{\omega} = \mathrm{vec}(W)$, and $Z$ as a matrix with all $(Z^{(ij)})^\top$'s as its rows. The subproblem can be written as 
\begin{equation}\label{eq:alpha_subprob}
    \begin{aligned}
    &\min_{\boldsymbol{\alpha}, \mathbf{p}}  \sum_{ijm} - \left[y_{ij}^{(m)}\log p_{ij} -  p_{ij}\lambda_{ij} \right] \\
    & \quad \text{subject to}  \quad 0 \leq p_{ij} \leq 1, \boldsymbol{\alpha}^\top Z^{(ij)} =  p_{ij}.
    \end{aligned}
    \end{equation}
Denote $Y_{\mathrm{sum}}(i,j) := \sum_{m} y_{ij}^{(m)}$, the subproblem objective becomes $\phi(\mathbf p;\lambda^k) := \sum_{ij}(
- Y_{\mathrm{sum}}(i,j)\log  p_{ij} + M\,\lambda^k_{ij}\, p_{ij})$ and the augmented Lagrangian is 
\[
    \mathcal{L}(\boldsymbol{\alpha}, \mathbf{p}, \boldsymbol{\omega}) = \phi(\mathbf p;\lambda^k) + \frac{\rho}{2}\lVert \mathbf p - Z\boldsymbol{\alpha} + \boldsymbol{\omega} \rVert^2.
\]
where $\rho$ is the penalty parameter. Define $\bar{\mathbf p} := Z \boldsymbol{\alpha} - \boldsymbol{\omega}$ and $\mathbf{y} := \mathrm{vec}(Y_{\mathrm{sum}})$, the closed-form update for $\mathbf{p}$, $\boldsymbol{\alpha}$, and  $\boldsymbol{\omega}$ is obtained with~\cite{fu2019link}:
\begin{align*}
    \mathbf{p} & \leftarrow \left[ \Big( {\left(\rho \bar{\mathbf p} - M \lambda\right) + \sqrt{4\rho \mathbf{y} +\left(\rho \bar{\mathbf p} - M \lambda\right)^2}}\Big)/{(2\rho)} \right]_{[0,1]},\\
    \boldsymbol{\alpha} & \leftarrow  Z^\dagger(\mathbf{p} + \boldsymbol{\omega}), \ \ \  \ \ 
    \boldsymbol{\omega}  \leftarrow \boldsymbol{\omega} + \mathbf{p} - Z\boldsymbol{\alpha},
\end{align*}

where $Z^\dagger$ is the pseudo-inverse and the update is repeated until convergence by~\cite{fu2019link}. The optimality of $\boldsymbol\alpha$-update is guaranteed since the $\boldsymbol{\alpha}$-subproblem is convex~\cite{boyd2011distributed, fu2019link}.

\subsection{\texorpdfstring{$U$, $V$}{U, V}-update}
\label{sec:uv_update}
Fix $\boldsymbol{\alpha}$ and define $f(U,V) = \sum_{ijm}\!\big[p_{ij}\mathbf u_i^\top \mathbf v_j
  - y_{ij}^{(m)}\log(\mathbf u_i^\top \mathbf v_j)\big]$.
The subproblem in $(U,V)$ becomes
\[
\min_{U,V \geq \mathbf{0}} f(U, V) + \sum_{X \in \{UU, UV, VV\}} \lambda_X \lVert M_X\rVert_{1/2},
\]
where $f$ is smooth but nonconvex in $(U,V)$ on the domain where $\mathbf u_i^\top \mathbf v_j > 0$ and $\lVert A_X \rVert_{1/2}$'s are nonconvex and nonsmooth. To handle these terms, we introduce auxiliary variables
$(A_{UU}, A_{VV}, A_{UV})$ and enforce the equalities through constraints
$A_{UU} = UU^\top, 
A_{VV} = VV^\top, 
A_{UV} = UV^\top.$
 
The $UV$-subproblem becomes the constrained problem
\begin{align}
{\min_{U,V{\ge} \mathbf{0}, A_X}}  f(U,V)
{+} \sum_X \lambda_{X}\|A_{X}\|_{\frac{1}{2}}  \text{ s.t. } A_{X} {=} M_X.
\label{eq:uv-constrained}
\end{align}
Denote $H_{UU},H_{VV},H_{UV}$ as the unscaled dual matrices associated with the three constraints and $\rho_{UU},\rho_{VV},\rho_{UV}>0$ as the penalty parameters. 
The augmented Lagrangian of ~\ref{eq:uv-constrained} is
 \begin{multline*}
  \mathcal L_\rho (U, V, A, H)
  = f(U,V)
    + \sum_X \big(\lambda_{X}\|A_{X}\|_{1/2} +  \\\langle H_{X},M_X-A_{X}\rangle 
    +\frac{\rho_{X}}{2}\|M_X-A_{X}\|^2\big)
  \end{multline*}
  where $A = (A_{UU}, A_{VV}, A_{UV})$ and $H = (H_{UU}, H_{VV}, H_{UV})$.
Without loss of generality, for the stability of step size,we work with the scaled dual variables $W_{X} = ({1}/{\rho_{X}}) H_{X},
$
where the linear dual terms are absorbed into the quadratic terms via the scaled hyperparameters $\rho_{X}$ for $X \in \{UU, VV, UV\}$. The resulting scaled augmented Lagrangian is
\begin{multline*}
    \mathcal{L}_\rho (U, V, A, H) =  f(U,V) + \sum_X \big(\lambda_{X}\|A_{X}\|_{1/2}  \\+\frac{\rho_{X}}{2} \left[\lVert W_{X} + A_{X} - M_X \rVert_2^2 - \lVert W_{X}\rVert_2^2  \right] \big)
\end{multline*}
 At iterate $k$, denote $F(U^k, V^k) {:=} f(U^k, V^k) + \sum_{X}\tfrac{\rho_{X}^k}{2}\|M_X^k-B_{X}^k\|^2$ with $B_X^k :=A_X^k-W_X^k = A_X^k - H_X^k/\rho_X^k$
  \begin{enumerate}[label=(\alph*), leftmargin=2.0em,left=-3pt,itemsep=1pt, parsep=-1pt, topsep=-0pt, partopsep=-1pt]
  \item \textbf{U-update:} Update $U$ by minimizing the smooth objective plus the quadratic penalties:
   \vspace{-0.5em}
    \begin{multline*}
\hspace{-2ex}U^{k+1}=\arg\min_{U\ge \mathbf 0}
    f(U,V^k)+ \sum_{X}
    \tfrac{\rho_{X}^k}{2}\|M_X-B_{X}^k\|^2,
    \end{multline*}
    \vspace{-0.5em}
    
    where the partial gradient is 
    $\nabla_U F(U, V^k)  =   M (\tilde P^{k+1})V^k-(Y/UV^{k\top})V^k  + 2\rho_{UU}^k(UU^\top-B_{UU}^k)U
    + \rho_{UV}^k(UV^{k\top}-B_{UV}^k)V^k$. Using a projected gradient step where step size is determined by backtracking line-search:
    \[
    U^{k}_{s+1}=\max(U^{k}_s-t_U^{s,k}\nabla_U F(U^k_{s}, V^k),0),
    \]
    where at time $s = 0$, $U^k_0 = U^k$ and $U^k_{\infty} = U^{k+1}$.
  \item \textbf{V-update:} Similarly, $V$-subproblem is
\begin{equation*}
\hspace{-2ex}V^{k+1}=
\arg\min_{V\ge \mathbf 0}
f(U^{k+1},V)
+\sum_{X}
    \tfrac{\rho_{X}^k}{2}\|M_X-B_{X}^k\|^2,
\end{equation*}
with partial gradient
$\nabla_V F(U^{k+1}, V) =M (\tilde P^{k+1})^{\top} U^{k+1}-(Y/U^{k+1}V^\top)^\top U^{k+1}
+2\rho_{VV}^k(VV^\top-B_{VV}^k)V
+\rho_{UV}^k(V(U^{k+1})^\top-B_{UV}^{k\top})U^{k+1}$. Solving with projected gradient descent with step size updated via backtrack line search,
 \vspace{-0.5em}
\[
V^k_{s+1}=\max(V^k_s-t_V^{s,k}\nabla_V F(U^{k+1}, V^k_s),0),
\]
where at time $s = 0$, $V^k_0 = V^k$ and $V^k_{\infty} = V^{k+1}$.

\item \textbf{A-subproblems:} The $A_X$-update solves
\begin{multline*}
\hspace{-2.5ex}A_X^{k+1} {=} \arg\min_{A_X}
\lambda_X \|A_X\|_{\frac 1 2}{ +}
({\rho_X^k}/{2})
\|A_X {-} (M_X^{k+1}{+}W_X^k)\|_F^2 .
\end{multline*}
Since the objective is separable across entries, each element solves
$a^{k+1} = \arg\min_a \lambda_X \|a\|_{1/2} +
({\rho_X^k}/{2})(a-b)^2 $ where
$b=(M_X^{k+1}+W_X^k)_{ij}$. The solution is given by the half-thresholding operator~\cite{xu2012lhalf} with 
\begin{equation*}
a^{k+1} =
\begin{cases}
0, & |b| \le \tau_X^k, \\
\mathcal{H}_{\lambda_X/\rho_X^k}(b), & |b| > \tau_X^k,
\end{cases}
\end{equation*}
where $\tau_X^k = ({3}/{2})
\left({\lambda_X}/{\rho_X^k}\right)^{2/3}$ and $\mathcal{H}_{\lambda/\rho}(\cdot)$ denotes the half-thresholding map. $\tau_X^k$ is determined by solving the entrywise $A$-subproblem so that the half-thresholding method is an exact global minimizer for the $A$-subproblem, which will be shown in Lemma~\ref{lem:exact_a}.

  \item \textbf{Scaled dual updates:}  We adopt the scaled form dual variables, $W_X^k := {H_X^k}/{\rho_X^k}$, for numerical stability and implementation convenience
\cite{boyd2011distributed, wright2022high}. When the penalty parameter $\rho_X^k$ varies across iterations, for the unscaled dual update where $H_X^{k+1} = \rho_X^k \left(W_{X}^k+M_X^{k+1}-A_{X}^{k+1}\right)$, the scaled dual variables must be adjusted accordingly to align with the updated $\rho_X^{k+1}$, leading to
 \begin{equation*}
  W_{X}^{k+1}=({\rho_X^{k}}/{\rho_X^{k+1}})\left(W_{X}^k+M_X^{k+1}-A_{X}^{k+1}\right).
\end{equation*}
\end{enumerate}
\subsection{KKT Conditions of the Split Formulation}
Following the general ADMM framework~\cite{wright2022high}, we define entrywise indicator $\iota_{\geq 0}(x) = 0$ if $x\geq 0$ and $+\infty$ otherwise, and similarly for $\iota_{[0,1]^{IJ}}$.
For $X \in \{UU, UV, VV\}$ the first-order optimality condition for~\ref{eq:overall_problem} can be written as:
\begin{subnumcases}{}
    \mathbf{0} \in \nabla_U F(U,V) + \partial \iota_{\geq 0}(U), \label{eq:U_station} \\
    \mathbf{0} \in \nabla_V F(U,V) + \partial \iota_{\geq 0}(V), \label{eq:V_station} \\
    \mathbf{0} \in \partial \psi(A_X) - H_X, \label{eq:Y_station} \\
    \mathbf 0 = M_X - A_X. \label{eq:primalease}\\
    \mathbf{0} = - \rho Z^\top \boldsymbol{\omega}, \label{eq:alpha_station1} \\
    \mathbf{0} \in 
        \nabla_{\mathbf p} \phi(\mathbf p;\lambda)
        + \rho \boldsymbol{\omega}
        + \partial \iota_{[0,1]^{IJ}}(\mathbf p),
        \label{eq:alpha_station2} \\
    \mathbf{0} = \mathbf p - Z\boldsymbol{\alpha}, \label{eq:alpha_station3}
\end{subnumcases}
where $\psi(A_X)=\lambda_X  \|A_X\|_{1/2}$.
\vspace{-2ex}

\subsection{Increasing Penalization} \label{par:inc_pen} To guarantee asymptotic feasibility, we adopt an adaptive penalization strategy~\cite{yuanadmm, shi2020penalty, hallak2023adaptive}. Given a positive vanishing residual tolerance $\{\epsilon_X^k\}$ with $\sum_{k=1}^\infty (\epsilon_X^k)^2 < \infty$ and an update factor $\gamma>1$, we set
\begin{align*}
    \rho_X^{k+1}=
    \begin{cases}
\gamma \rho_X^k, & \text{if } \| r_X^{k+1}\|_F > \epsilon_X^k,\\
\rho_X^k, & \text{otherwise}.
\end{cases}
\end{align*}
where $r_X^{k+1} := M_X^{k+1} - A_X^{k+1}$ denotes the primal feasibility residual. The penalty is increased only when the current feasibility violation exceeds the prescribed tolerance; otherwise, it remains unchanged. 
To ensure the penalty updates occur only finitely many times, we additionally require that $\{\epsilon_X^k\}$ decays slower than the residual bound, which will be mentioned in the convergence analysis in Lemma~\ref{lem:bound_y}.
\begin{remark}
For datasets with structurally missing entries (rather than observed zeros), we
use an EM-style handling of those entries: at iteration $k$, for each replicate, each missing entry is imputed by its current conditional expectation $\widehat{Y}_{ij}^{(m)}= p_{ij}(\mathbf u_i^\top \mathbf v_j)$. 
\end{remark}

\section{Convergence Analysis}\label{sec:converg:analysis}

\begin{assumption}
\label{assum:bounded-iterates}
We assume that $\{(U^k,V^k, A^k)\}$ is bounded where $A^k := (A_{UU}^k, A_{UV}^k, A_{VV}^k)$.
\end{assumption}

\begin{remark}
    Assumption~\ref{assum:bounded-iterates} is a typical presumption in the analysis of nonconvex optimization methods~\cite{el2025convergence, huang2016flexible, hallak2023adaptive} where primal variables are assumed to be bounded.
\end{remark} 

\begin{assumption}
\label{assum:err_con} 
The $U$ and $V$-subproblems are solved inexactly with vanishing first-order residuals, i.e., 
\[\mathrm{dist}{(}\mathbf 0,\nabla_U F_k(U^{k+1},V^k)+\mathcal N_{\mathbb R_{+}}(U^{k+1}){)}\le \varepsilon_k,\]
where $\varepsilon_k \to 0$ as $k \to \infty$, $\mathcal N_{\mathbb R_+}(U^{k+1})$ is the normal cone of $U^{k+1}$ at $\mathbb R_+$ and similarly for $V$. 
\end{assumption}
\begin{remark} \label{remark:inner_control} 
Assumption~\ref{assum:err_con} means that there exists an error term $e_U^k$ with $\|e_U^k\|_F\le \varepsilon_k$ such that $\mathbf 0 \in \nabla_U F_k(U^{k+1},V^k)+e_U^k+\mathcal N_{\mathbb R_+}(U^{k+1})$, and analogously for the $V$-block. Inexact updates with vanishing tolerances are a common technique in ADMM-based methods for handling nonconvex and nonsmooth optimization problems~\cite{fernandez2012local, el2025convergence, yang2019inexact}. 
The inner projected gradient updates for the $U-$block are implemented
with standard backtracking line search initialized from a fixed maximal step size $t_U^{\max} > 0$ and terminated when the successive iterate change is sufficiently small or when the number of inner iterations reaches a preset cap $S_{\max}$, i.e., $U^{k+1} = U^k_{S_{\max}}$, and similarly for $V$. 
\end{remark}

\begin{assumption}
\label{assum:block_lipschitz}
Let $\delta > 0$. Define $\mathcal D_{\delta} := \{(U,V) \in \mathcal D: \mathbf u_i^\top \mathbf v_{j} \geq \delta, \forall (i,j) \text{ with } Y_{\mathrm{sum}}(i,j) > 0\}$
where $\mathcal{D} \subset \mathbb{R}^{I \times F} \times \mathbb{R}^{J \times F}$ is the bounded iterate set for $\{(U^k, V^k)\}$.
Then:
\begin{itemize}[leftmargin=2.0em,left=0pt,itemsep=1pt, parsep=-1pt, topsep=-0pt, partopsep=-1pt]
    \item For all $(U, V) \in \mathcal D_\delta$, the partial gradient $\nabla_U f(U, V)$, $\nabla_V f(U,V)$ exists and are block-Lipschitz continuous. That is, there exists some constants $L_U$ and $L_V$ such that $\forall (U_1, V)$, $(U_2, V) \in \mathcal D_\delta$, $\lVert \nabla f_U(U_1, V) - \nabla f_U(U_2, V)\rVert_F\leq L_U \lVert U_1 - U_2\rVert_F$ and similarly for $V$.
    \item Across iterates, the smooth part of the augmented Lagrangian has block-Lipschitz partial gradient on $\mathcal D_\delta$ with constants $L_U^k, L_V^k {<} {\infty}$ since by Assumption~\ref{assum:bounded-iterates}, $\mathcal D_\delta$ is bounded. 
\end{itemize}
\end{assumption}

\begin{remark}
    In practice, the constraint $(U^k, V^k) \in \mathcal D_{\delta}$ can be ensured by standard numerical safeguards that enforce a small positive lower bound on the inner products ${\mathbf u_i^{k}}^\top\mathbf v_j^k$ for all $(i,j)$ with $Y_{\mathrm{sum}}(i,j) > 0$. Thus, the objective and its gradients are well-defined and Lipschitz on the iterate sequence. The inner projected gradient descent iterates in the $U/V-$updates remain in $\mathcal D_\delta$, so the block Lipschitz constants $L_U^k, L_V^k$ apply to all gradients evaluated during the inner loops. 
\end{remark} 

\begin{assumption}\label{assum:alpha_inexact}
Define $\mathcal F {:=} \{(\boldsymbol{\alpha}, \mathbf p): \mathbf p = Z\boldsymbol{\alpha}, \mathbf p {\in} [0,1]^{IJ}\}$.
We assume that there exists
$(\boldsymbol{\alpha}, \mathbf p) {\in} \mathcal F$
such that $\mathbf p_r {>} 0$ for all $r$ corresponding to $(i,j)$ with
$Y_{\mathrm{sum}}(i,j) {>} 0$.
We further assume that the $\alpha$-update produces a feasible pair
$(\boldsymbol{\alpha}^{k+1}, \mathbf p^{k+1}) {\in} \mathcal F$
satisfying
$\phi(\mathbf p^{k+1}; \lambda^k) \le
\inf_{(\boldsymbol{\alpha},\mathbf p)\in\mathcal F}
\phi(\mathbf p;\lambda^k) + \xi_k$,
where $\xi_k \ge 0$ and $\xi_k \to 0$ as $k \to\infty$.
\end{assumption}
\begin{remark}
    In~\cite{fu2019link}, the convex $\alpha$-subproblem is assumed to be solved exactly. Here we allow a more general inexact setting.
\end{remark}

\begin{lemma} \label{lem:exact_a}
For each $X\in\{UU,UV,VV\}$ and iteration $k$, the $A$-update given by the entrywise half-thresholding operator with threshold
$\tau_X^k=({3}/{2})({\lambda_X}/{\rho_X^k})^{2/3}$
is a global minimizer of the $A$-subproblem.
\end{lemma}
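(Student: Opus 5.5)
The plan is to reduce the matrix problem to a scalar one, solve the scalar problem in closed form, and then check that the stated threshold matches the known half-thresholding result of~\cite{xu2012lhalf} under our scaling.

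\textbf{Step 1: separability.} Since $\|A_X\|_{1/2}=\sum_{ij}|(A_X)_{ij}|^{1/2}$ and $\|\cdot\|_F^2$ are both sums over entries, the $A$-subproblem splits into independent scalar problems. Each one reads
\[
g(a):=\mu|a|^{1/2}+\tfrac12(a-b)^2,\qquad \mu:=\lambda_X/\rho_X^k,\quad b=(M_X^{k+1}+W_X^k)_{ij},
\]
after dividing by $\rho_X^k>0$. The minimizer set does not change under this division. A global minimizer of the matrix problem is therefore exactly an entrywise choice of global minimizers of $g$, so it suffices to treat one scalar problem.

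\textbf{Step 2: sign reduction.} The function $g$ is continuous and coercive, so a global minimizer exists. Replacing $a$ by $\mathrm{sign}(b)|a|$ does not increase $g$, so some minimizer shares the sign of $b$. We may therefore take $b\ge 0$ and $a\ge 0$; the case $b<0$ follows from the odd symmetry of the map.

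\textbf{Step 3: reduce to a quadratic in $t$.} On $a>0$, substitute $a=t^2$ with $t>0$. A minimizer with $a>0$ then satisfies the stationarity equation
\[
t^2-b+\frac{\mu}{2t}=0,
\]
which is a depressed cubic in $t$. Rather than use the cubic formula, we compare $g(a)$ with $g(0)=b^2/2$ directly:
\[
g(a)-g(0)=a\Big(\tfrac{a}{2}-b+\mu a^{-1/2}\Big).
\]
So a nonzero $a$ beats $0$ exactly when $b>\tfrac a2+\mu a^{-1/2}$ for some $a>0$. Call $h(a):=\tfrac a2+\mu a^{-1/2}$.
\begin{itemize}
\item Setting $h'(a)=0$ gives $a^{3/2}=\mu$, i.e. $a^\ast=\mu^{2/3}$.
\item At this point $h(a^\ast)=\tfrac12\mu^{2/3}+\mu^{2/3}=\tfrac32\mu^{2/3}=\tau_X^k$.
\end{itemize}
Hence $0$ is a global minimizer iff $|b|\le\tau_X^k$. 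The inequality is strict on both sides except at $|b|=\tau_X^k$, where $0$ and $a^\ast$ tie and both are global minimizers. This matches the zero branch of the operator.

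\textbf{Step 4: the nonzero branch (main obstacle).} For $|b|>\tau_X^k$, we must show that $\mathcal H_{\mu}(b)$ is the correct positive critical point. I would first show that on $a>0$ the function $g$ has at most two critical points, a local max and a local min. This follows from $g''(a)=1-\tfrac{\mu}{4}a^{-3/2}$, which changes sign once. The global minimizer is then the larger critical point, equivalently the largest root of the cubic. Then I would invoke the trigonometric root formula of~\cite{xu2012lhalf}. Matching scalings is routine: our problem is $\tfrac12\bigl[(a-b)^2+2\mu|a|^{1/2}\bigr]$, i.e. their problem with regularization weight $2\mu$. Their threshold $\tfrac{\sqrt[3]{54}}{4}(2\mu)^{2/3}$ simplifies to $\tfrac32\mu^{2/3}$, consistent with Step 3. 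Finally, $\mathcal H_\mu(b)$ as defined there is the largest root, which completes the argument.
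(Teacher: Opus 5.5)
Your argument is correct, but it reaches the threshold by a different route than the paper.

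The paper evaluates $g$ only at a nonzero stationary point. Using stationarity to eliminate $b$, it obtains $g(a^\star)-g(0)=\tfrac{\sqrt{a^\star}}{2}\bigl(\lambda-\rho (a^\star)^{3/2}\bigr)$. Hence a nonzero global minimizer forces $a^\star>(\lambda/\rho)^{2/3}$, and therefore $|b|>\tau_X^k$. The zero branch, and the identification of the nonzero minimizer with the half-thresholding value, are then taken directly from \cite{xu2012lhalf}.

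You instead compare every $a>0$ with $0$ through $g(a)-g(0)=a\bigl(h(a)-b\bigr)$ and minimize $h$. This derives $\tau_X^k$, both directions of the zero/nonzero dichotomy, and the tie at $|b|=\tau_X^k$ from scratch. The single sign change of $g''$ then identifies the global minimizer as the larger critical point. Your rescaling to weight $2\mu$ also explicitly reconciles $\tfrac32(\lambda_X/\rho_X^k)^{2/3}$ with the $\tfrac{\sqrt[3]{54}}{4}\lambda^{2/3}$ normalization of \cite{xu2012lhalf}, which the paper leaves implicit. Your version therefore borrows only the trigonometric formula for the largest root from the citation.

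What the paper's stationary-point identity buys is the lower bound $a^\star>(\lambda/\rho)^{2/3}$ on surviving entries. Lemma~\ref{lem:bound_y} uses this bound to get $|b-a^\star|\le\tfrac12(\lambda/\rho)^{2/3}$. In your framework it follows in one line from $h(a^\star)<b$ together with $b=a^\star+\mu/(2\sqrt{a^\star})$. It is worth recording if your version of the lemma is to feed the residual estimate.
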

\begin{proof}
For notational simplicity, write $q_X^{k+1}:=M_X^{k+1}+W_X^k$.  Since the $A_X$-update is separable across entries, it suffices to study the scalar problem
\begin{align*}
    \min_{a\in\mathbb R}\;
g(a):=\lambda \|a\|_{1/2}+({\rho}/{2})(a-b)^2,
\end{align*}
where $\lambda=\lambda_X$, $\rho=\rho_X^k$, and $b$ is one entry of $q_X^{k+1}$.
Let $a^\star$ be the solution. If $a^\star=0$, then by the half-thresholding rule for the $\ell_{1/2}$ proximal map, $|b|\le \tau := ({3}/{2})({\lambda}/{\rho})^{2/3}$.

Now suppose $a^\star\neq 0$. Since $a^\star$ is a nonzero minimizer, it is a stationary point of $g$,
so
$0=({\lambda}/{2}){\operatorname{sign}(a^\star)}/{\sqrt{|a^\star|}}+\rho(a^\star-b)$.
By symmetry, it is enough to treat the case $a^\star>0$. Then $
b=a^\star+{\lambda}/{(2\rho\sqrt{a^\star})}$. In this case, to determine the global minimizer, we need to compare the $g(a^\star) = \lambda \sqrt{a^\star} + ({\rho}/{2})(a-b)^2$ and $g(0) = ({\rho}/{2})b^2$. At stationary, $g(a^\star ) - g(0) = ({\sqrt{a^\star}}/{2})(\lambda - \rho a^{3/2})$. For a positive stationary point to be  the global minimizer, we need $g(a^\star) < g(0) $ where
\begin{align*}
&g(a^\star)-g(0)
=
\lambda \sqrt{a^\star}
+\frac{\rho}{2}(a^\star-b)^2
-\frac{\rho}{2}b^2 \\
&{=}
\lambda \sqrt{a^\star}
{+}\frac{\lambda^2}{8\rho a^\star}
{-}\frac{\rho}{2}\left(a^\star+\frac{\lambda}{2\rho\sqrt{a^\star}}\right)^2 
{=}
\frac{\sqrt{a^\star}}{2}\Bigl(\lambda-\rho (a^\star)^{3/2}\Bigr).
\end{align*}
Thus $g(a^\star) < g(0)$ implies $a^\star > (\lambda/\rho)^{2/3}$ and at this time $b > \tau := ({3}/{2})(\lambda/\rho)^{2/3}$. Combining the above with the half-thresholding characterization~\cite{xu2012lhalf}, the global minimizer is obtained by the entrywise half-thresholding operator.
\end{proof}

\begin{lemma}
\label{lem:bound_y}
Fix $X\in\{UU,UV,VV\}$ and let $n_X$ be the number of entries of $A_X$. Define $C_X:=\frac{3}{2}\sqrt{n_X}\,\lambda_X^{2/3}$.  Assume $\sum_{k=1}^\infty \epsilon_X^k<\infty$ and the threshold sequence decays slower than $(\rho_X^k)^{-2/3}$ in the sense that whenever $\rho_X^k\to\infty$ along a subsequence, one has $\epsilon_X^k \ge 2C_X(\rho_X^k)^{-2/3}$ for all sufficiently large $k$ along that subsequence. Then there exists $K_X\in\mathbb N$ and $\bar\rho_X>0$ such that $\rho_X^k=\bar\rho_X$, $\|r_X^{k+1}\|_F\le \epsilon_X^k$, $\forall k\ge K_X$. Moreover, the dual sequence $\{H_X^k\}$ is bounded. 
\end{lemma}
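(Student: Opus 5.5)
The proof rests on one structural fact: the residual left by the half-thresholding step is controlled by the penalty alone, uniformly in the iterates. By Lemma~\ref{lem:exact_a}, each entry of $A_X^{k+1}$ is the half-thresholding of $b=(M_X^{k+1}+W_X^k)_{ij}$. Either $a=0$ with $|b|\le\tau_X^k$, or $a\neq0$ satisfies the stationarity relation $b-a=\lambda_X\operatorname{sign}(a)/(2\rho_X^k\sqrt{|a|})$ with $|a|>(\lambda_X/\rho_X^k)^{2/3}$. In both cases
\[
|b-a|\le \tfrac32(\lambda_X/\rho_X^k)^{2/3},
\]
since in the second case $|b-a|<\tfrac12(\lambda_X/\rho_X^k)^{2/3}$. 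Summing over the $n_X$ entries gives
\[
\|A_X^{k+1}-(M_X^{k+1}+W_X^k)\|_F\le C_X(\rho_X^k)^{-2/3}.
\]
This bound is uniform and independent of the iterates.

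\textbf{Step 1 (finitely many penalty increases).} Suppose by contradiction that $\rho_X^k$ is increased infinitely often, so $\rho_X^k\to\infty$ monotonically. I will relate $r_X^{k+1}$ to the scaled dual variable. The identity $r_X^{k+1}=(\rho_X^{k+1}/\rho_X^k)W_X^{k+1}-W_X^k$ gives
\[
\|r_X^{k+1}\|_F\le \|W_X^k+r_X^{k+1}\|_F+\|W_X^k\|_F,
\]
and the bound above gives $\|W_X^k+r_X^{k+1}\|_F\le C_X(\rho_X^k)^{-2/3}$. I therefore need to control $\|W_X^k\|_F$. At each step, $W_X^{k+1}$ equals either $(W_X^k+r_X^{k+1})$ or $(W_X^k+r_X^{k+1})/\gamma$, so always
\[
\|W_X^{k+1}\|_F\le C_X(\rho_X^k)^{-2/3}.
\]
Hence $\|r_X^{k+1}\|_F\le C_X(\rho_X^k)^{-2/3}+C_X(\rho_X^{k-1})^{-2/3}\le 2C_X(\rho_X^{k-1})^{-2/3}$.

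The hypothesis $\epsilon_X^k\ge 2C_X(\rho_X^k)^{-2/3}$ then contradicts the increase test for large $k$. I expect the main obstacle to be an index mismatch: the bound involves $\rho_X^{k-1}$, not $\rho_X^k$. I would handle it by working at the increase times. Immediately after an increase, $W_X$ was divided by $\gamma$, so $\|W_X^k\|_F\le C_X(\rho_X^{k-1})^{-2/3}/\gamma = C_X\gamma^{-1/3}(\rho_X^k)^{-2/3}$. The residual bound then becomes $(1+\gamma^{-1/3})C_X(\rho_X^k)^{-2/3}< 2C_X(\rho_X^k)^{-2/3}$. At non-increase steps $\rho$ is unchanged anyway. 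Either way, for large $k$ along the divergent sequence the residual sits below $\epsilon_X^k$, so no increase can occur, which is a contradiction.

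\textbf{Step 2 (conclusion).} Let $\bar\rho_X$ be the final penalty value and $K_X$ the index after the last increase. For $k\ge K_X$, no increase occurs, which by the update rule means exactly $\|r_X^{k+1}\|_F\le\epsilon_X^k$. For the dual bound, $H_X^k=\rho_X^kW_X^k$ and $\|W_X^k\|_F\le C_X(\rho_X^{k-1})^{-2/3}$. Since $\rho_X^k\le\gamma\rho_X^{k-1}$ and $\rho_X^k\ge\rho_X^0>0$, this yields
\[
\|H_X^k\|_F\le \gamma C_X(\rho_X^{k-1})^{1/3}\le \gamma C_X\bar\rho_X^{1/3},
\]
which is uniformly bounded. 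Note that boundedness of $H_X^k$ also follows from the explicit inequality $|b-a|\rho\le\tfrac32\lambda_X^{2/3}\rho^{1/3}$, which is the subgradient bound coming from \eqref{eq:Y_station}.
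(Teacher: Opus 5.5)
Your proof is correct and follows the same route as the paper: the entrywise half-thresholding bound $\|W_X^k+r_X^{k+1}\|_F\le C_X(\rho_X^k)^{-2/3}$, the induced bound on $\|W_X^{k+1}\|_F$, the triangle inequality giving $\|r_X^{k+1}\|_F\le 2C_X(\rho_X^k)^{-2/3}$, a contradiction if the penalty increases infinitely often, and boundedness of $H_X^k=\rho_X^kW_X^k$. Your case split with the factor $\gamma^{-1/3}$ supplies a step the paper skips: the paper moves from $\|W_X^{k}\|_F\le C_X(\rho_X^{k-1})^{-2/3}$ to $\|W_X^k\|_F\le C_X(\rho_X^k)^{-2/3}$ without invoking the factor $\rho_X^{k-1}/\rho_X^k$ that justifies it.
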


\begin{proof}
By Lemma~\ref{lem:exact_a}, for entrywise $A_X$-update, we have for $|b| \leq \tau$, $a^\star = 0$ and $|b-a^\star| \leq \tau = ({3}/{2})(\lambda/\rho)^{2/3}$. For $|b| > \tau$, $a^\star > (\lambda/\rho)^{2/3}$ and
 \vspace{-0.5em}
\[
|b-a^\star|
=
\frac{\lambda}{2\rho\sqrt{a^\star}}
\le
\frac{\lambda}{2\rho(\lambda/\rho)^{1/3}}
=
\frac12\Bigl(\frac{\lambda}{\rho}\Bigr)^{2/3}.
\]

Therefore, for every entry of $A_X^{k+1}$, for all $a^\star$'s, $|b-a^\star| \leq \frac{3}{2}(\lambda/\rho)^{2/3}$ and 
 \vspace{-0.5em}
\[
|(q_X^{k+1})_{ij}-(A_X^{k+1})_{ij}|
\le ({3}/{2})\big({\lambda_X}/{\rho_X^k}\big)^{2/3}.
\]
 \vspace{-1.25em}
 
Summing over all $n_X$ entries gives
\[
\|q_X^{k+1}-A_X^{k+1}\|_F
\le
({3}/{2})\sqrt{n_X}\big({\lambda_X}/{\rho_X^k}\big)^{2/3}
=
C_X(\rho_X^k)^{-2/3}.
\]
 \vspace{-0.75em}

where $C_X = \frac{3}{2}\sqrt{n_X}\lambda_X^{2/3}$. Since
\[
q_X^{k+1}-A_X^{k+1}
=
(M_X^{k+1}+W_X^k)-A_X^{k+1}
=
W_X^k+r_X^{k+1},
\]
we obtain $
\|W_X^k+r_X^{k+1}\|_F\le C_X(\rho_X^k)^{-2/3}$.

Next, for the scaled-dual update, by the Algorithm~\ref{alg:merged_soundscape_code},
\[
W_X^{k+1}
=
({\rho_X^k}/{\rho_X^{k+1}})(W_X^k+r_X^{k+1}),
\]
where $\rho_X^{k+1}=\rho_X^k$ or $\rho_X^{k+1}=\gamma \rho_X^k$ with $\rho_X^{k}/\rho_X^{k+1} \leq 1$.
Hence
\begin{align*}
    \|W_X^{k+1}\|_F
{\le}
\|W_X^k{+}r_X^{k+1}\|_F {\le}
C_X(\rho_X^k)^{-2/3}.
\end{align*}
For $k \geq 1$,
\begin{equation}\label{eq:bound_y_r_up}
    \|r_X^{k+1}\|_F
\le
\|W_X^k+r_X^{k+1}\|_F+\|W_X^k\|_F
\le
2C_X(\rho_X^k)^{-2/3}.
\end{equation}

We now show that $\rho_X^k$ can be increased only finitely many times.
Suppose, for contradiction, that the penalty-update test $\|r_X^{k+1}\|_F>\epsilon_X^k$ is triggered infinitely often. Then $\rho_X^k\to\infty$ along those trigger times. By the assumed slower-decay condition on $\epsilon_X^k$, for all sufficiently large trigger times, $\epsilon_X^k \ge 2C_X(\rho_X^k)^{-2/3}$. Together with (\ref{eq:bound_y_r_up}) this gives
\[
\|r_X^{k+1}\|_F\le 2C_X(\rho_X^k)^{-2/3}\le \epsilon_X^k,
\]
contradicting the trigger condition. Therefore, only finitely many penalty increases occur. Hence there exists $K_X$ and $\bar\rho_X>0$ such that $\forall k\ge K_X$, $\rho_X^k=\bar\rho_X$. By the update rule, once the penalty stops increasing, $\|r_X^{k+1}\|_F\le \epsilon_X^k$ must hold for every $k\ge K_X$. This proves the first claim. Finally, since $H_X^{k+1} = \rho_X^{k+1}W_X^{k+1}$, $\|H_X^{k+1}\|_F = \| \rho_X^{k+1}W_X^{k+1}\|_F\leq \rho_X^{k+1}C_X(\rho_X^k)^{-2/3}$. By finite penalty updates, $\forall k \geq K_X$, $\|H_X^{k+1}\|_F  \leq C_X \bar \rho_X^{1/3}$. Hence $\{H_X^k\}$ is bounded..
\end{proof}

\begin{remark}
   One choice for $\{\epsilon_X^k\}$ is $\epsilon_X^k \propto (\rho_X^k)^{-\beta}$ with $\beta \in (\frac 1 2, \frac 23)$. This is slower than the residual decay $O((\rho_X^k)^{-2/3})$, and the penalty updates can occur only finitely many times.
\end{remark}

\begin{lemma}
\label{lem:bound_L}
Along the iterates, the augmented Lagrangian $\mathcal L_{\rho^k}(U^k,V^k,A^k,H^k)$ are bounded from below.
\end{lemma}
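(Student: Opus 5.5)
We bound each term of the augmented Lagrangian from below separately along the iterates, using the boundedness supplied by Assumption~\ref{assum:bounded-iterates} and Lemma~\ref{lem:bound_y}. We write it in unscaled form,
\[
\mathcal L_{\rho^k}=f(U^k,V^k)+\sum_X\Big(\lambda_X\|A_X^k\|_{1/2}+\langle H_X^k,M_X^k-A_X^k\rangle+\tfrac{\rho_X^k}{2}\|M_X^k-A_X^k\|_F^2\Big).
\]
The regularizer terms $\lambda_X\|A_X^k\|_{1/2}$ and the quadratic penalties are nonnegative, so they can be dropped. The remaining work is to control the smooth term $f$ and the linear dual term.

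For the dual term, Lemma~\ref{lem:bound_y} shows that $\{H_X^k\}$ is bounded, say $\|H_X^k\|_F\le \bar H_X$. Assumption~\ref{assum:bounded-iterates} bounds $U^k,V^k,A^k$, and hence also $M_X^k=UU^\top,UV^\top,VV^\top$. Cauchy--Schwarz then gives $\langle H_X^k,M_X^k-A_X^k\rangle\ge -\bar H_X\sup_k\|M_X^k-A_X^k\|_F>-\infty$. Alternatively, completing the square gives
\[
\langle H,r\rangle+\tfrac{\rho}{2}\|r\|^2\ge -\|H\|^2/(2\rho),
\]
and since $\rho_X^k\ge\rho_X^0>0$ this is also uniformly bounded below. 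Either route works, and the completed-square version needs only dual boundedness.

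For $f(U,V)=\sum_{ij}\big[M p_{ij}\lambda_{ij}-Y_{\mathrm{sum}}(i,j)\log\lambda_{ij}\big]$ with $\lambda_{ij}=\mathbf u_i^\top\mathbf v_j$, we split into two parts. The linear part $Mp_{ij}\lambda_{ij}$ is nonnegative because $p_{ij}\in[0,1]$ (Assumption~\ref{assum:alpha_inexact}) and $U,V\ge 0$. The log part only involves entries with $Y_{\mathrm{sum}}(i,j)>0$. On those entries, boundedness of the iterates gives $\lambda_{ij}\le \Lambda:=\sup_k\|U^k\|_F\|V^k\|_F<\infty$. Therefore $-Y_{\mathrm{sum}}(i,j)\log\lambda_{ij}\ge -Y_{\mathrm{sum}}(i,j)\log\Lambda$, which is a finite constant. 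Here only an upper bound on $\lambda_{ij}$ is needed, since $-\log$ is decreasing. The lower bound $\lambda_{ij}\ge\delta$ from Assumption~\ref{assum:block_lipschitz} (iterates in $\mathcal D_\delta$) is used only to ensure that $f$ is finite and well-defined, so that the Lagrangian is not $+\infty$ or undefined. Summing the finitely many terms gives $f(U^k,V^k)\ge -\sum_{ij}Y_{\mathrm{sum}}(i,j)\log\Lambda$.

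The main point to handle carefully is the dual term. It depends on Lemma~\ref{lem:bound_y}, which requires the hypotheses on $\epsilon_X^k$. It also needs the fact that the penalty $\rho_X^k$ stays bounded away from zero, which holds because $\rho_X^k$ is nondecreasing. Combining the three bounds gives a constant $\underline{\mathcal L}$, independent of $k$, with $\mathcal L_{\rho^k}(U^k,V^k,A^k,H^k)\ge\underline{\mathcal L}$ for all $k$. The same argument also covers the scaled form, since it equals the unscaled Lagrangian identically.
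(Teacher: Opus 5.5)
Your proposal is correct and follows essentially the same route as the paper. Both arguments drop the nonnegative $\ell_{1/2}$ terms, complete the square to get $\langle H_X^k,r_X^k\rangle+\frac{\rho_X^k}{2}\|r_X^k\|_F^2\ge -\|H_X^k\|_F^2/(2\rho_X^0)$ using the dual boundedness of Lemma~\ref{lem:bound_y} and the monotonicity of $\rho_X^k$, and bound $f$ uniformly from the boundedness of the iterates. The one minor difference is in the log term: you use only the upper bound $\lambda_{ij}\le\Lambda$ and the monotonicity of $-\log$, dropping the linear term as nonnegative, whereas the paper appeals to continuity of $h$ on $[\delta,\Delta]$; your version is slightly cleaner and handles the iteration-dependent $p_{ij}$ uniformly.
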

\begin{proof}
By Lemma~\ref{lem:bound_y},  $\{H_X^k\}$ is bounded and $\rho_X^k \geq \rho_X^0 >  0$ is increasing across iterates. 
\begin{multline*}
    \langle H_X^k, r_X^k\rangle + \frac{\rho_X^k}{2} \lVert r_X^k\rVert_F^2 = \frac{\rho_X^k}{2}  \lVert r_X^k + \frac{1}{\rho_X^k} H_X^k \rVert_F^2 - \frac{1}{2\rho_X^k}\lVert H_X^k\rVert_F^2 \\
    \geq -\frac{1}{2\rho_X^k}\lVert H_X^k\rVert_F^2 
    \geq - \frac{1}{2\rho_X^0} \sup_{k}\lVert  H_X^k\rVert_F^2 > -\infty.
\end{multline*}  

     By Assumption~\ref{assum:bounded-iterates}, $U^k, V^k$ are bounded. Then $\exists \Delta > 0$ such that $x_{i,j}^k = {\mathbf u_i^k}^\top \mathbf v_j^k \leq \lVert \mathbf u_i^k\rVert\lVert \mathbf v_j^k\rVert \leq \Delta$.
     By Assumption~\ref{assum:block_lipschitz}, along the iterates, for any $(i,j)$ with $Y_{\mathrm{sum}}(i,j)>0$, exists some $\delta > 0$ such that $x_{i,j}^k \geq \delta$. Denote the entrywise operation of $f$ as $h(x_{i,j}^k) := p_{ij}x_{i,j} - Y_{\mathrm{sum}}(i,j)\log x_{i,j}  $, if $Y_{\mathrm{sum}}(i,j)> 0$, $h(x_{i,j}^k)$ is continuous for $x_{i,j}^k \in [\delta, \Delta]$. If $Y_{\mathrm{sum}}(i,j) = 0$, $h(x_{i,j}^k) = p_{ij}x_{i,j}^k \geq 0$ and hence is bounded below entrywise. Summing over all entries, $\inf_k f(U^k, V^k) > -\infty$. As $\lambda_X > 0$, $\lambda_X\lVert A_X^k\rVert_{1/2} \geq 0$ and
       $  \mathcal L_{\rho^k}(U,V,A,H) 
          \geq  \inf_k f(U^k, V^k)+  \sum_{X}\left(\lambda_X\lVert A_X^k\rVert_{1/2}\right) -\sum_{X}\left((1/2\rho_X^0)\sup_k \|H_X^k\|_F^2 \right)
           > -\infty$,
which holds for all iterates $(U^k,V^k,A^k,H^k)$.
\end{proof}

\begin{lemma} \label{lem:alm_desc}
The iterates generated by the
$(U,V)$-gradient step, $A$-update, and
dual $H$-update satisfy, for all $K\ge 1$ and fixed $\{\boldsymbol{\alpha}^k\}_{k = 0}^K$, there exist constants $c_U, c_V>0$ such that,
\begin{multline*}\label{eq:ergodic-almost-descent}
\mathcal L_{\rho^K}(U^K,V^K,A^K,H^K)-\mathcal L_{\rho^0}(U^0,V^0,A^0,H^0) \\
 \le
-\sum_{k=0}^{K-1}\Big(c_U\|\Delta U^{k}\|_F^2
+c_V\|\Delta V^{k}\|_F^2\Big) \\
+\sum_{k=0}^{K-1}\sum_{X}
\frac{\rho_X^{k+1}+\rho_X^k}{2(\rho_X^k)^2}
\|\Delta H_X^{k}\|_F^2.
\end{multline*}
where $\|\Delta U^{k}\| = \|U^{k+1} - U^k\|$ and similarly for $\Delta V^k, \Delta H_X^k$.
\end{lemma}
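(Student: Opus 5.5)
I would prove the inequality one iteration at a time and then telescope over $k=0,\dots,K-1$. The change of the augmented Lagrangian from $(U^k,V^k,A^k,H^k;\rho^k)$ to $(U^{k+1},V^{k+1},A^{k+1},H^{k+1};\rho^{k+1})$ splits into four sub-steps: the $U$-step, the $V$-step, the $A$-step, and the combined dual/penalty step. I would bound each sub-step separately. Throughout, $\boldsymbol{\alpha}^k$, and hence $\tilde P^k$, is held fixed, as the statement allows, so that $f$ is the same function at each sub-step.

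\textbf{Primal steps.} For the $U$-step, the function $U\mapsto \mathcal L_{\rho^k}(U,V^k,A^k,H^k)$ equals $F_k(U,V^k)$ up to terms independent of $U$. By Assumption~\ref{assum:block_lipschitz}, its gradient is $L_U^k$-Lipschitz on $\mathcal D_\delta$, and the inner iterates stay there. The projected gradient steps with backtracking (Remark~\ref{remark:inner_control}) then satisfy the standard sufficient-decrease inequality
\[
F_k(U^k_{s+1},V^k)\le F_k(U^k_s,V^k)-\tfrac{1}{2t_U^{s,k}}\|U^k_{s+1}-U^k_s\|_F^2 ,
\]
with $t_U^{s,k}$ bounded below by $\min(t_U^{\max},\eta/L_U^k)$. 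Summing over the at most $S_{\max}$ inner steps and applying the Cauchy--Schwarz bound $\|U^{k+1}-U^k\|_F^2\le S_{\max}\sum_s\|U^k_{s+1}-U^k_s\|_F^2$ gives a decrease of at least $c_U\|\Delta U^k\|_F^2$. Here $c_U = \min_s(2t_U^{s,k}S_{\max})^{-1}$ is made uniform in $k$ by taking the supremum of $L_U^k$ over the bounded set. That supremum is finite because the $\rho^k_X$ are eventually constant by Lemma~\ref{lem:bound_y}, and the $B_X^k$ are bounded. The $V$-step is handled identically and yields $c_V$. For the $A$-step, Lemma~\ref{lem:exact_a} shows that $A^{k+1}_X$ globally minimizes $\lambda_X\|A_X\|_{1/2}+\langle H_X^k,M_X^{k+1}-A_X\rangle+\frac{\rho_X^k}{2}\|M_X^{k+1}-A_X\|^2$. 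Hence the Lagrangian does not increase in this step.

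\textbf{Dual and penalty step.} I would write $\mathcal L$ in unscaled form. Going from $(H^k,\rho^k)$ to $(H^{k+1},\rho^{k+1})$ at fixed primal variables changes $\mathcal L$ by
\[
\langle H_X^{k+1}-H_X^k,r_X^{k+1}\rangle+\tfrac{\rho_X^{k+1}-\rho_X^k}{2}\|r_X^{k+1}\|_F^2 .
\]
The unscaled update $H_X^{k+1}=H_X^k+\rho_X^k r_X^{k+1}$ implied by the scaled rule gives $r_X^{k+1}=\Delta H_X^k/\rho_X^k$. Substituting, the first term becomes $\|\Delta H_X^k\|_F^2/\rho_X^k$ and the second becomes $(\rho_X^{k+1}-\rho_X^k)\|\Delta H_X^k\|_F^2/(2(\rho_X^k)^2)$. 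Their sum is exactly $\frac{\rho_X^{k+1}+\rho_X^k}{2(\rho_X^k)^2}\|\Delta H_X^k\|_F^2$, which is the term in the statement.

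\textbf{Telescoping and the main obstacle.} Adding the four bounds and telescoping yields the claim. The step I expect to be delicate is the uniform sufficient-decrease constant for the inexact inner loop. It needs three things: the backtracking step sizes must be bounded away from zero uniformly in $k$; the Lipschitz constants $L_U^k$ must be uniformly bounded, which uses Assumption~\ref{assum:bounded-iterates}, the finiteness of penalty increases from Lemma~\ref{lem:bound_y}, and boundedness of $W^k$; and the number of inner steps must be capped at $S_{\max}$, so that the per-step decreases can be converted into $c_U\|\Delta U^k\|_F^2$. If the cap is not enforced, I would instead state the decrease in terms of the sum of the squared inner increments, which dominates $\|\Delta U^k\|_F^2/S_{\max}$ only when the cap holds.
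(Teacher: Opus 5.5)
Your decomposition matches the paper's proof. Both use a per-inner-step sufficient decrease from the projection inequality and the descent lemma under backtracking, and a Cauchy--Schwarz/Jensen bound over at most $S_{\max}$ inner steps. Both rely on global optimality of the half-thresholding $A$-step from Lemma~\ref{lem:exact_a}. The dual/penalty step gives the exact identity $\sum_X\frac{\rho_X^{k+1}+\rho_X^k}{2(\rho_X^k)^2}\|\Delta H_X^k\|_F^2$ via $\Delta H_X^k=\rho_X^k r_X^{k+1}$, and both proofs finish by telescoping.

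\textbf{The uniform constant.} The one step you argue incorrectly is the one you flag as delicate. With $c_U^k=(2\bar t_U^kS_{\max})^{-1}$ and $\bar t_U^k=\max_s t_U^{s,k}$, the coefficient $1/(2t)$ grows as the step shrinks. So $\inf_k c_U^k>0$ needs an \emph{upper} bound on the accepted steps, not a lower one. Taking $\sup_k L_U^k$ only gives $t_U^{s,k}\ge\min(t_U^{\max},\eta/\sup_k L_U^k)$, which bounds $c_U^k$ from above and does not help. The needed bound is immediate: backtracking starts at $t_U^{\max}$ and only shrinks, so $t_U^{s,k}\le t_U^{\max}$. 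Hence $c_U=(2t_U^{\max}S_{\max})^{-1}$ works for every $k$, which is exactly how the paper concludes (and likewise for $c_V$).

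\textbf{What you can drop.} You do not need uniform Lipschitz bounds, boundedness of $W^k$, or Lemma~\ref{lem:bound_y}. Invoking Lemma~\ref{lem:bound_y} would actually import its threshold-decay hypothesis, which is not part of this lemma's statement. The finiteness $L_U^k<\infty$ from Assumption~\ref{assum:block_lipschitz} is used only per iteration, to ensure backtracking terminates with a step satisfying the sufficient-decrease inequality. A uniform lower bound on step sizes would matter only for turning small increments into small stationarity residuals, and the paper instead assumes that directly through Assumption~\ref{assum:err_con}.
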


\begin{proof}
We split the proof into the $U$-update, the $V$-update, the exact $A$-update,
and the dual/penalty update.

Fix $k$ and $V^k$. The $U$-subproblem minimizes the smooth function $F_k(U,V^k)$ over the nonnegative orthant.
Let
\begin{align*}
U_{s+1}^k=\operatorname{Proj}_{\mathbb R_+}\!\bigl(U_s^k-t_U^{s,k}G_s^k\bigr),
\qquad
G_s^k:=\nabla_U F_k(U_s^k,V^k),
\end{align*}
where $\operatorname{Proj}_{\mathbb R_+}$ denotes Euclidean projection onto the nonnegative orthant. By the characterization of Euclidean projection onto a closed convex set,
\begin{align*}
    \left\langle U_s^k-t_U^{s,k}G_s^k-U_{s+1}^k,\; U-U_{s+1}^k\right\rangle \le 0
\qquad \forall U\ge 0.
\end{align*}
Taking $U=U_s^k$,
$\langle G_s^k, U_{s+1}^k-U_s^k\rangle
\le
-\frac{1}{t_U^{s,k}}\|U_{s+1}^k-U_s^k\|_F^2$.

By Assumption 3, $\nabla_U F_k(U,V^k)$ is Lipschitz on the relevant inner-loop domain, so the descent lemma yields
\begin{multline*}
    F_k(U_{s+1}^k,V^k)
\le
F_k(U_s^k,V^k)\\
+\langle G_s^k,U_{s+1}^k-U_s^k\rangle
+\frac{L_{U,s}^k}{2}\|U_{s+1}^k-U_s^k\|_F^2,
\end{multline*}
where $L_{U,s}^k$ is a valid local Lipschitz constant at that inner step.
Because the step size is chosen by backtracking, the accepted step satisfies
$t_U^{s,k}\le 1/L_{U,s}^k$, hence
\begin{equation}\label{eq:diff_F_U_inner}
F_k(U_{s+1}^k,V^k)
\le
F_k(U_s^k,V^k)
-\frac{1}{2t_U^{s,k}}\|U_{s+1}^k-U_s^k\|_F^2.
\end{equation}

Summing \eqref{eq:diff_F_U_inner} over $s=0,\dots,S_{\max}-1$ gives
\[
F_k(U^{k+1},V^k)
\le
F_k(U^k,V^k)
-\sum_{s=0}^{S_{\max} - 1}\frac{1}{2t_U^{s,k}}\|U_{s+1}^k-U_s^k\|_F^2.
\]
%Recall that $U^{k+1} = U^k_{S_\max}$. 
By Jensen's inequality,
 \vspace{-0.75em}
\begin{multline*}
  \sum_{s=0}^{S_{\max}-1}\|U_{s+1}^k-U_s^k\|_F^2
\ge
\frac{1}{S_{\max}}
\left\|\sum_{s=0}^{S_{\max}-1}(U_{s+1}^k-U_s^k)\right\|_F^2\\
=
\frac{1}{S_{\max}}\|U^{k+1}-U^k\|_F^2.  
\end{multline*}
\vspace{-0.75em}

By Assumption~\ref{assum:block_lipschitz}, $\forall k > 0$, $L_U^k < \infty$, so $t_U^k$ is upper bounded. Define $\bar t_U^k := \max_{s \geq 0} t_U^{s,k}$. Since $t_U^{s,k}\le \bar t_U^k$ for all $s$,
 \vspace{-0.35em}
\begin{align*}
    F_k(U^{k+1},V^k) \le F_k(U^k,V^k)
-\frac{1}{2\bar t_U^k}\sum_{s=0}^{S_{\max}-1}\|U_{s+1}^k-U_s^k\|_F^2.
\end{align*}
\vspace{-0.75em}

Therefore
 \vspace{-0.35em}
\begin{equation}\label{eq:diff_u_inner_final}
   F_k(U^{k+1},V^k) \le F_k(U^k,V^k)-
   c_U\|\Delta U^k\|_F^2, 
\end{equation}
\vspace{-0.75em}

where $c_U := \frac{1}{2\bar t_U^k S_{\max}}$. Similarly, with $U^{k+1}$ fixed,
\begin{equation}\label{eq:diff_v_inner_final}
   F_k(U^{k+1},V^{k+1})
\le
F_k(U^{k+1},V^k)-c_V^k\|\Delta V^k\|_F^2,
\end{equation}
where $c_V^k:=\frac{1}{2\bar t_V^k S_{\max}}$.
Combining (\ref{eq:diff_u_inner_final}) and (\ref{eq:diff_v_inner_final}),
\begin{equation}\label{eq:u_v_descent}
  F_k(U^{k+1},V^{k+1}) \le F_k(U^k,V^k)-c_U^k\|\Delta U^k\|_F^2-c_V^k\|\Delta V^k\|_F^2.  
\end{equation}

Now, with $A^k$ and $H^k$ fixed, the augmented Lagrangian
$\mathcal L_{\rho^k}(U,V,A^k,H^k)$ differs from $F_k(U,V)$ only by terms independent of $(U,V)$.
Hence (\ref{eq:u_v_descent}) implies
\begin{multline*}
    \mathcal L_{\rho^k}(U^{k+1},V^{k+1},A^k,H^k) \le
\mathcal L_{\rho^k}(U^k,V^k,A^k,H^k)\\
-c_U^k\|\Delta U^k\|_F^2-c_V^k\|\Delta V^k\|_F^2,
\end{multline*}
where $\bar t_U^k\le t_U^{\max}$ by Assumption~\ref{assum:block_lipschitz} and the following Remark~\ref{remark:inner_control}. Thus, for all $k$, 
\[
c_U^k=\frac{1}{2\bar t_U^k S_{\max}}\ge \frac{1}{2t_U^{\max}S_{\max}}=:c_U>0.
\]
The same argument gives $c_V^k\ge c_V>0$. For all $k$, $-c_U^k\|\Delta U^k\|_F^2 \leq -c_U\|\Delta U^k\|_F^2$ and similarly for $V$ and 
\begin{multline}\label{eq:comb_desc_l}
    \mathcal L_{\rho^k}(U^{k+1},V^{k+1},A^k,H^k)
\le
\mathcal L_{\rho^k}(U^k,V^k,A^k,H^k)\\
-c_U\|\Delta U^k\|_F^2-c_V\|\Delta V^k\|_F^2.
\end{multline}

By Lemma~\ref{lem:exact_a}, each $A_X^{k+1}$ is a global minimizer, so
\begin{equation}\label{eq:A_desc}
    \mathcal L_{\rho^k}(U^{k{+}1},V^{k{+}1},A^{k{+}1},H^k){\le}
\mathcal L_{\rho^k}(U^{k{+}1},V^{k{+}1},A^k,H^k).
\end{equation}
Fix $k$ and primal variables $(U^{k+1},V^{k+1},A^{k+1})$, the difference between the augmented Lagrangians
before and after the dual/penalty update is
\begin{multline*}
\mathcal L_{\rho^{k+1}}(U^{k+1},V^{k+1},A^{k+1},H^{k+1})\\ 
- \mathcal L_{\rho^k}(U^{k+1},V^{k+1},A^{k+1},H^k) \\
= \sum_X \left(\langle H_X^{k+1}-H_X^k,\; r_X^{k+1}\rangle
+ \frac{\rho_X^{k+1}-\rho_X^k}{2}\|r_X^{k+1}\|_F^2\right).
\end{multline*}
By the unscaled dual update,
$\Delta H_X^k:=H_X^{k+1}-H_X^k=\rho_X^k r_X^{k+1}$
and $\|r_X^{k+1}\|^2_F = \|\Delta H_X^k\|^2_F/ (\rho_X^k)^2$. Thus
$\langle H_X^{k+1}-H_X^k,\; r_X^{k+1}\rangle
=\rho_X^k\|r_X^{k+1}\|_F^2$
and hence
\begin{multline} \label{eq:dual_desc}
    \mathcal L_{\rho^{k+1}}(U^{k+1},V^{k+1},A^{k+1},H^{k+1})\\
- \mathcal L_{\rho^k}(U^{k+1},V^{k+1},A^{k+1},H^k)\\
= \sum_X \frac{\rho_X^{k+1}+\rho_X^k}{2}\|r_X^{k+1}\|_F^2
= \sum_X \frac{\rho_X^{k+1}+\rho_X^k}{2(\rho_X^k)^2}\|\Delta H_X^k\|_F^2.
\end{multline}

Combining (\ref{eq:comb_desc_l}), (\ref{eq:A_desc}), and (\ref{eq:dual_desc}) yields
\begin{multline*}
\mathcal L_{\rho^{k+1}}(U^{k+1},V^{k+1},A^{k+1},H^{k+1})
\le \mathcal L_{\rho^k}(U^k,V^k,A^k,H^k) \\
-c_U\|\Delta U^k\|_F^2-c_V\|\Delta V^k\|_F^2
+\sum_X \frac{\rho_X^{k+1}+\rho_X^k}{2(\rho_X^k)^2}\|\Delta H_X^k\|_F^2.
\end{multline*}
Finally, summing from $k=0$ to $K-1$ gives
\begin{multline*}
    \mathcal L_{\rho^K}(U^K,V^K,A^K,H^K)-\mathcal L_{\rho^0}(U^0,V^0,A^0,H^0)\\
\le -\sum_{k=0}^{K-1}\Bigl(c_U\|\Delta U^k\|_F^2+c_V\|\Delta V^k\|_F^2\Bigr)\\
+\sum_{k=0}^{K-1}\sum_X \frac{\rho_X^{k+1}+\rho_X^k}{2(\rho_X^k)^2}\|\Delta H_X^k\|_F^2.
\end{multline*}
\end{proof}

\begin{lemma}\label{lem:vanishing-increments}
Under Assumptions 1--3 and the threshold condition of Lemma~\ref{lem:bound_y}, $\|\Delta U^k\|_F\to 0$, $\|\Delta V^k\|_F\to 0$, $\|r_X^k\|_F\to 0$ for all $X\in\{UU,UV,VV\}$.
\end{lemma}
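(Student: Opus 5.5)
The plan is to combine the telescoped almost-descent inequality of Lemma~\ref{lem:alm_desc} with the lower bound of Lemma~\ref{lem:bound_L}. The dual error terms are then controlled through the summability of the tolerances $\{\epsilon_X^k\}$ and the eventual constancy of the penalties from Lemma~\ref{lem:bound_y}.

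The residual claim is handled first, since it is the easiest. By Lemma~\ref{lem:bound_y} there is $K_X$ such that $\|r_X^{k+1}\|_F\le \epsilon_X^k$ for all $k\ge K_X$. Because $\sum_k \epsilon_X^k<\infty$, we have $\epsilon_X^k\to 0$, and hence $\|r_X^k\|_F\to 0$ for each $X\in\{UU,UV,VV\}$. Taking $K^\star:=\max_X K_X$, we also get $\rho_X^k=\bar\rho_X$ for all $k\ge K^\star$.

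Next comes the summability of the dual error term. Using $\Delta H_X^k=\rho_X^k r_X^{k+1}$, the error term in Lemma~\ref{lem:alm_desc} equals
\[
\sum_X \frac{\rho_X^{k+1}+\rho_X^k}{2}\,\|r_X^{k+1}\|_F^2 .
\]
For $k\ge K^\star$ this is at most $\sum_X \bar\rho_X(\epsilon_X^k)^2$. Summability of $(\epsilon_X^k)^2$ is guaranteed either by the condition $\sum_k(\epsilon_X^k)^2<\infty$ imposed in Section~\ref{par:inc_pen}, or by the fact that $\sum_k\epsilon_X^k<\infty$ with $\epsilon_X^k\to0$. The finitely many terms with $k<K^\star$ are finite, because only finitely many penalty jumps occur and each residual is finite. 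So the whole error series converges, say to $E<\infty$.

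Finally, rearrange Lemma~\ref{lem:alm_desc}:
\[
\sum_{k=0}^{K-1}\bigl(c_U\|\Delta U^k\|_F^2+c_V\|\Delta V^k\|_F^2\bigr)\le \mathcal L_{\rho^0}(U^0,V^0,A^0,H^0)-\inf_k \mathcal L_{\rho^k}(U^k,V^k,A^k,H^k)+E .
\]
The right-hand side is finite and independent of $K$, by Lemma~\ref{lem:bound_L} and the bound above. Letting $K\to\infty$ shows that $\sum_k\|\Delta U^k\|_F^2$ and $\sum_k\|\Delta V^k\|_F^2$ are finite. Since $c_U,c_V>0$ are uniform constants, it follows that $\|\Delta U^k\|_F\to0$ and $\|\Delta V^k\|_F\to0$.

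The main obstacle is the hypothesis ``fixed $\{\boldsymbol\alpha^k\}$'' in Lemma~\ref{lem:alm_desc}. In the actual algorithm, $\mathbf p^{k+1}$ changes at every outer iteration, which perturbs $f$ and hence $\mathcal L$. I would first try to absorb the $\mathbf p$-step into the descent chain. By Assumption~\ref{assum:alpha_inexact}, the $\mathbf p$-update is $\xi_k$-optimal for $\phi(\cdot;\lambda^k)$, and $\phi$ agrees with the $\mathbf p$-dependent part of $f$ up to terms constant in $\mathbf p$. So the $\mathbf p$-step increases $\mathcal L$ by at most $\xi_k$. This adds $\sum_k\xi_k$ to $E$, which is finite if the $\xi_k$ are summable. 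Since Assumption~\ref{assum:alpha_inexact} only gives $\xi_k\to0$, I would either strengthen it to $\sum_k\xi_k<\infty$, or analyze the $(U,V)$-loop with $\mathbf p$ held fixed, as the statement of Lemma~\ref{lem:alm_desc} suggests.
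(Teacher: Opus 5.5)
Your argument is essentially the paper's proof: you telescope Lemma~\ref{lem:alm_desc}, bound $\mathcal L_{\rho^K}$ from below with Lemma~\ref{lem:bound_L}, and control the dual terms by $\bar\rho_X\sum_k(\epsilon_X^k)^2$ once the penalties freeze (Lemma~\ref{lem:bound_y}), reading $\|r_X^k\|_F\to 0$ off from $\|r_X^{k+1}\|_F\le\epsilon_X^k$. Your worry about $\mathbf p^{k+1}$ changing between outer iterations is legitimate (the paper sidesteps it by stating Lemma~\ref{lem:alm_desc} for fixed $\{\boldsymbol\alpha^k\}$), but your absorption fix only works if the Lagrangian also carries $-\sum_{ij}Y_{\mathrm{sum}}(i,j)\log p_{ij}$, because $\phi(\cdot;\lambda)$ differs from the $\mathbf p$-dependent part of $f$ by exactly this term, which is not constant in $\mathbf p$.
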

\begin{proof}
By Lemma~\ref{lem:alm_desc} , 
\begin{align*}
& \sum_{k=0}^{K-1} \Big(c_U \|\Delta U^k\|^2_F + c_V \|\Delta V^k\|^2_F\Big)  \\
& \leq\mathcal L_{\rho^0}(U^{0},V^{0},A^{0},H^{0}) - \mathcal L_{\rho^K}(U^{K},V^{K},A^{K},H^{K}) \\
& \quad +\sum_X \left( \sum_{k=0}^{K-1} \frac{\rho_X^{k+1}+ \rho_X^k}{2(\rho^k_X)^2}\|\Delta H_X^{k}\|^2_F\right).
\end{align*}
with $0 < c_U, c_V < \infty$. 
From Lemma~\ref{lem:bound_L}, we have lower boundedness of $\mathcal L_{\rho^K}$. 
By Lemma~\ref{lem:bound_y}, the penalty stops changing after finite index $K_X$ with $\rho_X^k = \bar \rho_X$ and $\|r_X^{k+1}\|_F \leq \epsilon_X^k \to 0$ with $\sum_{k = 1}^\infty (\epsilon_X^k)^2 < \infty$ and $\{H_X^k\}$ bounded. Since $\sum_k (\epsilon_X^k)^2 < \infty$, for $k \geq K_X$, 
\vspace{-0.25em}
\begin{multline*}
\sum_{k = K_X}^\infty \frac{\rho_X^{k+1}+ \rho_X^k}{2(\rho^k_X)^2}\|\Delta H_X^{k} \|^2_F 
= \bar \rho_X \sum_{k = K_X}^\infty \|r_X^{k+1}\|_F^2 \\
\leq \bar \rho_X \sum_{k = K_X}^\infty{\epsilon^k_X}^2 < \infty,
\end{multline*}

\vspace{-0.25em}
and thus for $K \to \infty$, $\sum_X \left( \sum_{k=0}^{K-1} \frac{\rho_X^{k+1}+ \rho_X^k}{2(\rho^k_X)^2}\|\Delta H_X^{k}\|^2_F\right) < \infty$.
By Lemma~\ref{lem:bound_L}, along the iterate, the augmented Lagrangian is bounded from below. Thus,
$\sum_{k = 0}^\infty \Big(c_U \|\Delta U^k\|^2_F + c_V \|\Delta V^k\|^2_F\Big) < \infty$. 
By $c_U, c_V > 0$, $\sum_{k = 0}^\infty \|\Delta U^k\|^2 < \infty$
and $\sum_{k = 0}^\infty \|\Delta V^k\|^2 < \infty$, i.e., $\|\Delta U^k\| \to 0 $ and $\|\Delta V^k\| \to 0$.
\end{proof}

\begin{theorem}
\label{prop:kkt-cluster}
Let Assumptions~\ref{assum:bounded-iterates}--\ref{assum:alpha_inexact}, and the threshold condition of Lemma~\ref{lem:bound_y} hold.
Define $\mathcal S^k := (U^k,V^k,A^k,H^k,\mathbf p^k)$, $A^k := (A_{UU}^k,A_{UV}^k,A_{VV}^k)$, $H^k := (H_{UU}^k,H_{UV}^k,H_{VV}^k)$. Then $\{\mathcal S^k\}$ is bounded and has a cluster point. Let $(U^\star,V^\star,A^\star,H^\star,\mathbf p^\star)$ be any cluster point of $\{\mathcal S^k\}$. Then there exist $\boldsymbol\alpha^\star$ and $\boldsymbol\omega^\star$ such that the $(U^\star,V^\star,A^\star,H^\star,\boldsymbol\alpha^\star,\mathbf p^\star,\boldsymbol\omega^\star)$ satisfies the KKT system \eqref{eq:U_station}-\eqref{eq:alpha_station3}. 
\end{theorem}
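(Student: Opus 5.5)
Boundedness comes first. $(U^k,V^k,A^k)$ is bounded by Assumption~\ref{assum:bounded-iterates}, and $\{H^k\}$ is bounded by Lemma~\ref{lem:bound_y}. The vector $\mathbf p^k$ lies in $[0,1]^{IJ}$ because Assumption~\ref{assum:alpha_inexact} makes the $\boldsymbol\alpha$-update return a feasible pair. Hence $\{\mathcal S^k\}$ is bounded, and Bolzano--Weierstrass gives a subsequence $\mathcal S^{k_j}\to(U^\star,V^\star,A^\star,H^\star,\mathbf p^\star)$. By Lemma~\ref{lem:vanishing-increments} we have $\|\Delta U^k\|,\|\Delta V^k\|\to0$, so $U^{k_j+1}\to U^\star$ and $V^{k_j+1}\to V^\star$ as well. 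Also $r_X^{k}\to0$, and passing to the limit in $r_X^{k_j}=M_X^{k_j}-A_X^{k_j}$ yields \eqref{eq:primalease}. For $k\ge K_X$ the penalty is fixed at $\bar\rho_X$, and $\Delta H_X^k=\bar\rho_X r_X^{k+1}\to0$. Therefore $H_X^{k_j+1}\to H_X^\star$ and, using $M_X^{k_j+1}-A_X^{k_j+1}\to 0$ and continuity of $M_X$, also $A_X^{k_j+1}\to A_X^\star$.

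\textbf{$U,V$ stationarity.} By Remark~\ref{remark:inner_control} we have $-\nabla_U F_k(U^{k+1},V^k)-e_U^k\in\mathcal N_{\mathbb R_+}(U^{k+1})$ with $\|e_U^k\|\le\varepsilon_k\to0$. For $k\ge\max_X K_X$ we rewrite $\rho_X^k(M_X-B_X^k)=\rho_X^k(M_X-A_X^k)+H_X^k$, which shows that $\nabla_U F_k$ equals the gradient of the Lagrangian part $f+\sum_X\langle H_X,M_X-A_X\rangle$ plus a term of order $\bar\rho_X\|M_X^{k+1}-A_X^k\|\cdot\|U\|$. This extra term tends to zero by the feasibility and increment results above. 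The gradient of $f$ is continuous on $\mathcal D_\delta$ (Assumption~\ref{assum:block_lipschitz}), so its arguments stay away from the log singularity. Moreover $\mathbf p^{k}$ enters $f$ linearly and converges along the subsequence. The normal-cone mapping of the orthant has a closed graph, so taking limits gives \eqref{eq:U_station}; here I read $\nabla_U F$ in the KKT system as the gradient of $f+\sum\langle H_X,M_X\rangle$ at the limit. The argument for \eqref{eq:V_station} is identical, using $U^{k+1}\to U^\star$.

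\textbf{$A$ stationarity.} Lemma~\ref{lem:exact_a} says $A_X^{k+1}$ globally minimizes $\psi(A)+\tfrac{\rho_X^k}{2}\|A-(M_X^{k+1}+W_X^k)\|^2$. Its Fermat rule in the limiting subdifferential reads $\rho_X^k(M_X^{k+1}+W_X^k-A_X^{k+1})=H_X^{k+1}\in\partial\psi(A_X^{k+1})$. To pass to the limit I need $\psi(A_X^{k_j+1})\to\psi(A_X^\star)$, because the limiting subdifferential is only closed under $\psi$-attentive convergence. I will get this from global minimality: compare against the candidate $A=A_X^\star$ to obtain the upper semicontinuity bound $\limsup\psi(A_X^{k_j+1})\le\psi(A_X^\star)$. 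The matching lower bound comes from continuity of $\|\cdot\|_{1/2}$. The closedness of $\operatorname{graph}\partial\psi$ then gives \eqref{eq:Y_station}.

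\textbf{Detectability block.} This is the main obstacle, because the inner $\boldsymbol\alpha$-ADMM only provides a $\xi_k$-suboptimal feasible pair and not dual variables. The problem is convex in $\mathbf p$ for fixed $\lambda$, so I plan to use a limiting argument on values. Since $\lambda^{k}\to\lambda^\star=U^\star V^{\star\top}$, the map $\lambda\mapsto\inf_{\mathcal F}\phi(\cdot;\lambda)$ is continuous; it is affine in $\lambda$ over a bounded feasible set. Combined with lower semicontinuity of $\phi$ and closedness of $\mathcal F$, this shows that $\mathbf p^\star$ together with some $\boldsymbol\alpha^\star$ lies in $\mathcal F$, giving \eqref{eq:alpha_station3}, and that it is an exact minimizer of $\phi(\cdot;\lambda^\star)$ over $\mathcal F$. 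Here $\boldsymbol\alpha^\star$ is taken as a limit of the bounded $Z^\dagger\mathbf p^k$, or any preimage. The Slater-type point in Assumption~\ref{assum:alpha_inexact} keeps $\phi$ finite, and $\mathcal F$ is polyhedral, so convex KKT theory with linear constraints provides a multiplier $\boldsymbol\omega^\star$ such that \eqref{eq:alpha_station1}--\eqref{eq:alpha_station2} hold. The delicate point is that $\phi$ is not finite on the boundary $p_{ij}=0$ when $Y_{\mathrm{sum}}(i,j)>0$. I will argue that the minimizer has $p^\star_{ij}>0$ on those entries, since $\phi=+\infty$ otherwise while the feasible point of Assumption~\ref{assum:alpha_inexact} gives a finite value. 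Then the sum rule for subdifferentials applies and the multiplier exists.
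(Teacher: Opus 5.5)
Your argument follows the paper's proof essentially step for step. Boundedness comes from Assumption~\ref{assum:bounded-iterates}, Lemma~\ref{lem:bound_y} and feasibility of $\mathbf p^k$. You then use Bolzano--Weierstrass together with Lemma~\ref{lem:vanishing-increments} and $\Delta H_X^k=\bar\rho_X r_X^{k+1}\to 0$ to move between neighbouring indices. Closedness of the limiting-subdifferential and normal-cone graphs handles the $A$- and $(U,V)$-blocks. For the detectability block you pass to the limit in the $\xi_k$-suboptimality and then apply convex KKT theory on the polyhedral feasible set. Your explicit splitting of $\nabla_U F_k$ into the Lagrangian gradient plus a penalty term of size $\bar\rho_X\|M_X-A_X^k\|$ spells out the joint-continuity step that the paper leaves implicit. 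Your lower-semicontinuity argument also gives optimality of $\mathbf p^\star$ and $p^\star_{ij}>0$ on $\Omega_+$ at once, whereas the paper first proves positivity by contradiction and then invokes continuity.

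One bookkeeping step is wrong as written. The detection vector is the only block with no vanishing-increment property. The $U$/$V$-subproblems at iteration $k$ use $\mathbf p^{k+1}$ (the gradient contains $\tilde P^{k+1}$), not $\mathbf p^k$. You choose $k_j$ so that $\mathcal S^{k_j}$ converges, hence $\mathbf p^{k_j}\to\mathbf p^\star$. You then pass to the limit in the inexactness condition at iteration $k_j$, with $U^{k_j+1}$, where $\mathbf p^{k_j+1}$ appears, and nothing forces $\mathbf p^{k_j+1}\to\mathbf p^\star$. Uniqueness cannot rescue this, because the $\mathbf p$-minimizer at $\lambda^\star$ need not be unique: strict convexity pins down only the entries with $Y_{\mathrm{sum}}(i,j)>0$, so the two limits can genuinely differ and change $\nabla_U F$.

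There are two ways to fix this. You can pass to the limit at iteration $k_j-1$, using $U^{k_j}$, $V^{k_j-1}$, $A^{k_j-1}$, $H^{k_j-1}$ and $\mathbf p^{k_j}$, all of which converge to the cluster point. Alternatively, as the paper does, choose the subsequence so that $\mathcal S^{k_\ell+1}$ converges.

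You also never mention the EM imputation of missing entries, which makes the data term iteration-dependent. The paper disposes of it by continuity of $(\mathbf p,U,V)\mapsto\mathbf p\odot UV^\top$, and it needs the same index care.

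Two minor points. First, $\lambda\mapsto\inf_{\mathcal F}\phi(\cdot;\lambda)$ is concave rather than affine; it is Lipschitz because $\mathbf p\in[0,1]^{IJ}$, which is all you actually use. Second, since $\psi$ is continuous, $\psi$-attentive convergence is automatic, so your comparison against the candidate $A=A_X^\star$ is redundant.
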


\begin{proof} By Assumption~\ref{assum:bounded-iterates}, the sequence $\{(U^k,V^k,A^k)\}$ is bounded. By Lemma
\ref{lem:bound_y}, for each \(X\in\{UU,UV,VV\}\), there exists \(K_X\) such that $\forall k\ge K_X$,
$\rho_X^k=\bar\rho_X$  and
$\|r_X^{k+1}\|_F \to 0$ with 
$\{H_X^k\}$ bounded. By Bolzano–Weierstrass Theorem, the whole iterate sequence admits at least one cluster point, i.e., there exists a subsequence $\{k_\ell\}$ such that as $\ell \to \infty$,
$(U^{k_\ell+1},V^{k_\ell+1},A^{k_\ell+1},H^{k_\ell+1}) \to
(U^\star,V^\star,A^\star,H^\star)$.  By Lemma~\ref{lem:vanishing-increments}, $\|\Delta U^{k}\|_F\to 0$ and $\|\Delta V^{k}\|_F\to 0$ and thus
$U^{k_\ell}\to U^\star, V^{k_\ell}\to V^\star$. Also since $r_X^{k_\ell+1} = M_X^{k_\ell+1}-A_X^{k_\ell+1} \to \mathbf 0$,
passing to the limit and using the continuity of
$UU^\top$, $VV^\top$, and $UV^\top$, we obtain
$U^\star U^{\star\top}=A_{UU}^\star$,
$V^\star V^{\star\top}=A_{VV}^\star$, and 
$U^\star V^{\star\top}=A_{UV}^\star$.
Thus, condition \eqref{eq:primalease} holds.

Along the subsequence $A^{k_\ell} \to A^\star$ and $H^{k_\ell} \to H^\star$. For each $X\in\{UU,UV,VV\}$, by Lemma~\ref{lem:exact_a}, $A_X^{k+1}$ is the exact global minimizer of
\[
\min_{A_X}\ \psi_X(A_X)+(\rho_X^k/2)\|A_X-(M_X^{k+1}+W_X^k)\|_F^2,
\]
where $\psi_X(A_X)=\lambda_X\|A_X\|_{1/2}$. Hence, its optimality condition is
$\mathbf 0\in \partial \psi_X(A_X^{k+1})
+\rho_X^k\bigl(A_X^{k+1}-M_X^{k+1}-W_X^k\bigr)$. With 
$H_X^{k+1}=\rho_X^k\bigl(W_X^k+M_X^{k+1}-A_X^{k+1}\bigr)$, 
$\mathbf 0\in \partial \psi_X(A_X^{k+1})-H_X^{k+1}$, 
i.e., $H_X^{k+1}\in \partial \psi_X(A_X^{k+1})$.
Passing to the limit along $\{k_\ell\}$, since $\psi_X$ is finite everywhere and continuous, by the closedness of the limiting subdifferential graph, $H_X^\star \in \lambda_X \partial ||A_X^\star||_{1/2}$, satisfying \eqref{eq:Y_station}. 

For $\boldsymbol\alpha$-block, since $\mathbf p^k \in [0,1]^{IJ}$, passing to the limit along $\{k_\ell\}$, $\mathbf p^{k_\ell +1} \to \mathbf p^\star$. By the continuity of imputation rule for missing entries w.r.t. $\mathbf p$, $U$, and $V$, the imputed missing entries converge along $\{k_\ell\}$ to $Y^\star$ corresponding to $\mathbf p^\star, U^\star$, and $V^\star$.

By Assumption~\ref{assum:err_con}, the $U$-subproblem is solved inexactly with $\mathrm{dist}(\mathbf 0, \nabla_U F(U^{k+1}, V^k) + \mathcal N_{\mathbb R_+} (U^{k+1})) \leq \varepsilon_k$. Thus, there exists $e_U^k$ with $\|e_U^k\|_F \leq \varepsilon_k$ such that $-  e_U^k \in  \nabla_U F(U^{k+1}, V^k) + \mathcal{N}_{\mathbb{R}_+}(U^{k+1})$, i.e., 
\[\mathbf 0 \in  \nabla_U F(U^{k+1}, V^k) + e_U^k + \mathcal{N}_{\mathbb{R}_+}(U^{k+1}).\]
Along $\{k_\ell\}$, $U^{k_\ell+1} \to U^\star$, and $V^{k_\ell} \to V^\star$ and $\varepsilon_{k_\ell} \to 0$, i.e., $e_U^k \to \mathbf 0$. By Assumption~\ref{assum:block_lipschitz}, $\nabla_U F$ is continuous on $D_\delta$ and 
\[\nabla_U F(U^{k_\ell+1}, V^{k_\ell}) + e_U^{k_\ell} \to \nabla_U F(U^\star, V^\star).\]
\vspace{-1.75em}

By the closedness of the normal cone graph,  $\mathbf{0} \in \nabla_U F(U^\star,V^\star) + \mathcal{N}_{\mathbb{R}_+}(U^\star)$. Since $\mathcal N_{\mathbb R_+} = \partial_{\iota\geq 0}$, $\mathbf  0 \in \nabla_U F(U^\star, V^\star) + \partial \iota_{\geq 0}(U^\star)$, satisfying \eqref{eq:U_station}.

Similarly, for $V$-subproblem, passing the limit along the subsequence, $0 \in \nabla_V F(U^\star, V^\star) + \partial \iota_{\geq 0}(V^\star)$, satisfying \eqref{eq:V_station}.

Since $\mathbf p^{k+1} = Z\boldsymbol{\alpha}^{k+1}$, each $\mathbf p^{k_\ell + 1}$ lies in the column space of $Z$ (denoted $\mathrm{col}(Z)$). Since $\mathrm{col}(Z)$ is closed, passing to the limits, $\mathbf p^\star \in \mathrm{col}(Z)$, i.e., there exists at least one vector $\boldsymbol{\alpha^\star}$ such that $\mathbf p^\star = Z\boldsymbol{\alpha^\star}$. Therefore, the primal feasibility  \eqref{eq:alpha_station3} holds. 
Denote $\Omega_+ := \{(i,j): Y_{\mathrm{sum}}(i,j)>0\}$. For any $(i,j)\in\Omega_+$, let $r$ be the corresponding index in $\mathbf p$. If $\mathbf p^{k_\ell+1}_r\to 0$, then $-Y_{\mathrm{sum}}(i,j)\log \mathbf p_{r}^{k_\ell+1} {\to} +\infty$ and thus $\phi(\mathbf p^{k_\ell+1};\lambda^{k_\ell})\to+\infty$, which contradicts Assumption~\ref{assum:alpha_inexact}, since the feasible set contains points with finite objective value. Hence $\mathbf p_{r}^\star>0$ for all corresponding $(i,j)\in\Omega_+$.
By Assumption~\ref{assum:alpha_inexact}, at each iteration, $\phi(\mathbf p^{k+1}; \lambda^{k}) \leq \inf_{(\boldsymbol{\alpha}, \mathbf p) \in \mathcal F}\phi(\mathbf p; \lambda^k) + \xi_k$ with $\xi_k \to 0$ as $k \to \infty$. 
Define $\lambda^\star := {U^\star} {V^\star}^\top$, $\lambda^{k_{\ell}} \to \lambda^\star$ and $\mathbf p^{k_{\ell} + 1}\to \mathbf p^\star$. Because of the continuity of $\phi$ in $(\mathbf p,\lambda)$ on $\mathcal F$, for all $(\boldsymbol{\alpha}, \mathbf p)\in \mathcal F$, $ \phi(\mathbf p^{\star}; \lambda^\star) \leq \phi(\mathbf p;\lambda^\star)$, i.e., $(\boldsymbol{\alpha}^\star, \mathbf p^\star)$ is an optimal solution for $\boldsymbol\alpha$-subproblem at $\lambda^\star$. Since $\boldsymbol\alpha$-subproblem is convex with affine and box constraint, the KKT condition hold at $(\boldsymbol{\alpha}^\star, \mathbf p^\star)$. Thus there exists a scaled multiplier $\boldsymbol{\omega}^\star$  such that
\vspace{-0.5em}
\begin{align*}
\mathbf 0 &= -\rho Z^\top\boldsymbol{\omega}^\star, \ \ \ \  \mathbf 0 = \mathbf p^\star - Z\boldsymbol{\alpha}^\star,\\
\mathbf 0 &\in \nabla_{\mathbf p} \phi(\mathbf p^\star;\lambda^\star) + \rho \boldsymbol{\omega}^\star + \partial \iota_{[0,1]^{IJ}}(\mathbf p^\star),
\end{align*}
which is the KKT condition \eqref{eq:alpha_station1}-\eqref{eq:alpha_station3}.
Combining this with the established conditions for $A$ and $(U,V)$, we showed that there exists $\boldsymbol{\alpha}^\star$ and $\boldsymbol{\omega}^\star$ such that $(U^\star,V^\star,A^\star,H^\star,\boldsymbol\alpha^\star,\mathbf p^\star,\boldsymbol\omega^\star)$ satisfies the KKT system. Hence, every cluster point is a KKT point of the split problem.
\end{proof}

\section{Numerical Analysis on Synthetic and Real-World Ecological Datasets}
\subsection{Synthetic Experiment} 
\paragraph{Setup}
Synthetic datasets ($I=J=30, F=8$) are generated via a hierarchical Poisson-Binomial process where entries of true latent factor matrices $U_0$ and $V_0$ are drawn from a uniform distribution $\mathcal{U}(0, \gamma)$ with $\gamma=15$. Sparsity is strictly enforced at $80\%$ by zeroing entries where a random draw $r \sim \mathcal{U}(0,1) < 0.8$. To maintain the network structure, any resulting all-zero rows are repopulated with a single non-zero value drawn from the same uniform distribution. The intensity matrix is defined as $\Lambda_{true} = U_0 V_0^{\top}$, and the observation probability matrix $P_{true}$ is derived from $Z \boldsymbol\alpha_0$, where feature vectors $Z$ and weights $\boldsymbol\alpha_0$ are sampled from $\mathcal{U}(0, 1)$ and are row-normalized. Observed interactions $Y$ are sampled through $N \sim \text{Poisson}(\Lambda_{true})$ and $Y \sim \text{Binomial}(N, P_{true})$. These observations are simulated with one replicate ($M = 1$) and a $0.1\%$ missing-data rate applied to mimic truly missing entries. The hyperparameters are set as $\lambda_{UU} = \lambda_{VV} = \lambda_{UV} = 10^{-2}$, $\rho_{UU}^0 = \rho_{VV}^0 = \rho_{UV}^0 = 10^{-3}$ and $p_0 = 0.5$.
\paragraph{Baselines}
To validate the effectiveness of our proposed method, we compare it against two established matrix factorization baselines that represent the state-of-the-art in count data analysis, including (1)  Poisson NMF~\cite{lee2000algorithms}: a classical non-negative matrix factorization that assumes the observed counts follow a Poisson distribution without imperfect detection, and (2) N-Mixture~\cite{fu2019link}: the collaborative filtering approach which pairs a latent Poisson model with a Binomial observation layer without sparsity structures.

\paragraph{Results} To evaluate recovery of the latent factors, we use the permutation-invariant MSE ~\cite{fu2019link}, which accounts for the intrinsic scaling and permutation ambiguities of nonnegative matrix factorization. For $U$, the metric is
\begin{equation}\label{eq:perm_mse}
    \hspace{-0.2cm}\operatorname{MSE}(U){=}
\min_{\pi} \frac{1}{F} \sum_{f=1}^{F} \left\|
\frac{U_0(:,\pi(f))}{\|U_0(:,\pi(f))\|_2}{-}
\frac{ U(:,f)}{\| U(:,f)\|_2} \right\|_2^2,
\end{equation}
where $\pi$ ranges over all permutations of $\{1,\dots,F\}$. The same definition applies to $V$. For $\boldsymbol \alpha$, we use the standard Euclidean error $\operatorname{MSE}(\boldsymbol\alpha)=\frac1R\|\widehat{\boldsymbol\alpha}-\boldsymbol\alpha\|_2^2$. This permutation-normalized evaluation is appropriate here because factor models are only identifiable up to column permutation and scaling~\cite{fu2019link}. 

Figure~\ref{fig:best_mse} shows that the proposed method achieves lower recovery error than both Poisson NMF and the N-mixture baselines over the full optimization trajectory. In particular, $\operatorname{MSE}(U)$, $\operatorname{MSE}(V)$, and $\operatorname{MSE}(\boldsymbol\alpha)$ for the proposed method drop sharply in the first few iterations and stabilize at much smaller values, while the two baselines plateau at higher error levels. This indicates that combining imperfect-detection modeling with structured sparsity yields markedly better identification of the latent factors than either a plain count-factorization model or an imperfect-detection model without sparsity, and incorporating imperfect detection yielded significantly better latent-factor recovery than Poisson NMF alone~\cite{fu2019link}. 

 \vspace{-0.3cm}
  \begin{figure}[ht]
    \centering
    \begin{subfigure}[t]{0.20\textwidth}
        \centering
        \includegraphics[width=\textwidth]{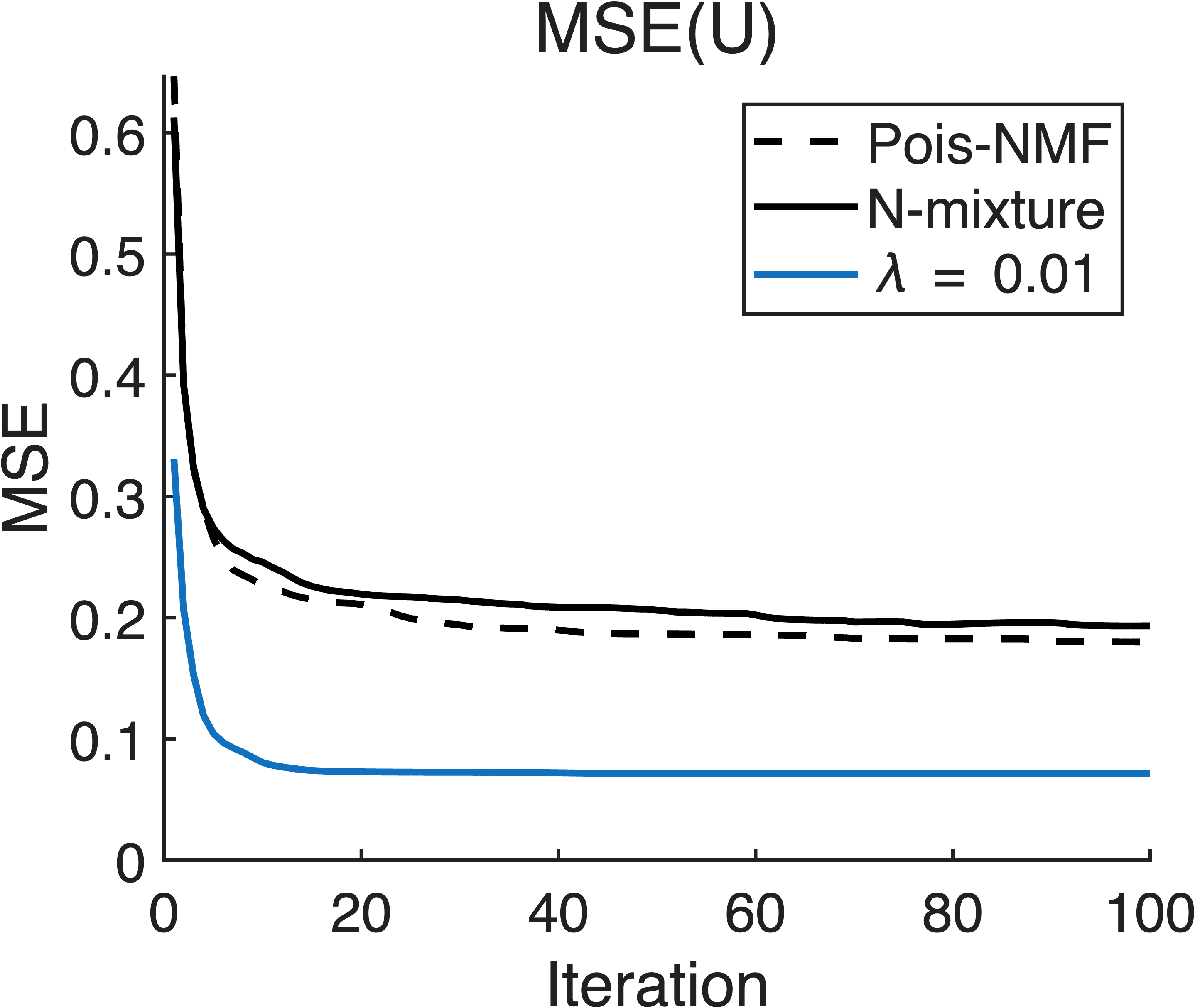}
    \end{subfigure}
    \begin{subfigure}[t]{0.20\textwidth}
        \centering
        \includegraphics[width=\textwidth]{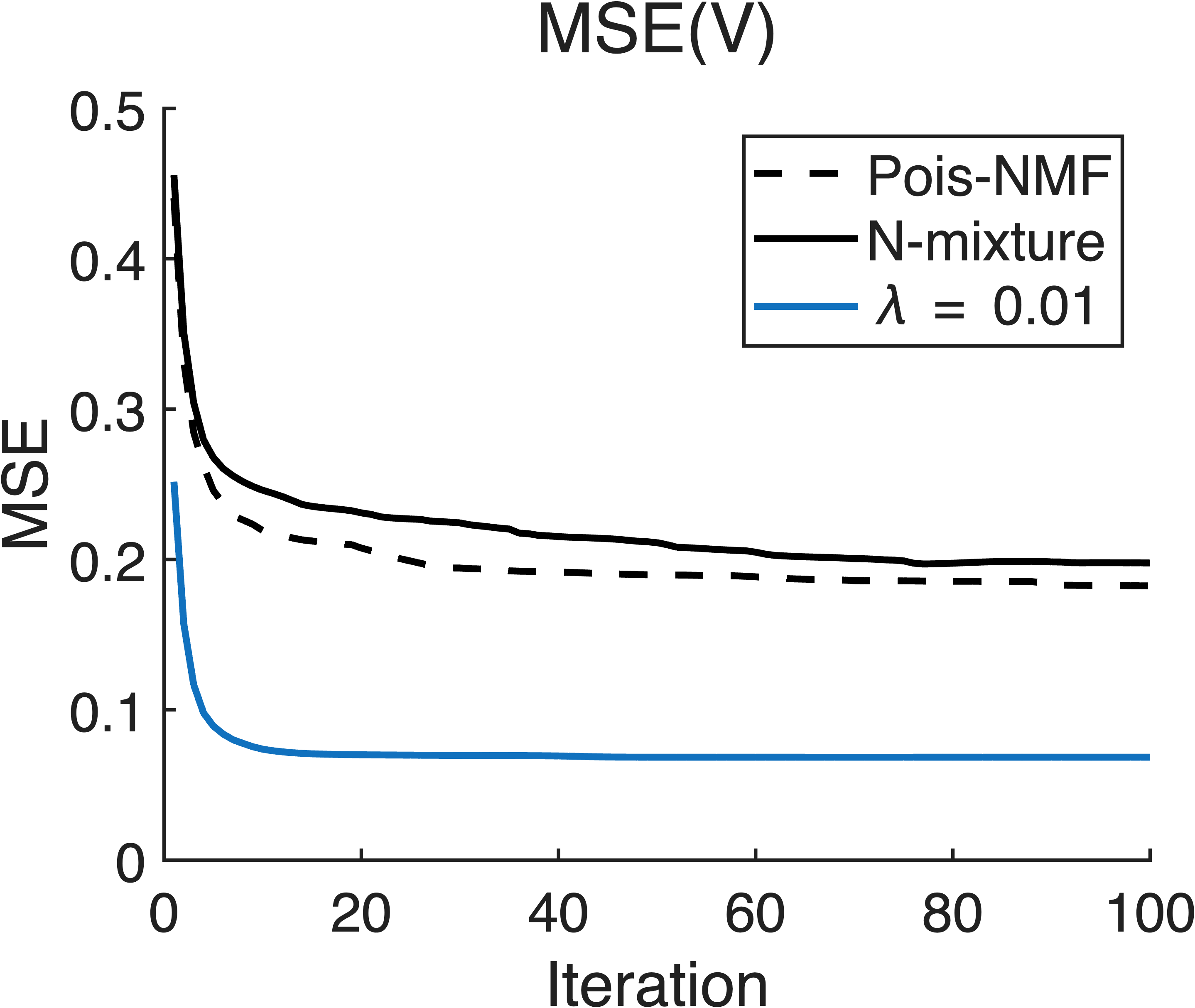}
    \end{subfigure}
    \\
    \begin{subfigure}[t]{0.20\textwidth}
        \centering
        \includegraphics[width=\textwidth]{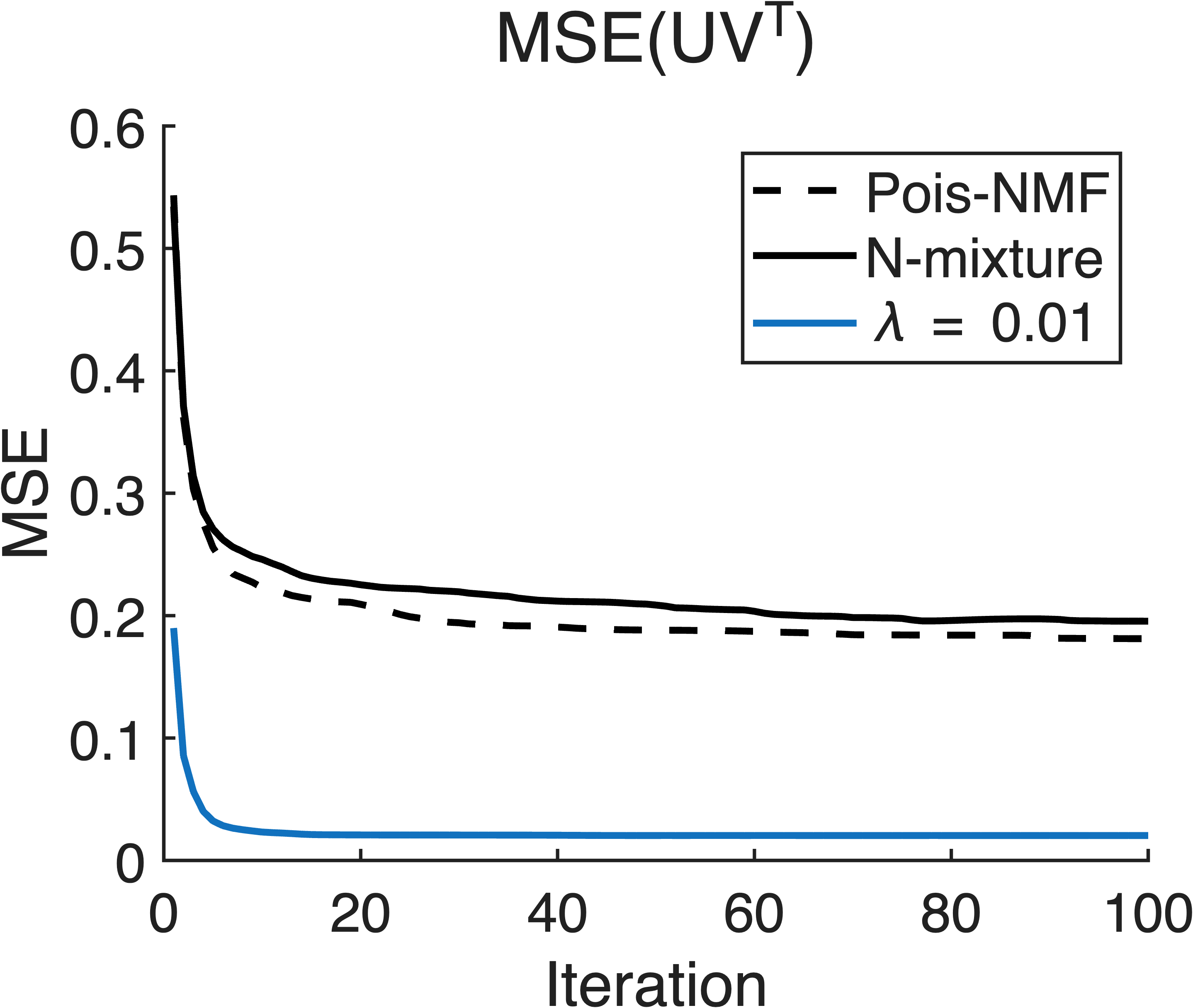}
    \end{subfigure}
    \begin{subfigure}[t]{0.20\textwidth}
        \centering
        \includegraphics[width=\textwidth]{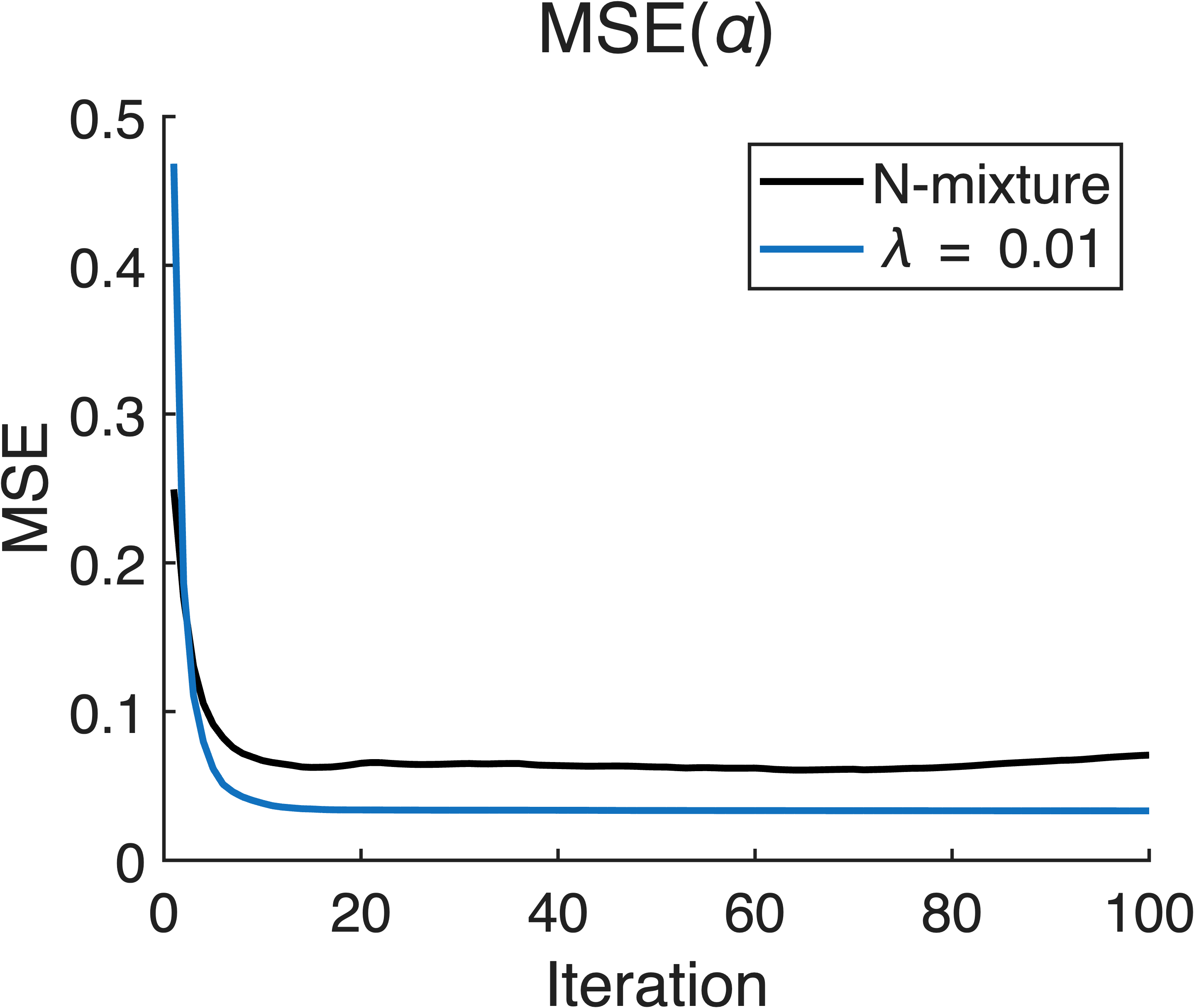}
    \end{subfigure}
    \caption{Recovery Performances for $U, V, UV^\top$, and $\boldsymbol{\alpha}$.}
    \label{fig:best_mse}
    \vspace{-2ex}
\end{figure}

Table~\ref{tab:mse_sim_con} shows that the proposed method attains lower MSEs for $UU^\top$, $VV^\top$, and $UV^\top$ than baselines do. Thus, we see substantial improvements in the recovery of the similarity and connectivity graphs, with the similarity-graph errors reduced by a large factor and the cross-group connectivity errors also cut by more than half relative to both baselines.

\begin{table}[ht]
\vspace{-1.5ex}
\centering
\caption{MSE of Similarity and Connectivity Graph}
\label{tab:mse_sim_con}
\setlength{\tabcolsep}{18pt}
\begin{tabular}{lccc}
\hline
\textbf{Method} & \textbf{UU} & \textbf{VV}  & \textbf{UV}  \\ \hline
\textbf{Proposed}     & \textbf{0.045} & \textbf{0.045}  & \textbf{0.020}  \\
Poisson NMF  & 0.170 & 0.214  & 0.046  \\
NMixture     & 0.181 & 0.186 & 0.056 \\ \hline
\end{tabular}
\vspace{-2ex}
\end{table}
 Figure~\ref{fig:sim_recover_comparison} shows that the similarity matrices recovered by the proposed method visually match the true matrices much more closely. This is because they recover the dominant blocks via sparsity regularization and relative magnitudes via warm start, rather than collapsing to overly diffuse or badly rescaled patterns. The baseline reconstructions, however, are visually distorted in scale and structure. The N-Mixture baseline appears oversparse, keeping only a few very large entries and missing much of the weaker but real similarity pattern. Poisson NMF, on the other hand, produces matrices on a very different scale from the truth, making the recovered similarity structure much less faithful even when some coarse patterns are present.

\begin{figure*}[t]
    \centering
    \vspace{-3ex}
    \begin{subfigure}[t]{0.19\textwidth}
        \centering
        \includegraphics[width=\textwidth]{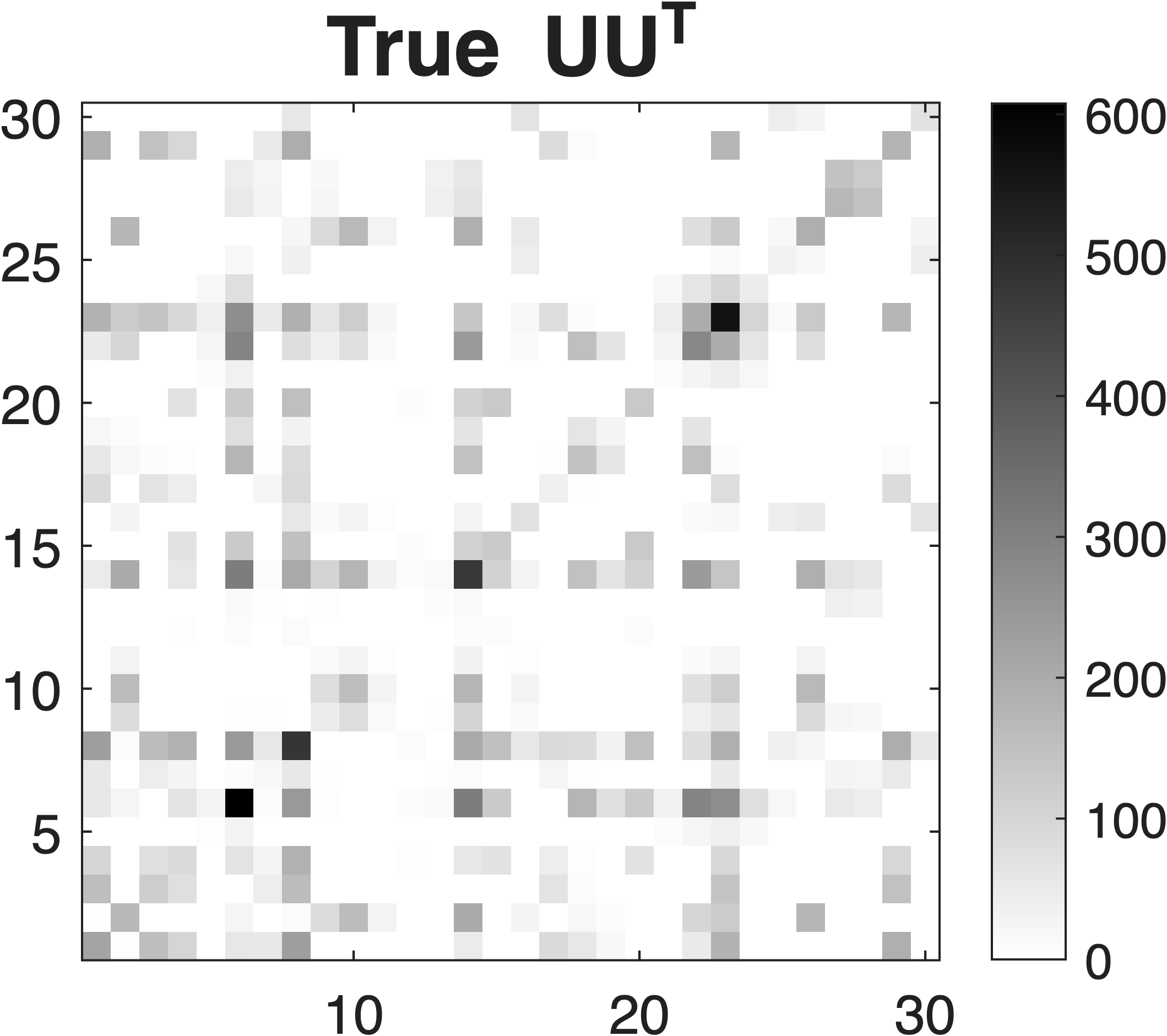}
    \end{subfigure}
    \hfill
    \begin{subfigure}[t]{0.19\textwidth}
        \centering
        \includegraphics[width=\textwidth]{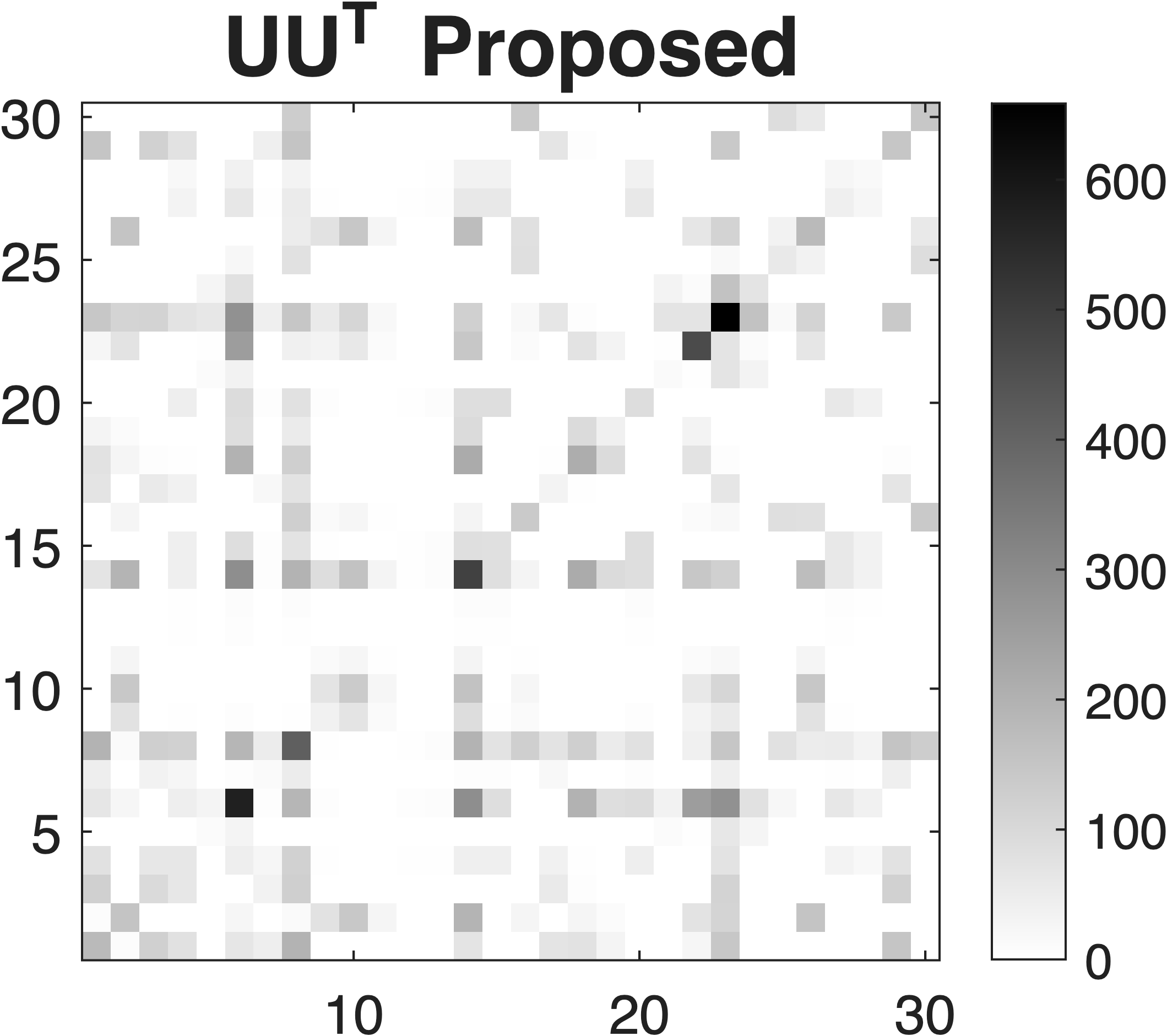}
    \end{subfigure}
    \hfill
    \begin{subfigure}[t]{0.19\textwidth}
        \centering
        \includegraphics[width=\textwidth]{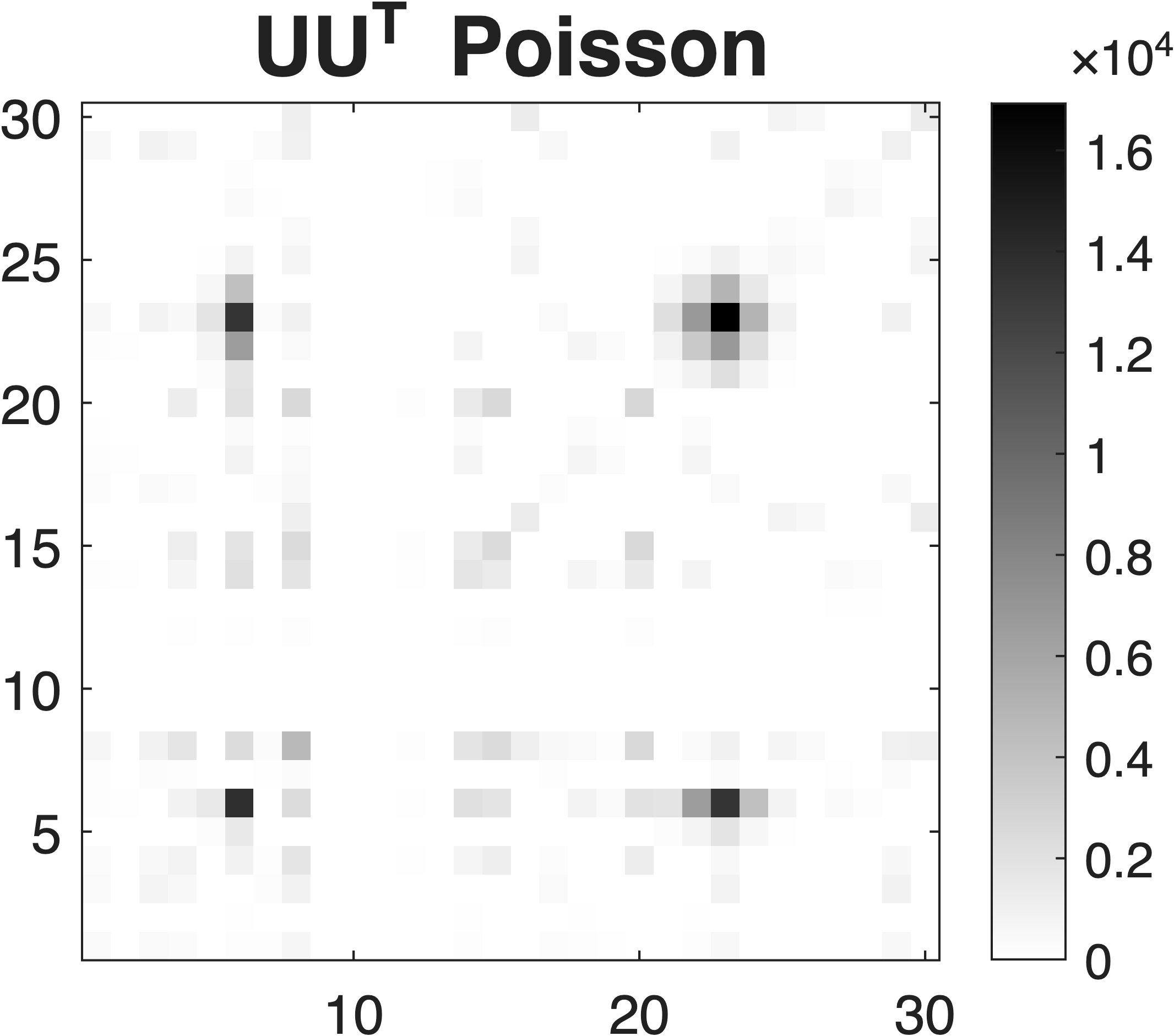}
    \end{subfigure}
    \hfill
    \begin{subfigure}[t]{0.20\textwidth}
        \centering
        \includegraphics[width=\textwidth]{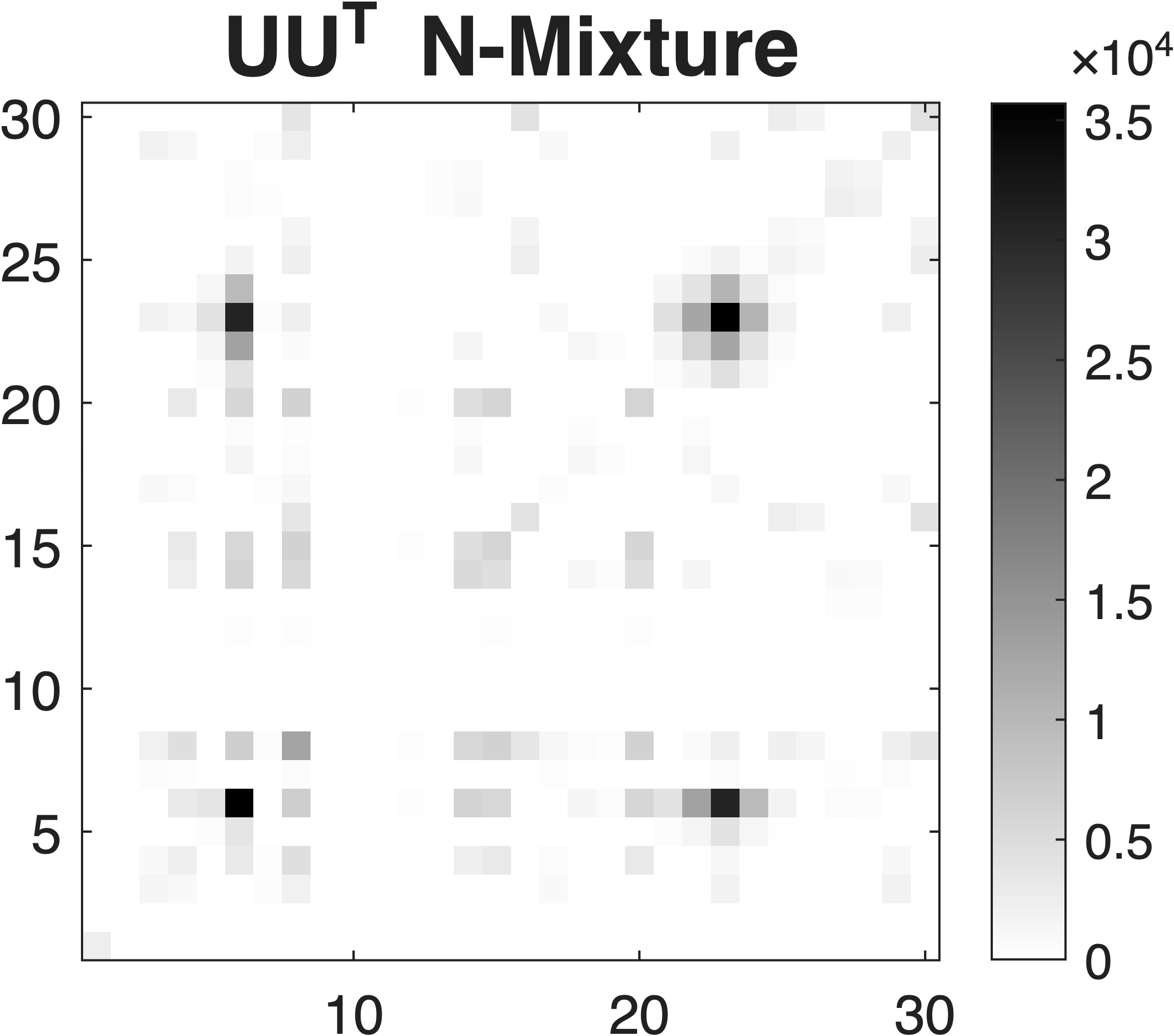}
    \end{subfigure}

    \vspace{0.25ex}

    \begin{subfigure}[t]{0.19\textwidth}
        \centering
        \includegraphics[width=\textwidth]{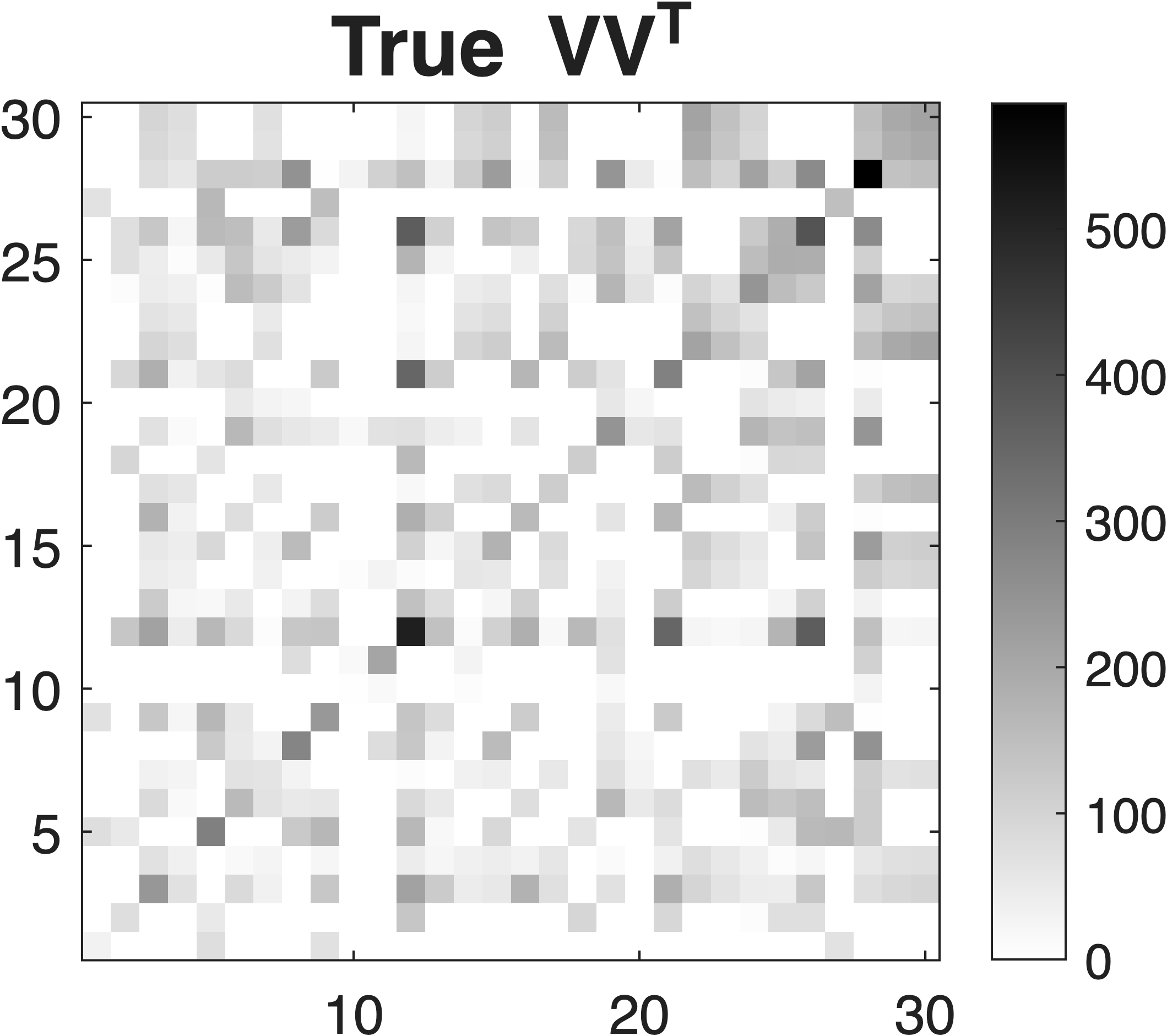}
    \end{subfigure}
    \hfill
    \begin{subfigure}[t]{0.19\textwidth}
        \centering
        \includegraphics[width=\textwidth]{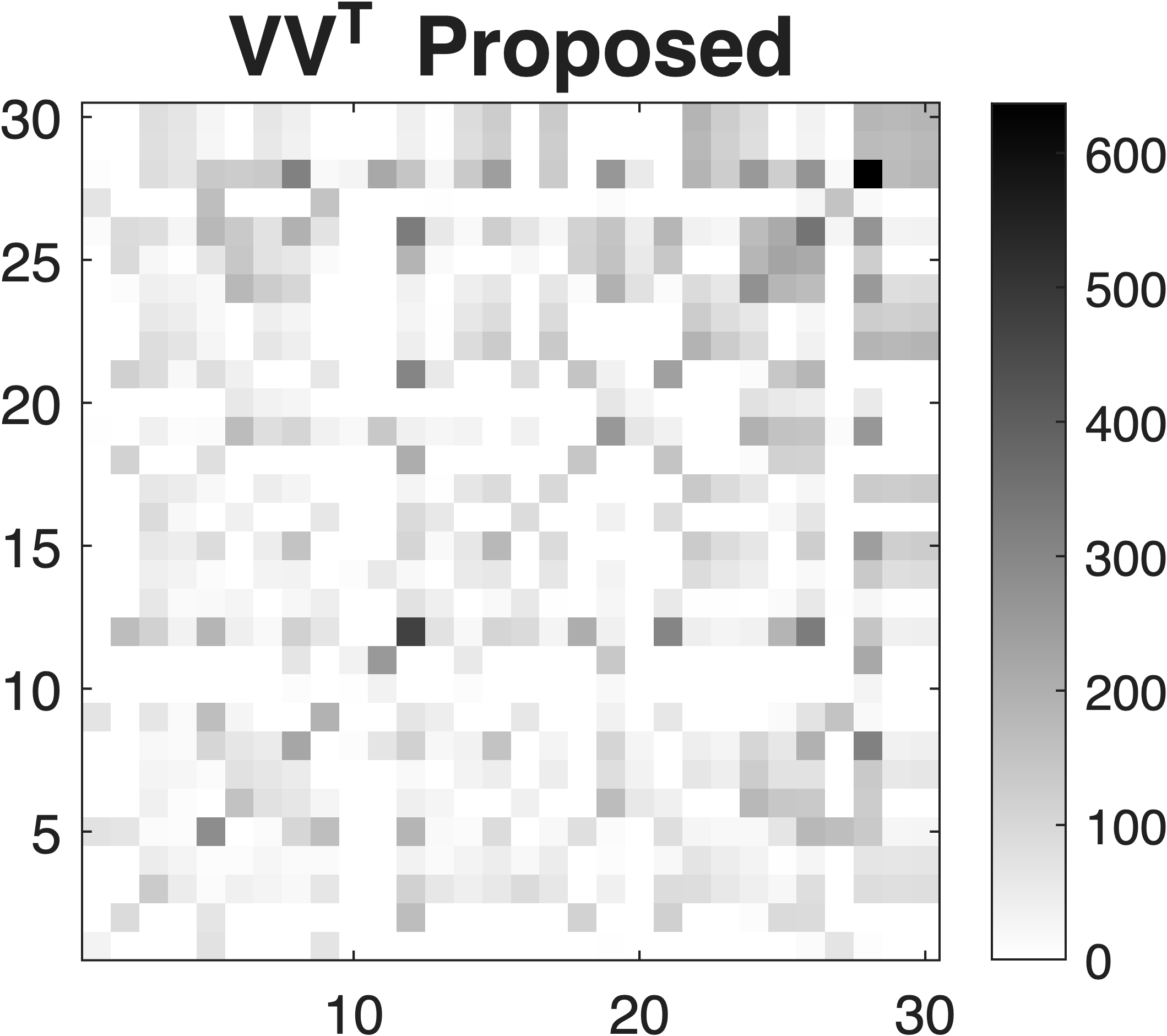}
    \end{subfigure}
    \hfill
    \begin{subfigure}[t]{0.19\textwidth}
        \centering
        \includegraphics[width=\textwidth]{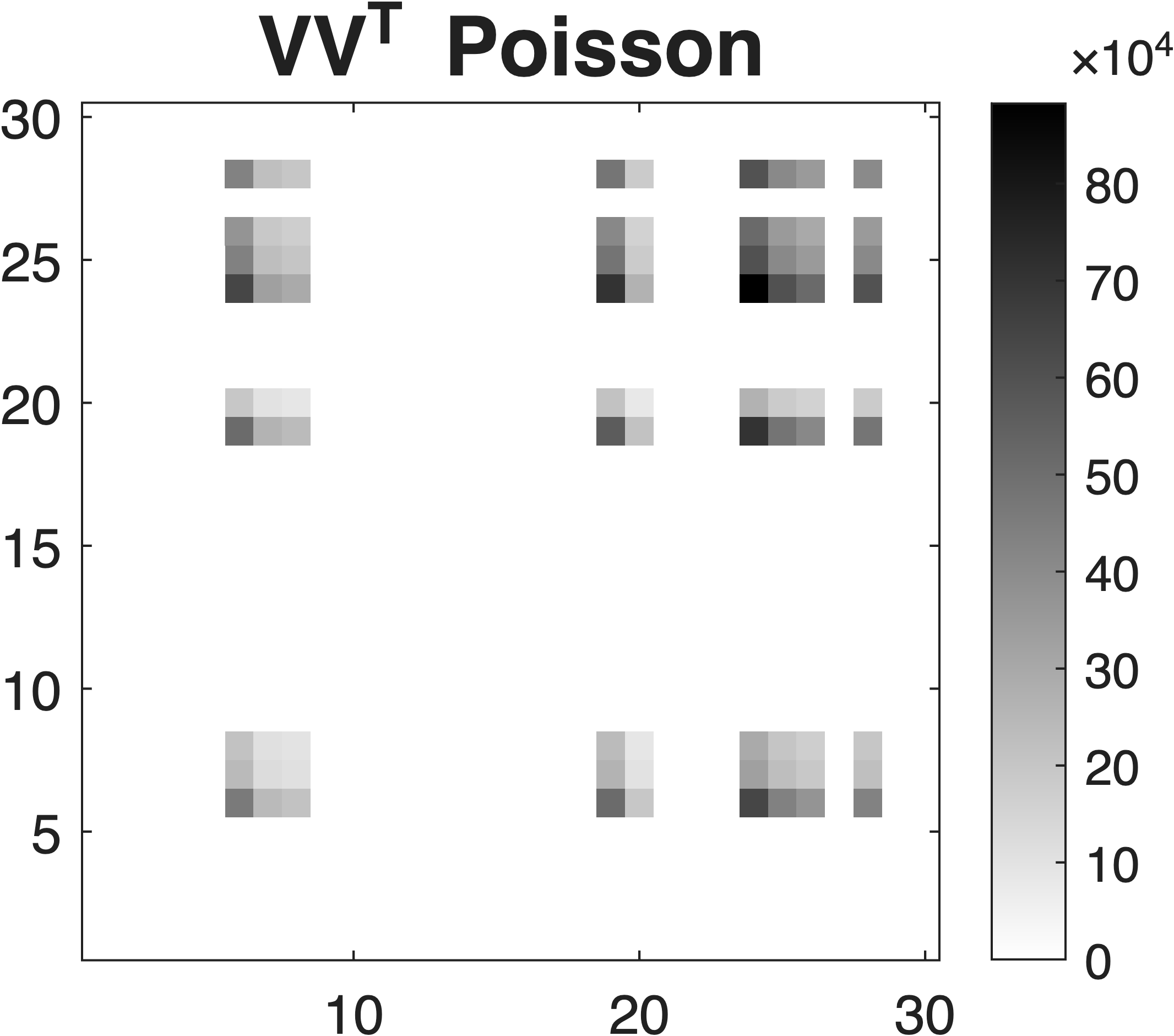}
    \end{subfigure}
    \hfill
    \begin{subfigure}[t]{0.19\textwidth}
        \centering
        \includegraphics[width=\textwidth]{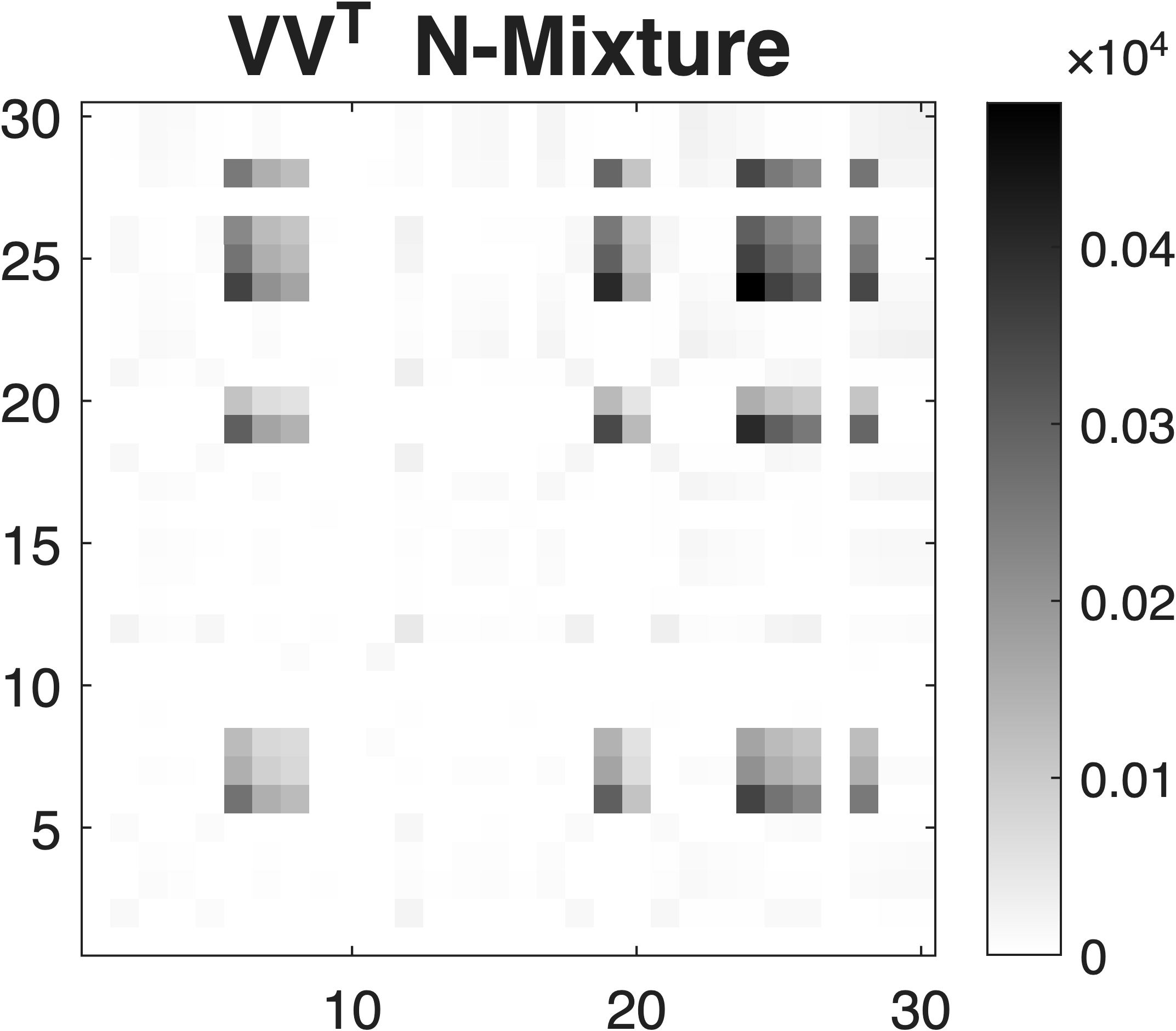}
    \end{subfigure}

    \caption{Comparison of Recovery Performance: True vs. Proposed vs. Poisson NMF vs. N-Mixture.}
    \label{fig:sim_recover_comparison}

     \vspace{-3ex}
\end{figure*}

\vspace{-0.35cm}

\subsection{Real-world Ecological Data Sets}
\subsubsection{Datasets}
We apply the proposed method to three distinct ecological datasets: two networks of plant-pollinator and host-parasite interactions~\cite{fu2019link} and a passive acoustic monitoring dataset of avian vocalizations in an oil palm reserve~\cite{Guerrero2025}. Four unsupervised graph learning baselines within a graph-based reconstruction framework are also included with hyperparameters as in~\cite{guerrero2025soundscape}. Each of them infers a nonnegative weighted adjacency matrix for a Laplacian-based reconstruction of the count matrix. 

\paragraph{Plant-Pollinator (PPI) Network Dataset} This dataset records 688 pollinator species and 148 plant species in the Willamette National Forest in Oregon~\cite{pfeiffer2017plant}. A $50 \times 50$ visitation matrix is constructed by aggregating observations spanning 2011 to 2017, with approximately 42\% of entries being zero and approximately 3.5\% of all possible links observed. Information from flower survey data is used to distinguish between zero entries arising from co-occurring species that did not interact and those corresponding to species pairs that never co-occurred. The latter are treated as missing values and incorporated into the probabilistic model to differentiate between true zeros and under-counted zeros. The feature vector is constructed using pollinator energy requirements, pollinator tongue length, plant soil community, plant life form, plant flower form, and plant exclusion platform type.

\paragraph{Host-Parasite (HPI) Dataset} This dataset includes observations of 22 host species and 87 parasite species from the Sevilleta Long-Term Ecological Research Program~\cite{dallas2017predicting} with 13.5\% of all possible links observed. A $49 \times 19$ interaction matrix is constructed based on visitation counts from the most common 19 hosts and 49 species, with 22\% of the entries being non-zero. Unlike the PPI dataset, observed zeros and missing interactions are not distinguished. Side information includes host life-history traits and phylogenetic information, as well as parasite features such as life-history traits, transmission modes, genus, type, and location.

\paragraph{Location-Sonotype (Soundscape) Dataset} This dataset is collected via passive acoustic monitoring of 17 sites over 10 days in 2021 in Puerto Wilches, Santander, Colombia~\cite{Guerrero2023}. Each recording is processed using an unsupervised segmentation and clustering procedure to extract sonotypes, recurrent acoustic patterns linked to species calls~\cite{Guerrero2023}. The resulting $17 \times 290$ site-by-sonotype count matrix shows strong heterogeneity across sites and sonotypes with right skewness and sparsity. Side information on geographic coordinates and land-cover types is included in the feature vector.

\subsubsection{Setup} The input consists of an observed interaction matrix $Y \in \mathbb{R}^{I \times J \times M}$ and a feature matrix $Z$. 
The latent interaction structure is represented via low-rank factorization $\Lambda = UV^\top$ where for a rank $F$, latent factors are $U \in \mathbb{R}^{I \times F}$ and $V \in \mathbb{R}^{J \times F}$.
An overly large rank leads to slower, less stable optimization due to ill-conditioned updates, while a small rank limits factorization's ability to capture heterogeneous interaction patterns, leading to biased structural estimates. Thus, $F$ is set to balance model stability and expressiveness based on the sizes of the input matrices (listed in Table~\ref{tab:hpi}-\ref{tab:sono}).

Sparsity is imposed on the latent interaction graphs by applying $\ell_{1/2}$ regularization 
with hyperparameters listed in Table~\ref{tab:hpi}-\ref{tab:sono}. The effects of hyperparameter choices are thoroughly discussed in the Appendix~\ref{app:ablation}, and hyperparameters are selected to achieve sparsity patterns consistent with the expected network density. $U^0$ and $V^0$ are initialized according to Appendix~\ref{sec:init}. $\boldsymbol \alpha^0$ is initialized randomly and normalized to ensure the resulting $\boldsymbol{p}^0 \in [0,1]^{IJ}$.  

\subsubsection{Results}
We evaluate recovery performance using the relative root mean squared error (rRMSE) for count prediction accuracy, the area under the receiver operating characteristic curve (AUROC), and the area under the precision–recall curve (AUPRC) for link recovery on observed entries. The proposed method is compared with two count-based baselines (Poisson NMF and NMixture~\cite{fu2019link}) and four graph-learning baselines from~\cite{guerrero2025soundscape}. Results are summarized in Table~\ref{tab:hpi}--\ref{tab:sono}.

 Tables~\ref{tab:hpi}-\ref{tab:sono} demonstrate a  consistent pattern: the proposed method yields the most accurate count recovery, as reflected by the lowest rRMSE, while link recovery, as measured by AUROC and AUPRC, is generally comparable across the count-based factorization methods. NMixture shows lower rRMSE, particularly on the HPI and PPI datasets (Tables~\ref{tab:hpi} and~\ref{tab:ppi}). Poisson NMF performs better than NMixture but remains less accurate than the proposed method in count recovery. These results suggest that, through scale-aware initialization and sparsity control, the proposed regularization scheme better preserves the latent interaction structure, resulting in more accurate count recovery.

The graph-learning baselines perform worse because they are designed for a different reconstruction objective. They infer similarity structures from node features, enabling smooth reconstruction of held-out node features under a Laplacian prior with different graph assumptions. As a result, their outputs are not directly optimized for count reconstruction or link recovery in sparse bipartite count data. Among them, Glasso is not applicable to the HPI dataset because the count matrix contains linearly dependent columns, which violates its modeling assumptions.

\begin{table}[ht]
\vspace{-0.2cm}
\centering
\caption{HPI ($49\times 19$). $\rho_{UU} = \rho_{VV} = \rho_{UV} = 10^{-4}$, $\lambda_{UU} = \lambda_{VV} = \lambda_{UV}= 0.01$,  $p_0 = 0.5$}
\label{tab:hpi}
\setlength{\tabcolsep}{7.5pt}
\begin{tabular}{lcccc}
\hline
\textbf{Method} & \textbf{Rank} & \textbf{rRMSE} 
 & \textbf{AUROC} & \textbf{AUPRC} \\ \hline
\textbf{Proposed}     & 10 & \textbf{0.278}  & \textbf{0.994} & \textbf{0.970} \\
Poisson NMF  & 10 & 0.289  & 0.994 & 0.970 \\
NMixture     & 10 & 1.905  & 0.991 & 0.943 \\
Glasso & - & - & - & - \\
KalofoliasCorrelation & - & 3.732 & 0.809 & 0.701 \\
KalofoliasEuclidean & - & 2.956 & 0.722 & 0.476 \\
KNN &- & 3.863 & 0.727 & 0.573 \\
\hline
\end{tabular}
\end{table}

\begin{table}[ht]
\vspace{-0.3cm}
\centering
\caption{PPI ($50\times 50$). $\rho_{UU} = \rho_{UV} = \rho_{VV} = 10^{-4}$, $\lambda_{UU} = \lambda_{VV} = 0.01$, $\lambda_{UV} = 0.05$, $p_0 = 0.5$}
\label{tab:ppi}
\setlength{\tabcolsep}{7.5pt}
\begin{tabular}{lcccc}
\hline
\textbf{Method} & \textbf{Rank} & \textbf{rRMSE} 
& \textbf{AUROC} & \textbf{AUPRC} \\ \hline
\textbf{Proposed}     & 15 & \textbf{0.512}  & \textbf{0.901} & \textbf{0.879} \\
Poisson NMF  & 15 & 0.610  & 0.887 & 0.865 \\
NMixture     & 15 & 3.382  & 0.887 & 0.869 \\
Glasso & - & 4.016 & 0.634 & 0.562 \\
KalofoliasCorrelation & - & 3.930 & 0.617 & 0.534 \\
KalofoliasEuclidean & - & 3.609 & 0.539 & 0.522 \\
KNN &- & 3.889 & 0.608 & 0.529 \\
\hline
\end{tabular}
\end{table}

\begin{table}[ht]
\vspace{-0.2cm}
\centering
\caption{Soundscape ($17\times 290$) $\rho_{UU} = \rho_{VV} = \rho_{UV} = 10^{-4}$, $\lambda_{UU} = \lambda_{VV} = \lambda_{UV} = 0.01$, $p_0 = 0.5$}
\label{tab:sono}
\setlength{\tabcolsep}{7.5pt}
\begin{tabular}{lcccc}
\hline
\textbf{Method} & \textbf{Rank} & \textbf{rRMSE} 
& \textbf{AUROC} & \textbf{AUPRC} \\ \hline
\textbf{Proposed}     & 15 & \textbf{0.192}  & \textbf{0.985} & \textbf{0.999} \\
Poisson NMF  & 15 & 0.199  & 0.986 & 0.999 \\
NMixture     & 15 & 0.313  & 0.986 & 0.999 \\ 
Glasso & - & 0.936 & 0.674 & 0.904 \\
KalofoliasCorrelation & - & 1.117 & 0.587 & 0.886 \\
KalofoliasEuclidean & - & 0.869 & 0.709 & 0.919 \\
KNN &- & 1.023 & 0.726 & 0.938 \\
\hline
\end{tabular}
\vspace{-3ex}
\end{table}

\section{Conclusion}
We study a count-based sparse bipartite network model under imperfect detection and develop an ADMM-based approach for the associated nonconvex and nonsmooth optimization problem. The structured sparsity regularization promotes controlled sparsity in the induced within-group and cross-group relations, thereby improving the recovery of similarity and connectivity. With the introduced warm start, the method also helps preserve the correct structural scale and mitigates identifiability issues in the factorization. Under standard boundedness and inexactness assumptions, the analysis establishes asymptotic feasibility and stationarity of cluster points. While the present formulation is specialized to count data, the same design principle can be viewed more broadly as a structured bilinear sparse network learning framework.

\bibliographystyle{IEEEtran}

\bibliography{refs} 

@inproceedings{haeffele2014structured,
  title={Structured low-rank matrix factorization: Optimality, algorithm, and applications to image processing},
  author={Haeffele, Benjamin and Young, Eric and Vidal, Rene},
  booktitle={International conference on machine learning},
  pages={2007--2015},
  year={2014},
  organization={PMLR}
}

@article{candes2011robust,
  title={Robust principal component analysis?},
  author={Cand{\`e}s, Emmanuel J and Li, Xiaodong and Ma, Yi and Wright, John},
  journal={Journal of the ACM (JACM)},
  volume={58},
  number={3},
  pages={1--37},
  year={2011},
  publisher={ACM New York, NY, USA}
}

@article{hong2016admm,
  title   = {Convergence analysis of alternating direction method of multipliers for a family of nonconvex problems},
  author  = {Hong, Mingyi and Luo, Zhi-Quan and Razaviyayn, Meisam},
  journal = {SIAM Journal on Optimization},
  volume  = {26},
  number  = {1},
  pages   = {337--364},
  year    = {2016},
  doi     = {10.1137/140990309},
}

@inproceedings{jain2013low,
  title={Low-rank matrix completion using alternating minimization},
  author={Jain, Prateek and Netrapalli, Praneeth and Sanghavi, Sujay},
  booktitle={Proceedings of the forty-fifth annual ACM symposium on Theory of computing},
  pages={665--674},
  year={2013}
}

@inproceedings{bhojanapalli2016dropping,
  title={Dropping convexity for faster semi-definite optimization},
  author={Bhojanapalli, Srinadh and Kyrillidis, Anastasios and Sanghavi, Sujay},
  booktitle={Conference on Learning Theory},
  pages={530--582},
  year={2016},
  organization={PMLR}
}

@inproceedings{park2017non,
  title={Non-square matrix sensing without spurious local minima via the Burer-Monteiro approach},
  author={Park, Dohyung and Kyrillidis, Anastasios and Carmanis, Constantine and Sanghavi, Sujay},
  booktitle={Artificial Intelligence and Statistics},
  pages={65--74},
  year={2017},
  organization={PMLR}
}

@article{park2018finding,
  title={Finding low-rank solutions via nonconvex matrix factorization, efficiently and provably},
  author={Park, Dohyung and Kyrillidis, Anastasios and Caramanis, Constantine and Sanghavi, Sujay},
  journal={SIAM Journal on Imaging Sciences},
  volume={11},
  number={4},
  pages={2165--2204},
  year={2018},
  publisher={SIAM}
}

@inproceedings{zhang2018unified,
  title={A unified framework for nonconvex low-rank plus sparse matrix recovery},
  author={Zhang, Xiao and Wang, Lingxiao and Gu, Quanquan},
  booktitle={International Conference on Artificial Intelligence and Statistics},
  pages={1097--1107},
  year={2018},
  organization={PMLR}
}

@article{bertsimas2023sparse,
  title={Sparse plus low rank matrix decomposition: A discrete optimization approach},
  author={Bertsimas, Dimitris and Cory-Wright, Ryan and Johnson, Nicholas AG},
  journal={Journal of Machine Learning Research},
  volume={24},
  number={267},
  pages={1--51},
  year={2023}
}

@inproceedings{dubois2019fast,
  title={Fast algorithms for sparse reduced-rank regression},
  author={Dubois, Benjamin and Delmas, Jean-Fran{\c{c}}ois and Obozinski, Guillaume},
  booktitle={The 22nd international conference on artificial intelligence and statistics},
  pages={2415--2424},
  year={2019},
  organization={PMLR}
}

@article{huang2016flexible,
  title={A flexible and efficient algorithmic framework for constrained matrix and tensor factorization},
  author={Huang, Kejun and Sidiropoulos, Nicholas D and Liavas, Athanasios P},
  journal={IEEE Transactions on Signal Processing},
  volume={64},
  number={19},
  pages={5052--5065},
  year={2016},
  publisher={IEEE}
}

@inproceedings{gu2016low,
  title={Low-rank and sparse structure pursuit via alternating minimization},
  author={Gu, Quanquan and Wang, Zhaoran Wang and Liu, Han},
  booktitle={Artificial Intelligence and Statistics},
  pages={600--609},
  year={2016},
  organization={PMLR}
}

@inproceedings{shang2014recovering,
  title={Recovering low-rank and sparse matrices via robust bilateral factorization},
  author={Shang, Fanhua and Liu, Yuanyuan and Cheng, James and Cheng, Hong},
  booktitle={2014 IEEE International Conference on Data Mining},
  pages={965--970},
  year={2014},
  organization={IEEE}
}

@book{wright2022high,
  title={High-dimensional data analysis with low-dimensional models: Principles, computation, and applications},
  author={Wright, John and Ma, Yi},
  year={2022},
  publisher={Cambridge University Press}
}

@article{fu2019link,
  title={Link prediction under imperfect detection: Collaborative filtering for ecological networks},
  author={Fu, Xiao and Seo, Eugene and Clarke, Justin and Hutchinson, Rebecca A},
  journal={IEEE Transactions on Knowledge and Data Engineering},
  volume={33},
  number={8},
  pages={3117--3128},
  year={2019},
  publisher={IEEE}
}

@article{boyd2011distributed,
  title={Distributed optimization and statistical learning via the alternating direction method of multipliers},
  author={Boyd, Stephen and Parikh, Neal and Chu, Eric and Peleato, Borja and Eckstein, Jonathan and others},
  journal={Foundations and Trends{\textregistered} in Machine learning},
  volume={3},
  number={1},
  pages={1--122},
  year={2011},
  publisher={Now Publishers, Inc.}
}

@inproceedings{yuanadmm,
  title={ADMM for Nonconvex Optimization under Minimal Continuity Assumption},
  author={Yuan, Ganzhao},
  booktitle={The Thirteenth International Conference on Learning Representations},
  year = {2025}
}

@article{shi2020penalty,
  title={Penalty dual decomposition method for nonsmooth nonconvex optimization—Part I: Algorithms and convergence analysis},
  author={Shi, Qingjiang and Hong, Mingyi},
  journal={IEEE Transactions on Signal Processing},
  volume={68},
  pages={4108--4122},
  year={2020},
  publisher={IEEE}
}

@article{lee2000algorithms,
  title={Algorithms for non-negative matrix factorization},
  author={Lee, Daniel and Seung, H Sebastian},
  journal={Advances in neural information processing systems},
  volume={13},
  year={2000}
}

@article{fernandez2012local,
  title={Local convergence of exact and inexact augmented Lagrangian methods under the second-order sufficient optimality condition},
  author={Fern{\'a}ndez, Dami{\'a}n and Solodov, Mikhail V},
  journal={SIAM Journal on Optimization},
  volume={22},
  number={2},
  pages={384--407},
  year={2012},
  publisher={SIAM}
}

@article{hallak2023adaptive,
  title={An adaptive Lagrangian-based scheme for nonconvex composite optimization},
  author={Hallak, Nadav and Teboulle, Marc},
  journal={Mathematics of Operations Research},
  volume={48},
  number={4},
  pages={2337--2352},
  year={2023},
  publisher={INFORMS}
}

@article{el2025convergence,
  title={Convergence rates for an inexact linearized ADMM for nonsmooth nonconvex optimization with nonlinear equality constraints},
  author={El Bourkhissi, Lahcen and Necoara, Ion},
  journal={Computational Optimization and Applications},
  pages={1--39},
  year={2025},
  publisher={Springer}
}

@inproceedings{liben2003link,
  title={The link prediction problem for social networks},
  author={Liben-Nowell, David and Kleinberg, Jon},
  booktitle={Proceedings of the twelfth international conference on Information and knowledge management},
  pages={556--559},
  year={2003}
}

@article{fathi2023initialization,
  title={Initialization for non-negative matrix factorization: a comprehensive review},
  author={Fathi Hafshejani, Sajad and Moaberfard, Zahra},
  journal={International Journal of Data Science and Analytics},
  volume={16},
  number={1},
  pages={119--134},
  year={2023},
  publisher={Springer}
}

@article{haeffele2019structured,
  title={Structured low-rank matrix factorization: Global optimality, algorithms, and applications},
  author={Haeffele, Benjamin D and Vidal, Ren{\'e}},
  journal={IEEE transactions on pattern analysis and machine intelligence},
  volume={42},
  number={6},
  pages={1468--1482},
  year={2019},
  publisher={IEEE}
}

@article{joseph2009modeling,
  title={Modeling abundance using N-mixture models: the importance of considering ecological mechanisms},
  author={Joseph, Liana N and Elkin, Ch{\'e} and Martin, Tara G and Possingham, Hugh P},
  journal={Ecological Applications},
  volume={19},
  number={3},
  pages={631--642},
  year={2009},
  publisher={Wiley Online Library}
}

@article{royle2004n,
  title={N-mixture models for estimating population size from spatially replicated counts},
  author={Royle, J Andrew},
  journal={Biometrics},
  volume={60},
  number={1},
  pages={108--115},
  year={2004},
  publisher={Oxford University Press}
}

@article{kellner2014accounting,
  title={Accounting for imperfect detection in ecology: a quantitative review},
  author={Kellner, Kenneth F and Swihart, Robert K},
  journal={PloS one},
  volume={9},
  number={10},
  pages={e111436},
  year={2014},
  publisher={Public Library of Science San Francisco, USA}
}

@article{mackenzie2002estimating,
  title={Estimating site occupancy rates when detection probabilities are less than one},
  author={MacKenzie, Darryl I and Nichols, James D and Lachman, Gideon B and Droege, Sam and Andrew Royle, J and Langtimm, Catherine A},
  journal={Ecology},
  volume={83},
  number={8},
  pages={2248--2255},
  year={2002},
  publisher={Wiley Online Library}
}

@article{barker2018reliability,
  title={On the reliability of N-mixture models for count data},
  author={Barker, Richard J and Schofield, Matthew R and Link, William A and Sauer, John R},
  journal={Biometrics},
  volume={74},
  number={1},
  pages={369--377},
  year={2018},
  publisher={Wiley Online Library}
}

@article{dallas2017predicting,
  title={Predicting cryptic links in host-parasite networks},
  author={Dallas, Tad and Park, Andrew W and Drake, John M},
  journal={PLoS computational biology},
  volume={13},
  number={5},
  pages={e1005557},
  year={2017},
  publisher={Public Library of Science San Francisco, CA USA}
}

@article{terry2020finding,
  title={Finding missing links in interaction networks},
  author={Terry, J Christopher D and Lewis, Owen T},
  journal={Ecology},
  volume={101},
  number={7},
  pages={e03047},
  year={2020},
  publisher={Wiley Online Library}
}

@article{bascompte2007networks,
  title={Networks in ecology},
  author={Bascompte, Jordi},
  journal={Basic and applied ecology},
  volume={8},
  number={6},
  pages={485--490},
  year={2007},
  publisher={Elsevier}
}

@article{jordano2003invariant,
  title={Invariant properties in coevolutionary networks of plant--animal interactions},
  author={Jordano, Pedro and Bascompte, Jordi and Olesen, Jens M},
  journal={Ecology letters},
  volume={6},
  number={1},
  pages={69--81},
  year={2003},
  publisher={Wiley Online Library}
}

@article{koren2009matrix,
  title={Matrix factorization techniques for recommender systems},
  author={Koren, Yehuda and Bell, Robert and Volinsky, Chris},
  journal={Computer},
  volume={42},
  number={8},
  pages={30--37},
  year={2009},
  publisher={IEEE}
}

@article{candes2012exact,
  title={Exact matrix completion via convex optimization},
  author={Candes, Emmanuel and Recht, Benjamin},
  journal={Communications of the ACM},
  volume={55},
  number={6},
  pages={111--119},
  year={2012},
  publisher={ACM New York, NY, USA}
}

@article{jordano2016sampling,
  title={Sampling networks of ecological interactions},
  author={Jordano, Pedro},
  journal={Functional ecology},
  volume={30},
  number={12},
  pages={1883--1893},
  year={2016},
  publisher={Wiley Online Library}
}

@article{wang2019global,
  title={Global convergence of ADMM in nonconvex nonsmooth optimization},
  author={Wang, Yu and Yin, Wotao and Zeng, Jinshan},
  journal={Journal of Scientific Computing},
  volume={78},
  number={1},
  pages={29--63},
  year={2019},
  publisher={Springer}
}

@article{xu2012lhalf,
  title={L1/2 regularization: A thresholding representation theory and a fast solver},
  author={Xu, Zongben and Chang, Xiaojun and Xu, Fengmin and Zhang, Hai},
  journal={IEEE Transactions on Neural Networks and Learning Systems},
  volume={23},
  number={7},
  pages={1013--1027},
  year={2012},
  publisher={IEEE}
}

@article{Duarte2021,
  title={Structure and dynamics of mixed-species choruses in a tropical anuran assemblage: insights from network analysis},
  author={Duarte, Marina H. L. and Llusia, Diego and Rodrigues, Samuel S. and Nascimento, Luciana B.},
  journal={Ethology},
  volume={127},
  pages={643--650},
  year={2021},
  doi={10.1111/eth.13198}
}

@article{Dawson2026,
  title={Ecoacoustics for context-rich direct and indirect trophic interaction data and ecological network construction},
  author={Dawson, Will and Evans, Darren M. and Abrahams, Carlos and Kitson, James J. N. and Collins, Larissa and Cuff, Jordan P.},
  journal={Methods in Ecology and Evolution},
  volume={00},
  pages={1--17},
  year={2026},
  doi={10.1111/2041-210x.70288}
}

@article{Suresh2025,
  title={Bugs and bytes: Entomological biomonitoring through the integration of deep learning and molecular analysis for merged community and network analysis},
  author={Suresh, Mukilan Deivarajan and Xin, Tong and Cook, Samantha M. and Evans, Darren M.},
  journal={Agricultural and Forest Entomology},
  volume={27},
  number={1},
  pages={35--49},
  year={2025},
  doi={10.1111/afe.12667}
}

@article{Guerrero2025,
author = {Guerrero, Maria J. and Sánchez-Giraldo, Camilo and Uribe, Cesar and Martinez Arias, Victor and Isaza, Claudia},
year = {2025},
month = {05},
pages = {1255-1272},
title = {Graphical representation of landscape heterogeneity identification through unsupervised acoustic analysis},
volume = {16},
journal = {Methods in Ecology and Evolution},
doi = {10.1111/2041-210X.70041}
}

@inproceedings{Dong2016,
  title={Learning {L}aplacian Matrix in Smooth Graph Signal Representations},
  author={Dong, Xiaowen and Thanou, Dorina and Frossard, Pascal and Vandergheynst, Pierre},
  booktitle={2016 IEEE International Conference on Acoustics, Speech and Signal Processing (ICASSP)},
  pages={6160--6164},
  year={2016},
  organization={IEEE},
  doi={10.1109/ICASSP.2016.7472858}
}

@article{Mateos2019,
  title={Connecting the Dots: Identifying Network Structure via Graph Signal Processing},
  author={Mateos, Gonzalo and Segarra, Santiago and Marques, Antonio G. and Ribeiro, Alejandro},
  journal={IEEE Signal Processing Magazine},
  volume={36},
  number={3},
  pages={16--43},
  year={2019},
  doi={10.1109/MSP.2018.2890143}
}

@article{Guerrero2023,
author = {Guerrero, Maria. J. and Bedoya, Carol. L. and López, José. D. and Daza, Juan. M. and Isaza, Claudia.},
year = {2023},
pages = {1500–1514},
title = {Acoustic animal identification using unsupervised learning.},
volume = {14},
journal = {Methods in Ecology and Evolution},
doi = {10.1111/2041-210X.14103}
}

@inproceedings{guerrero2025soundscape,
    author = {Maria J. Guerrero and Aref Einizade and Jhony H. Giraldo and Víctor M. Martínez-Arias and Claudia Isaza and César A. Uribe},
    title = {Soundscape Connectomes: Unsupervised Graph-Based Approach for Soundscape Mapping},
    booktitle = {NeurIPS 2025 Workshop on AI for Non-Human Animal Communication},
    year = {2025},
    note = {OpenReview}
}

@misc{pfeiffer2017plant,
  author    = {Pfeiffer, Vera Wilder},
  title     = {Plant Pollinator data at HJ Andrews Experimental Forest, 2011 to present},
  year      = {2017},
  note = {{Environmental Data Initiative}},
  doi       = {10.6073/pasta/fdb23c02e2e9b5ed98e03f62da603045},
}

@article{yang2019inexact,
  title={Inexact block coordinate descent algorithms for nonsmooth nonconvex optimization},
  author={Yang, Yang and Pesavento, Marius and Luo, Zhi-Quan and Ottersten, Bj{\"o}rn},
  journal={IEEE Transactions on Signal Processing},
  volume={68},
  pages={947--961},
  year={2019},
  publisher={IEEE}
}

@article{shafique2022imputing,
  title={Imputing missing data in hourly traffic counts},
  author={Shafique, Muhammad Awais},
  journal={Sensors},
  volume={22},
  number={24},
  pages={9876},
  year={2022},
  publisher={MDPI}
}

@article{nakashima2022double,
  title={Double-observer approach with camera traps can correct imperfect detection and improve the accuracy of density estimation of unmarked animal populations},
  author={Nakashima, Yoshihiro and Hongo, Shun and Mizuno, Kaori and Yajima, Gota and Dzefck, Zeun’s CB},
  journal={Scientific Reports},
  volume={12},
  number={1},
  pages={2011},
  year={2022},
  publisher={Nature Publishing Group UK London}
}
\appendix

\subsection{Implementation Details} In all experiments, the proposed algorithm was run for at most 80 or 100 outer iterations. Convergence of the outer loop was declared when both the change in the negative log-likelihood and the changes in the variables $U$, $V$, and $\boldsymbol \alpha$ became sufficiently small; specifically, we used absolute tolerances $10^{-7}$ for both the objective and variable updates. The $U/V$-subproblems were solved by projected gradient descent with Armijo backtracking line search, with Armijo parameter $10^{-5}$, minimum step size $10^{-7}$, and a maximum of 3000 inner iterations. In practice, the inner loop typically converged much earlier. For numerical stability, the reconstructed intensity matrix $UV^\top$ was bounded away from zero by a small positive floor to avoid instability in the Poisson log-likelihood evaluation. The initial step size in the inner projected gradient updates was chosen according to the number of replicates $M$, with smaller initial values used for larger $M$ to account for the sharper objective landscape induced by repeated observations; in particular, we used initial step sizes $10^{-3}$, $10^{-4}$, and $10^{-5}$ for $M=1$, $5$, and $10$ respectively. In both the synthetic and ablation studies, 50 trials are run per study, and the output MSE is averaged across iterations.

\vspace{-0.3cm}

\subsection{Scale-Aware Initialization of U and V} \label{sec:init}

A good initialization is important because our problem is nonconvex and the factorized ADMM updates are inherently local. In nonnegative matrix factorization, the initialization can strongly affect convergence speed, numerical stability, interpretability, and the final solution quality, especially when additional sparsity or structural constraints are imposed~\cite{fathi2023initialization}. Moreover, in low-rank factorization, initialization matters not only for its direction but also for its scale, since the local behavior of the algorithm depends on the magnitude of the initial factors~\cite{bhojanapalli2016dropping}. An improperly scaled starting point may thus place the iterates in an unfavorable local regime. In our model, this issue is pronounced because the latent factors $U$ and $V$ are coupled with the detection probabilities $\boldsymbol{p} \in (0,1)$. If the initial factors are too small, the model compensates by pushing $\boldsymbol{p}$ toward its upper bound, leading to poor estimation of $\boldsymbol\alpha$. We adopt a scale-aware initialization that aims to place the iterates in a favorable basin of attraction, where the initial low-rank factors are extracted from a proxy matrix that already approximates the latent signal's scale.

For $Y_{\mathrm{sum}} := \sum_m Y^{(m)}$, $\mathbb E\left[Y_{\mathrm{sum}}(i,j)\right]=Mp_{ij}\lambda_{ij}$ with $\lambda _{ij} = \mathbf u_i^\top \mathbf v_j$. Assuming $p_{ij}$ and $\lambda_{ij}$ weakly correlated, $\mathbb E\left[p_{ij}\lambda_{ij}\right]=\mathbb E\left[p_{ij}\right]\mathbb E\left[\lambda_{ij}\right]+\mathrm{Cov}\left(p_{ij},\lambda_{ij}\right) \approx \mathbb E\left[p_{ij}\right]\mathbb E\left[\lambda_{ij}\right]$. We approximate the detection probability by a global average level $p_0$ where $p_0 \approx {\mathbb E[p}_{ij}]$. 
Then $\mathbb E[\lambda_{ij}]\approx {{\mathbb E}[Y_{\mathrm{sum}}(i,j)}]/{(Mp_0)}$. Since the factorization $UV^\top$ is inherently scale-ambiguous, $p_0$ serves as an approximate global scaling parameter and is sufficient for warm start. Given the surrogate matrix
$\widehat{\Lambda} := {Y_{\mathrm{sum}}}/{(M p_0)}$, we compute a rank-$F$ nonnegative low-rank approximation
by solving
$\min_{\operatorname{rank}(X)\le F}
\|\widehat{\Lambda} - X\|_F^2$.
By the Eckart–Young–Mirsky theorem, if
$\widehat{\Lambda}
= \sum_{f=1}^{\min(I,J)} \sigma_f u_f v_f^\top$ with
$\sigma_1 \ge \sigma_2 \ge \cdots$,
then the best rank-$F$ approximation in the Frobenius norm is
$X_F = \sum_{f=1}^F \sigma_f u_f v_f^\top
= U_F S_F V_F^\top$. To obtain compatible factor matrices with approximate scales, we initialize
$U^{0} = |U_F| S_F^{1/2}$ and $
V^{0} = |V_F| S_F^{1/2}$ 
with the absolute values taken entrywise. 
The solution is not unique when $\sigma_f$'s are not distinct; however, since this is only an initialization step, the non-uniqueness does not affect the subsequent optimization.

\vspace{-0.3cm}

\subsection{Ablation Study} \label{app:ablation}
To evaluate the algorithm's sensitivity to hyperparameters and the number of replicates, we conduct four ablation studies by varying a single hyperparameter while keeping the others fixed. For convenience, we keep penalty $\rho_X$ and sparsity weight $\lambda_X$ the same across $X \in \{UU, VV, UV\}$, i.e.,   $\rho^0 = \rho_{UU}^0 = \rho_{VV}^0 = \rho_{UV}^0$ and $\lambda = \lambda_{UU} = \lambda_{VV} = \lambda_{UV}$.
Recovery is evaluated by permutation-invariant MSE \eqref{eq:perm_mse}, which is consistent with MSE in the synthetic study section.
\paragraph{Ablation on $M$} We fix $p_0 = 0.5$, $\rho^0 = 10^{-2}$, and $\lambda = 10^{-2}$ and vary only $M \in \{1, 5, 10\}$. Figure~\ref{fig:ablation_k_mse} shows that moving from a single replicate to multiple replicates substantially improves the recovery. $M = 5$ gives the smallest final MSE for $U, V$ while $M = 10$ yields best recovery for $\boldsymbol \alpha$. This suggests that additional replicates improve the model's overall identifiability, but the improvement is not strictly monotonic across all variables. With more replicates, we observe the same latent interaction intensity and detection probability across replicates, reducing estimation variability and improving the identification of the Poisson-Binomial model. Since the problem is nonconvex, changing $M$ indeed changes the geometry of the objective, and the algorithm may converge to different stationary points, leading to varied recovery performance in $\boldsymbol \alpha$- and $U/V$- blocks, which explains why the overall benefit of additional replicates is accompanied by non-monotone behavior of final MSE for $\boldsymbol \alpha$, $U$, and $V$.

\begin{figure*}[ht]
 \vspace{-1ex}
    \centering
    \begin{subfigure}[t]{0.325\textwidth}
        \centering
        \includegraphics[width=0.49\textwidth]{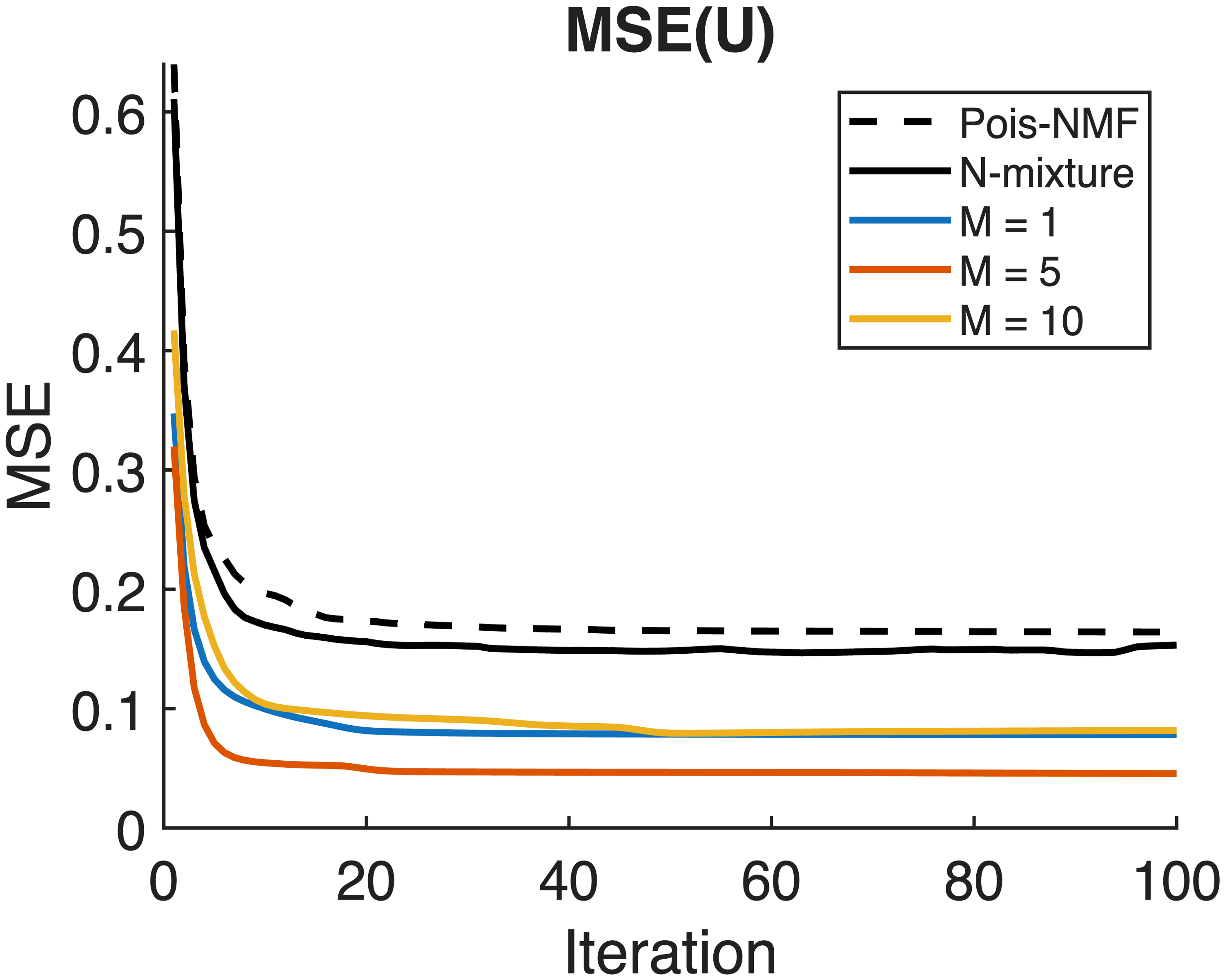}\hfill
        \includegraphics[width=0.49\linewidth]{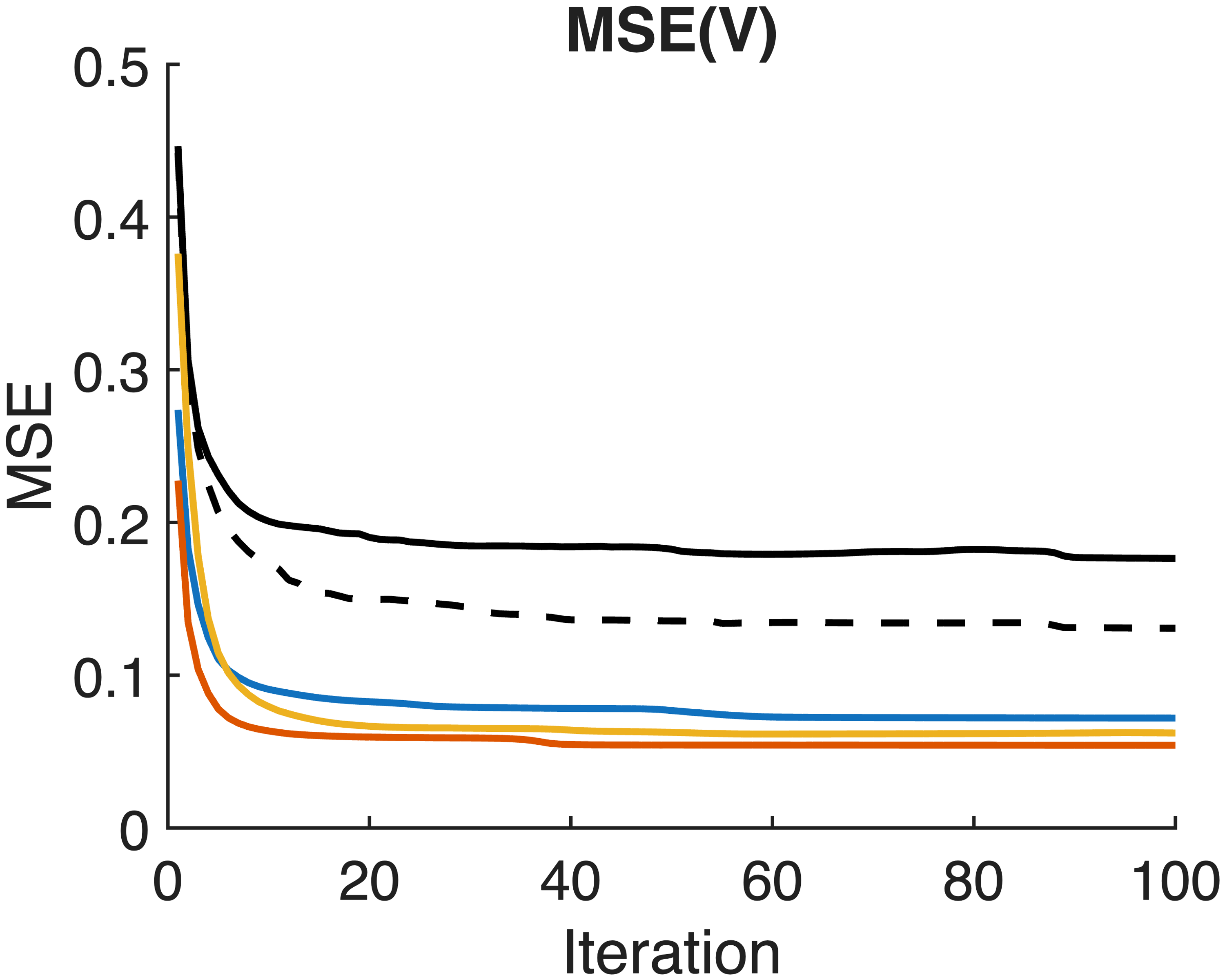} 

        \vspace{1ex}
        \includegraphics[width=0.49\textwidth]{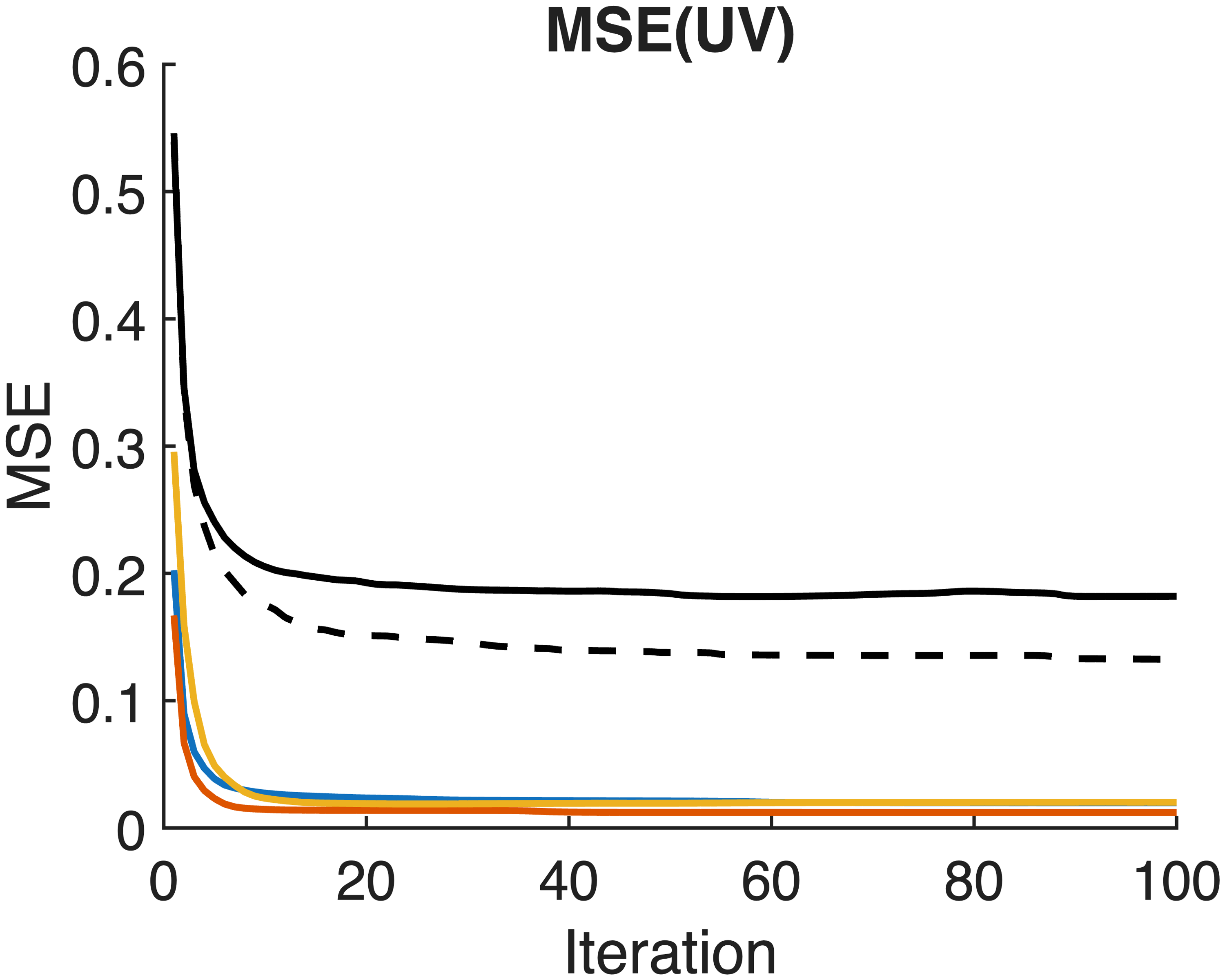}\hfill
        \includegraphics[width=0.49\textwidth]{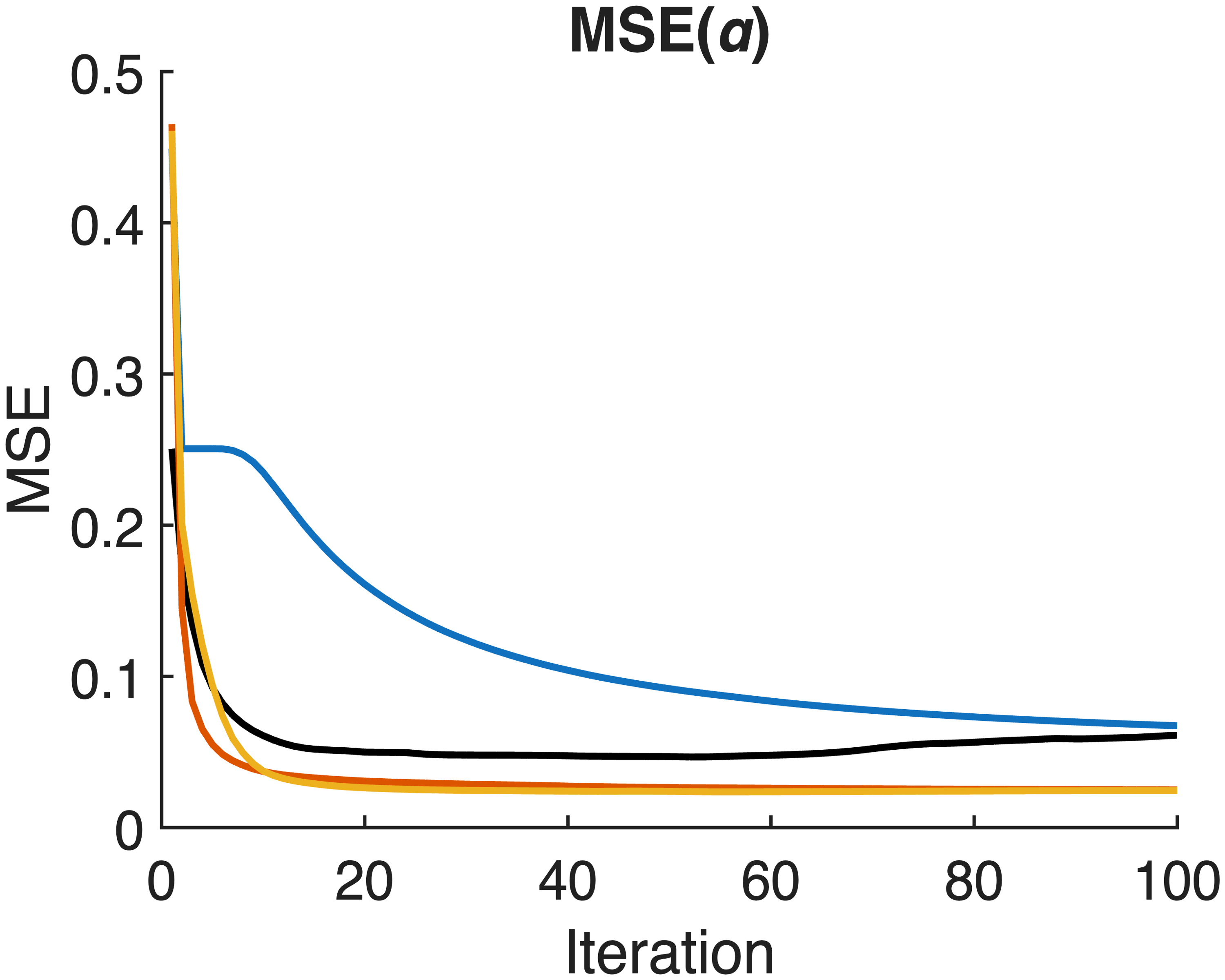}
        \caption{Recovery Performance across $M$'s.}
        \label{fig:ablation_k_mse}
    \end{subfigure}
    \hfill
    \begin{subfigure}[t]{0.325\textwidth}
        \centering
        \includegraphics[width=0.49\textwidth]{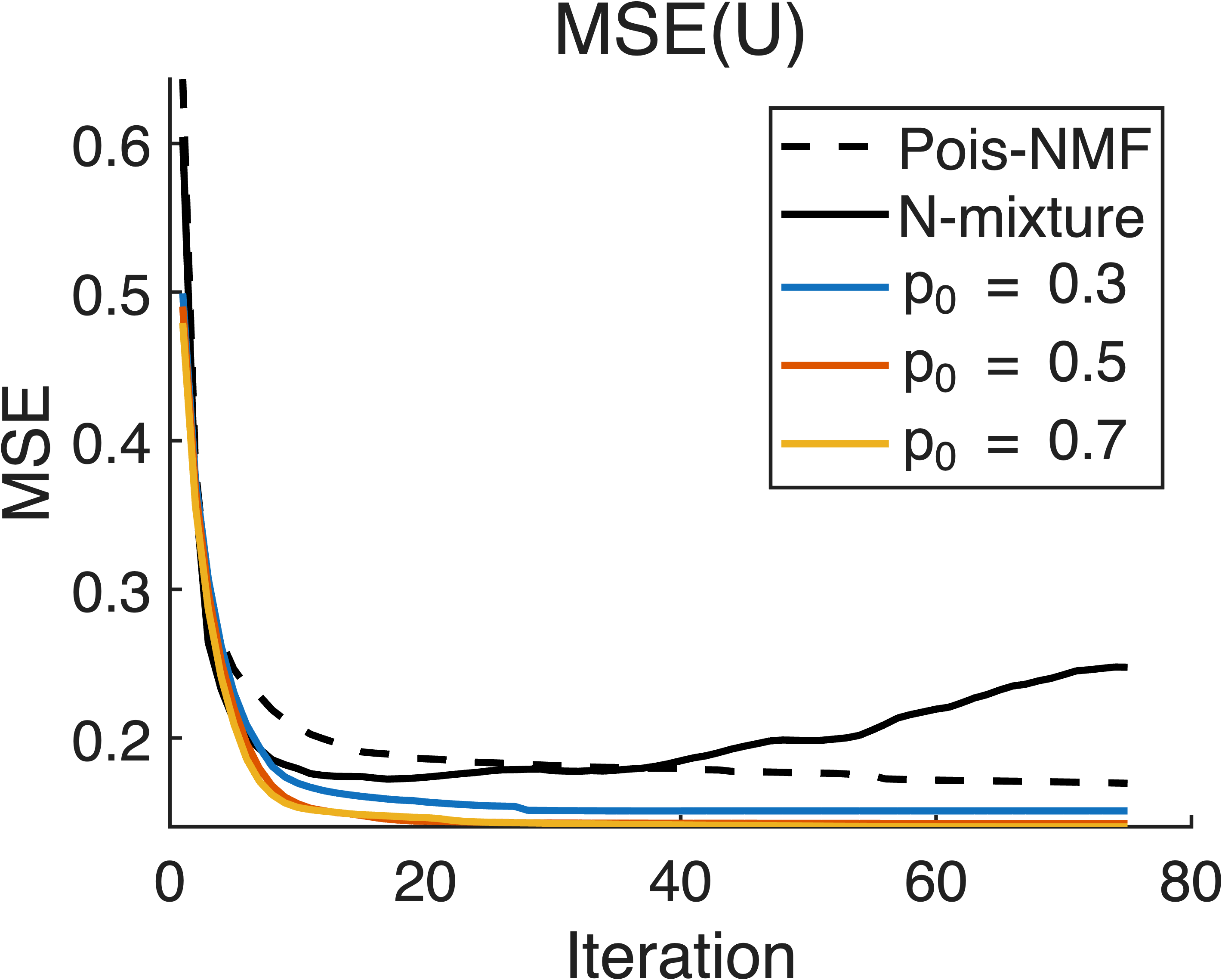}\hfill
        \includegraphics[width=0.49\textwidth]{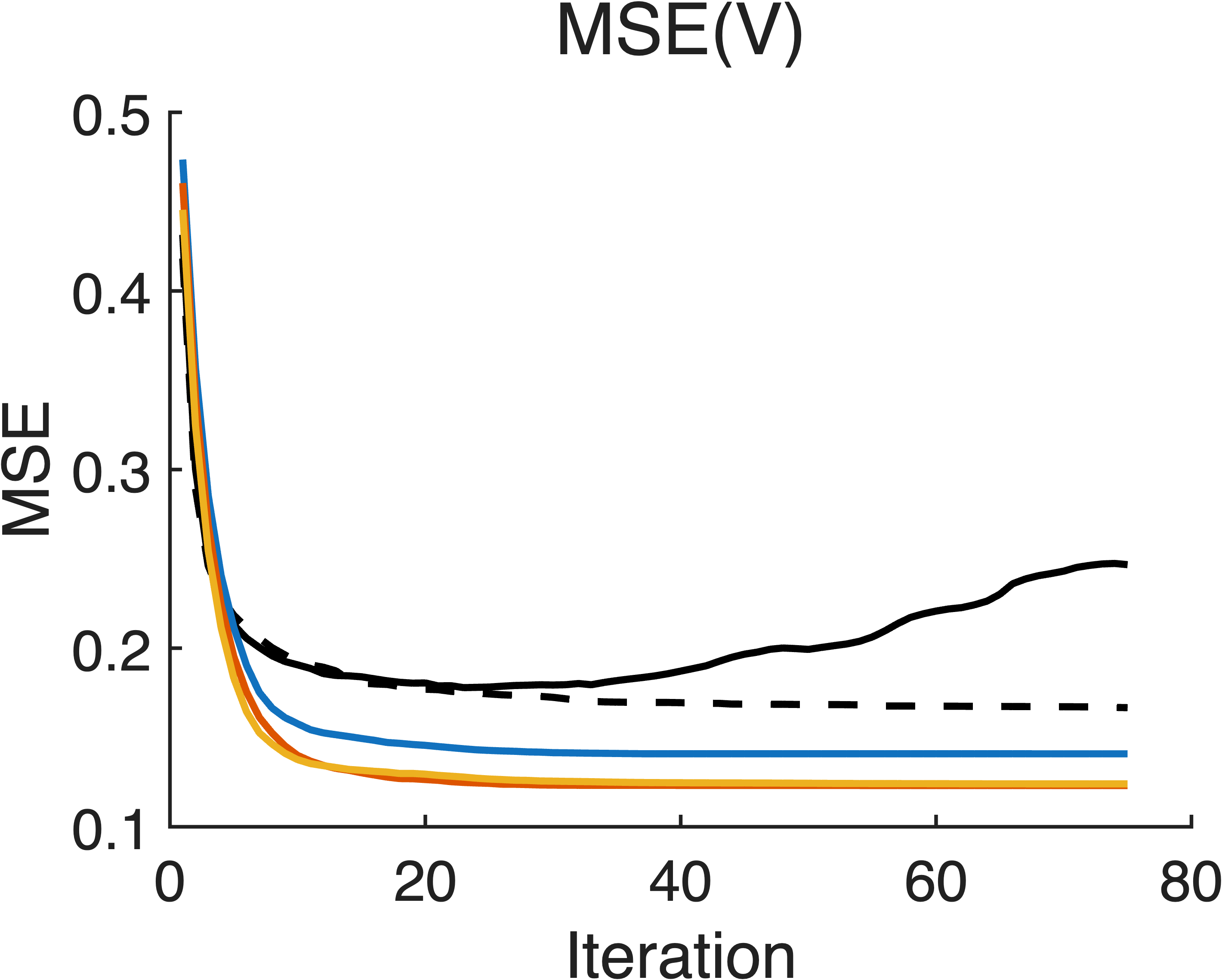}
        
        \vspace{1ex}   
        \includegraphics[width=0.49\textwidth]{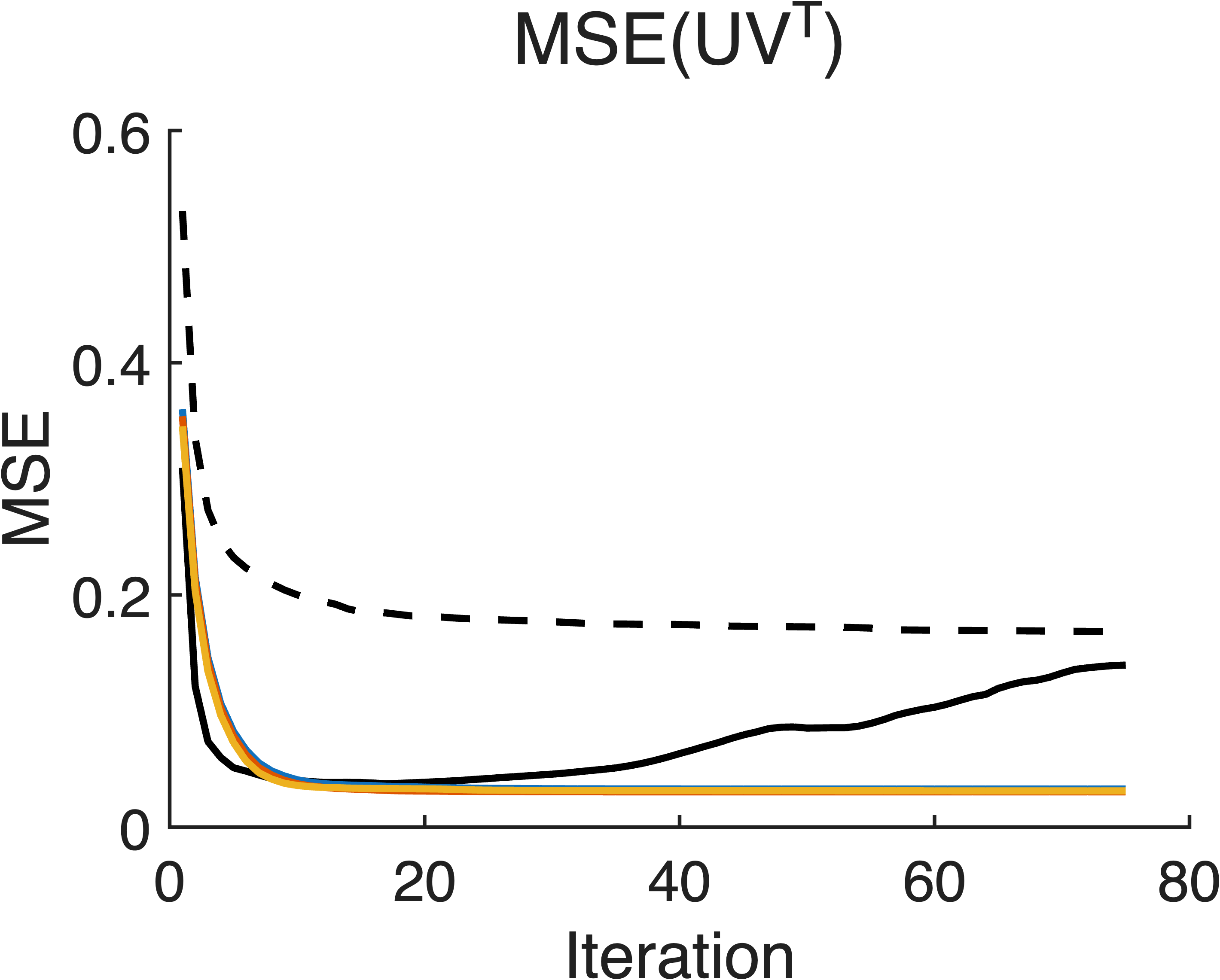}\hfill
        \includegraphics[width=0.49\textwidth]{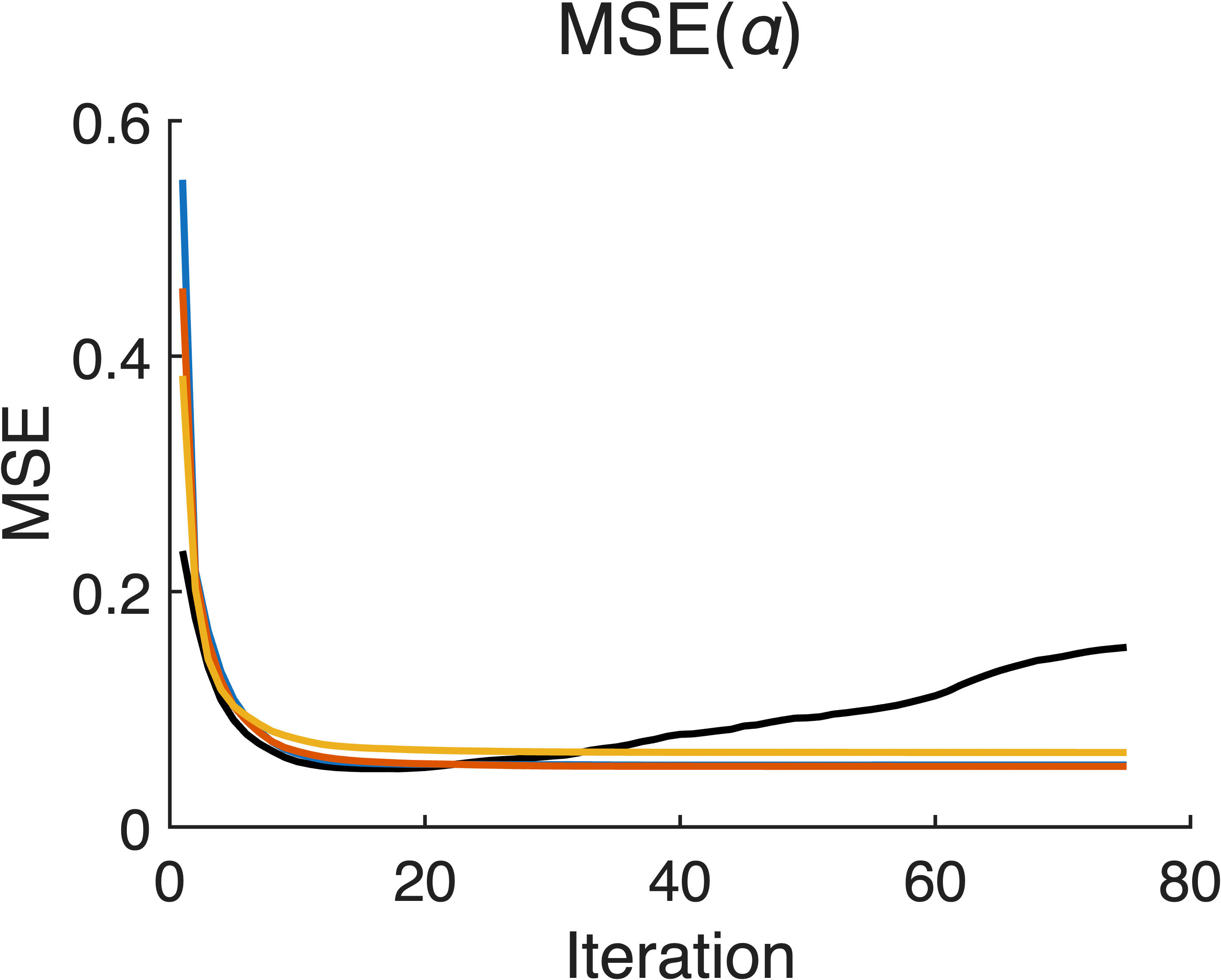}
        \caption{Recovery Performance across $p_0$'s.}
        \label{fig:ablation_p_mse}
    \end{subfigure}
    \hfill
    \begin{subfigure}[t]{0.325\textwidth}
        \centering
        \includegraphics[width=0.49\textwidth]{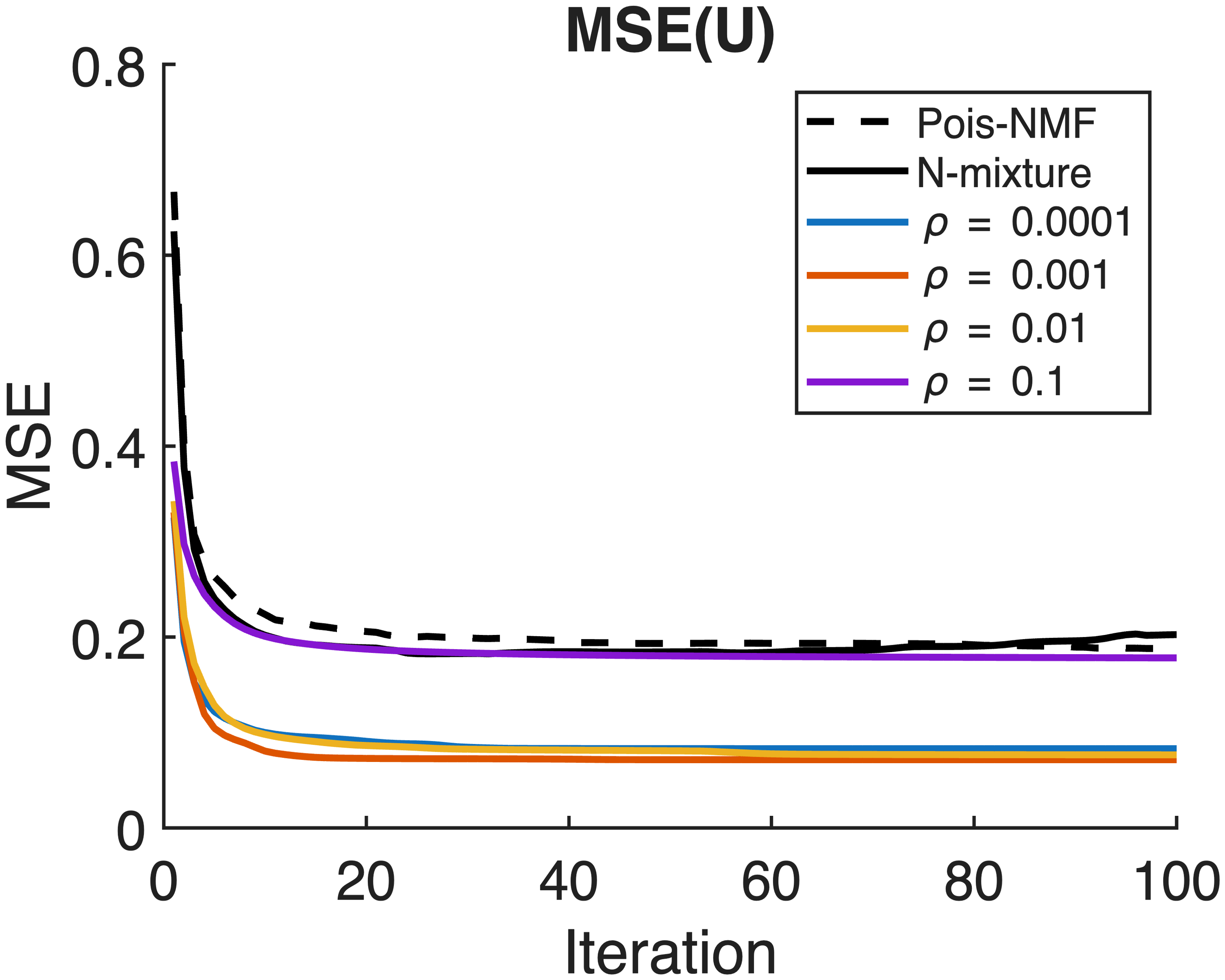}\hfill
        \includegraphics[width=0.49\textwidth]{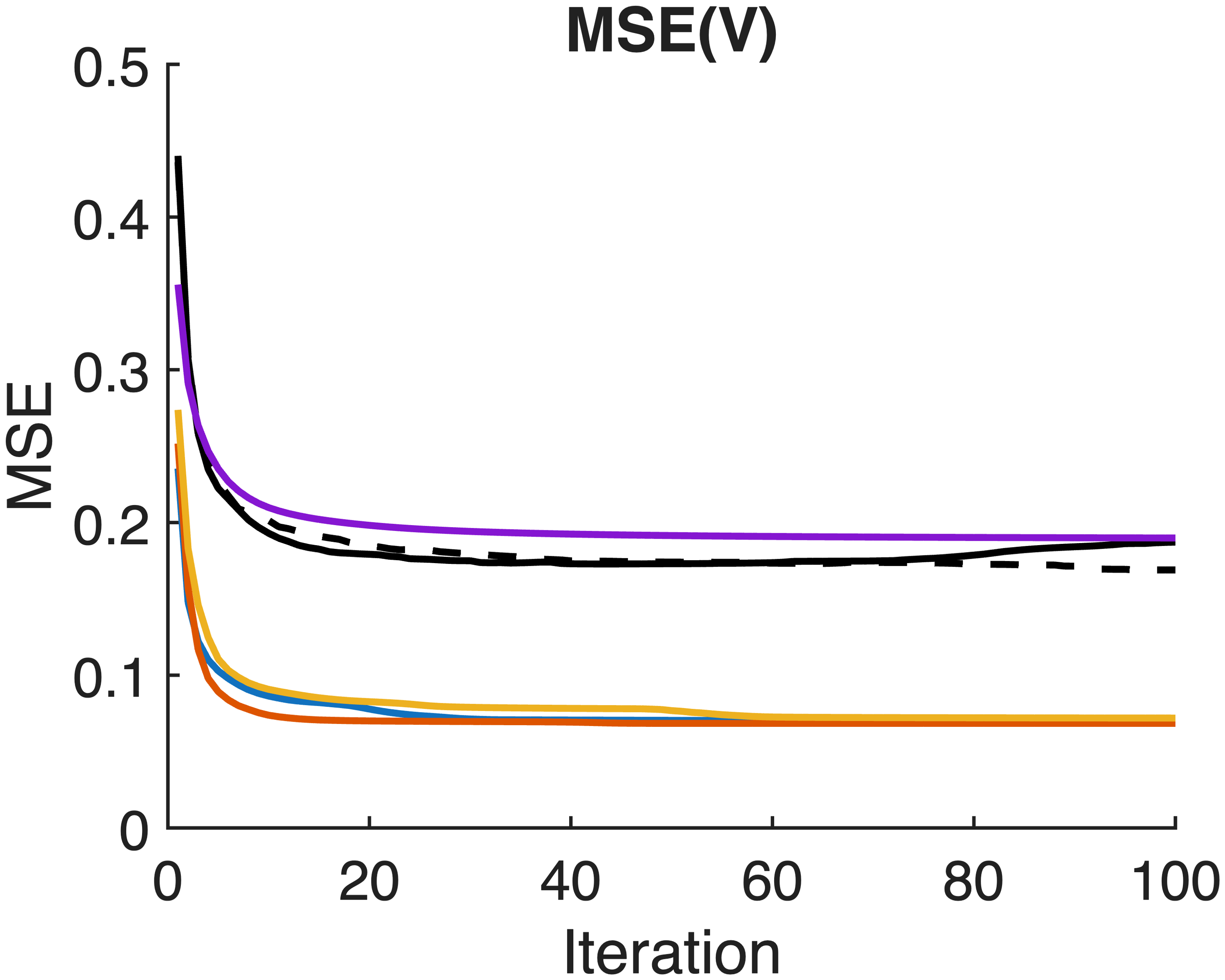}
        
        \vspace{1ex}
        \includegraphics[width=0.49\textwidth]{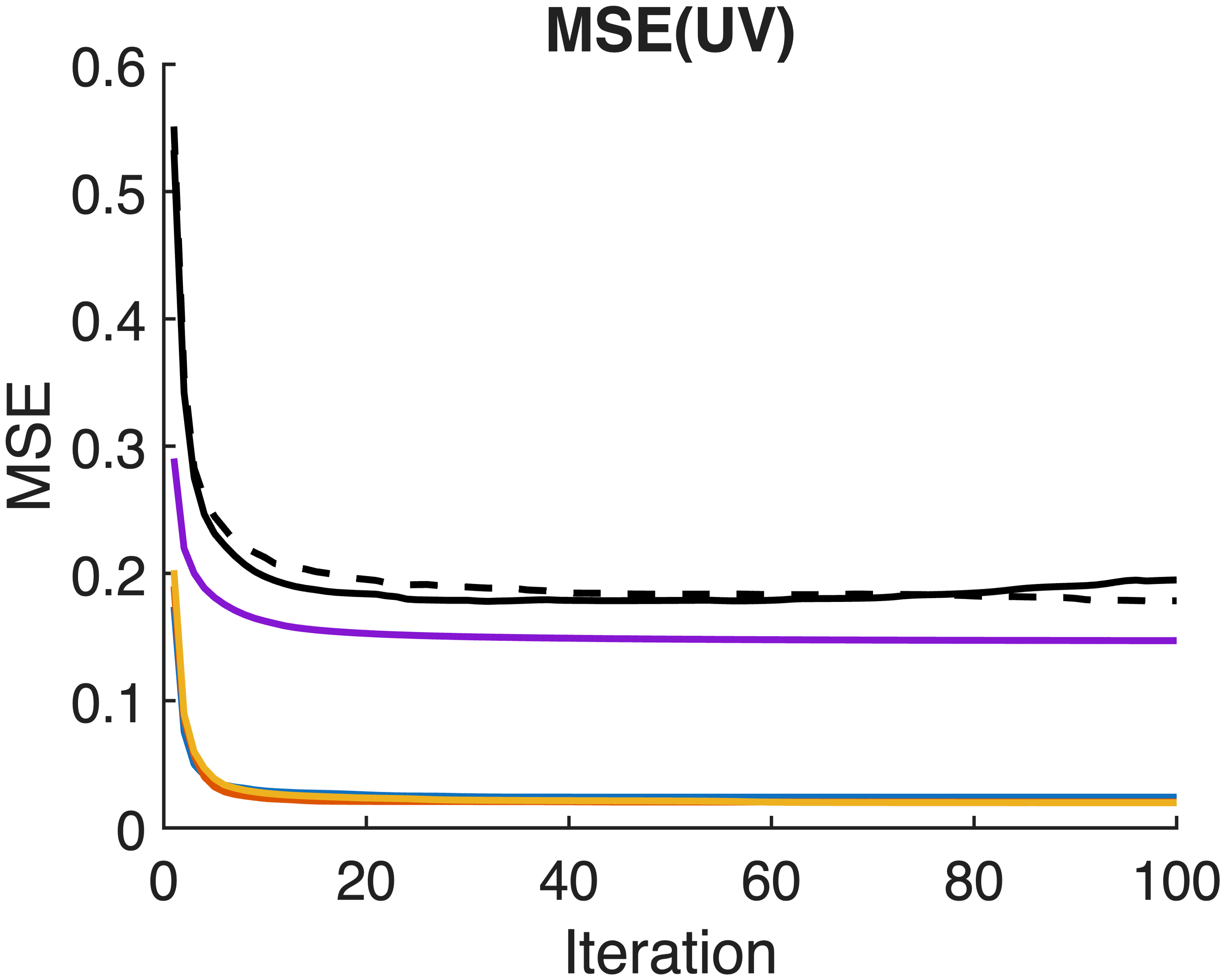}\hfill
        \includegraphics[width=0.49\textwidth]{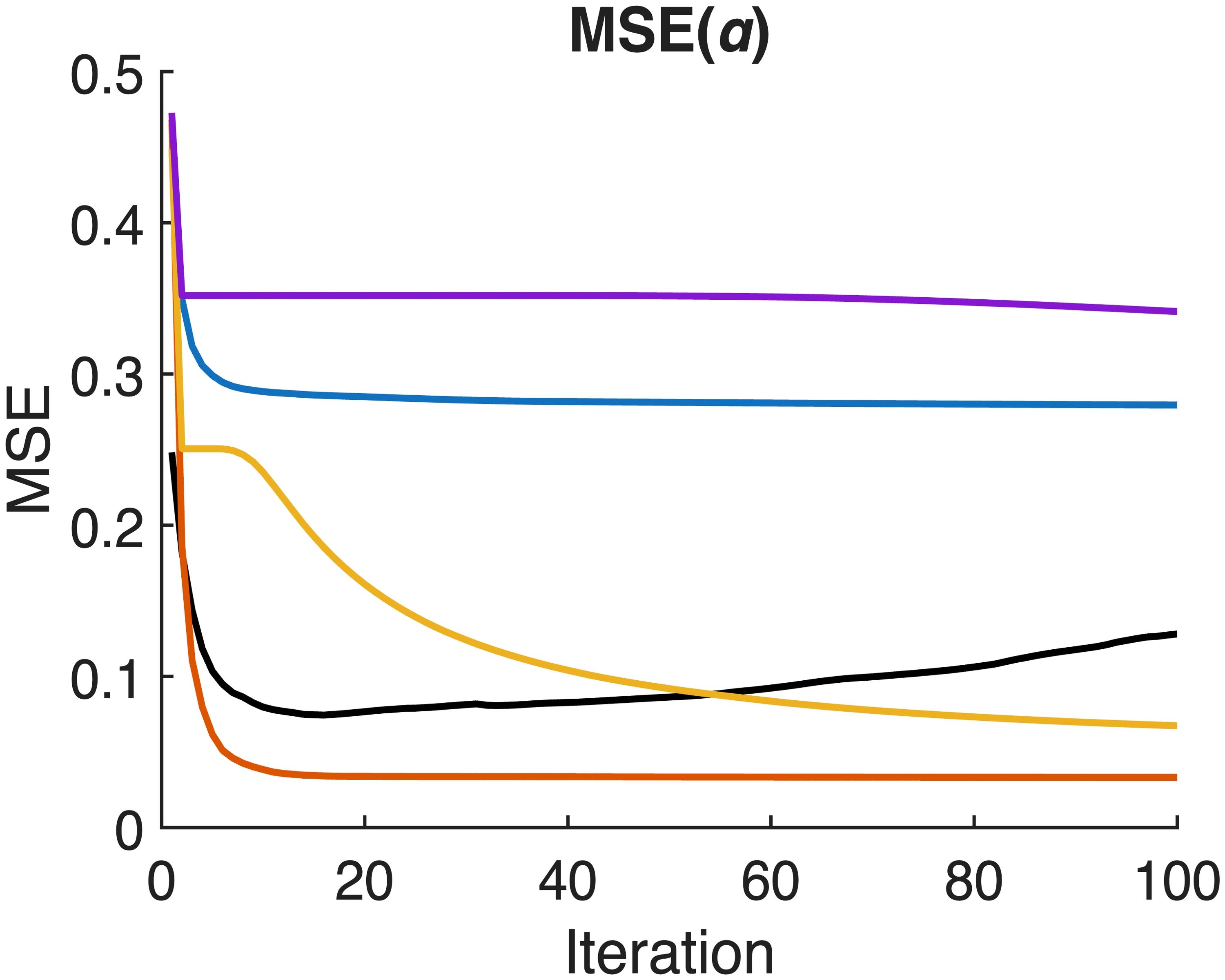}
        \caption{Recovering Performance across $\rho^0$'s.}
        \label{fig:ablation_rho_mse}
    \end{subfigure}
    % \vspace{-0.1cm}
    \caption{Recovery performance across different $M$'s, $p_0$'s, and $\rho^0$'s.}
     \vspace{-2.5ex}
\end{figure*}

\paragraph{Ablation on $p_0$} We fix $\rho^0 = 10^{-2}$, $\lambda = 10^{-2}$, $M = 5$, and vary $p_0 \in \{0.3, 0.5, 0.7\}$.  Figure~\ref{fig:ablation_p_mse} demonstrates that our method exhibits very similar recovery performances with small differences during early iterations. This is consistent with the role of $p_0$ in our algorithm, which sets the initial scale of the surrogate matrix used for spectral initialization. With an initial guess of the average detection probability, it sets the starting point in a stable, reasonably scaled local region of the nonconvex objective, with the initial $U$ and $V$ at comparable scales. Since the problem is bilinear in $U$ and $V$, a balanced initialization is more stable for alternating updates and less prone to ill-scaled gradient steps in which one factor is excessively large, and the other is excessively small. Since $p_0$ is just used for a descent starting point, subsequent updates refine the solution towards convergence, and the influence of $p_0$ largely vanishes as the optimization proceeds. Thus, the algorithm shows better recovery and comparable convergence behaviors compared to baselines despite solving a harder problem. 

\paragraph{Ablation on $\rho^0$} We set $p_0 = 0.5$, $\lambda = 10^{-2}$, $M = 5$, and vary only initial penalty $\rho_0 \in \{10^{-4}, 10^{-3}, 10^{-2}, 10^{-1}\}$ with $\rho_{UU}^0 = \rho_{VV}^0 = \rho_{UV}^0 = \rho^0$. Figure~\ref{fig:ablation_rho_mse} shows that for moderate $\rho^0$, a satisfying recovery can be achieved. However, overly large and small $\rho^0$ cause poor recovery for $\boldsymbol \alpha$, and overly large $\rho^0$ leads to poor recovery for $U$ and $V$. This is because 
% $\rho_X$'s weight the coupling between the latent bilinear terms $M_X$ and $A_X$ where 
$A_X$-subproblem is solved via half-thresholding with threshold $\tau_X^k \propto (\lambda_X/\rho_X^k)^{2/3}$ where size of $\lambda_X/\rho_X^k$ controls how aggressively the sparsity constraints are enforced on $UU^\top$, $VV^\top$, and $UV^\top$. 
Also, the recovered similarity/connectivity matrices are biased towards smaller magnitudes, since though $\ell_{1/2}$ promotes sparsity more than shrinking the scale than $\ell_1$, it still introduces magnitude bias. When enforcement is too strong, with large $\rho_X^k$ (which grows exponentially from $\rho_X^0$), the algorithm puts more focus on making the threshold auxiliary variables than on fitting the likelihood terms, the latent interaction structure is shrunk excessively, some weak but existing patterns are suppressed, and the overall magnitude becomes smaller. The shrinkage in $\mathbf u_i^{\top}\mathbf v_j$ drives $\mathbf p$ towards its upper bound, degrading the estimation of $\boldsymbol\alpha$. On the other hand, overly small $\rho_X^0$ leads to weak coupling between the factor block and primal variables, and the constraints are not enforced efficiently, harming the recovery of $\boldsymbol\alpha$. Thus, $\rho^0$ should be chosen to balance the enforcement of feasibility constraints, promotion of sparsity, i.e., relative scale of $\lambda_X$ and $\rho_X^0$, to faithfully preserve the latent scale.

\begin{figure}[ht]
    \centering
     % \vspace{-1ex}
    \begin{subfigure}[t]{0.20\textwidth}
        \centering
        \includegraphics[width=\textwidth]{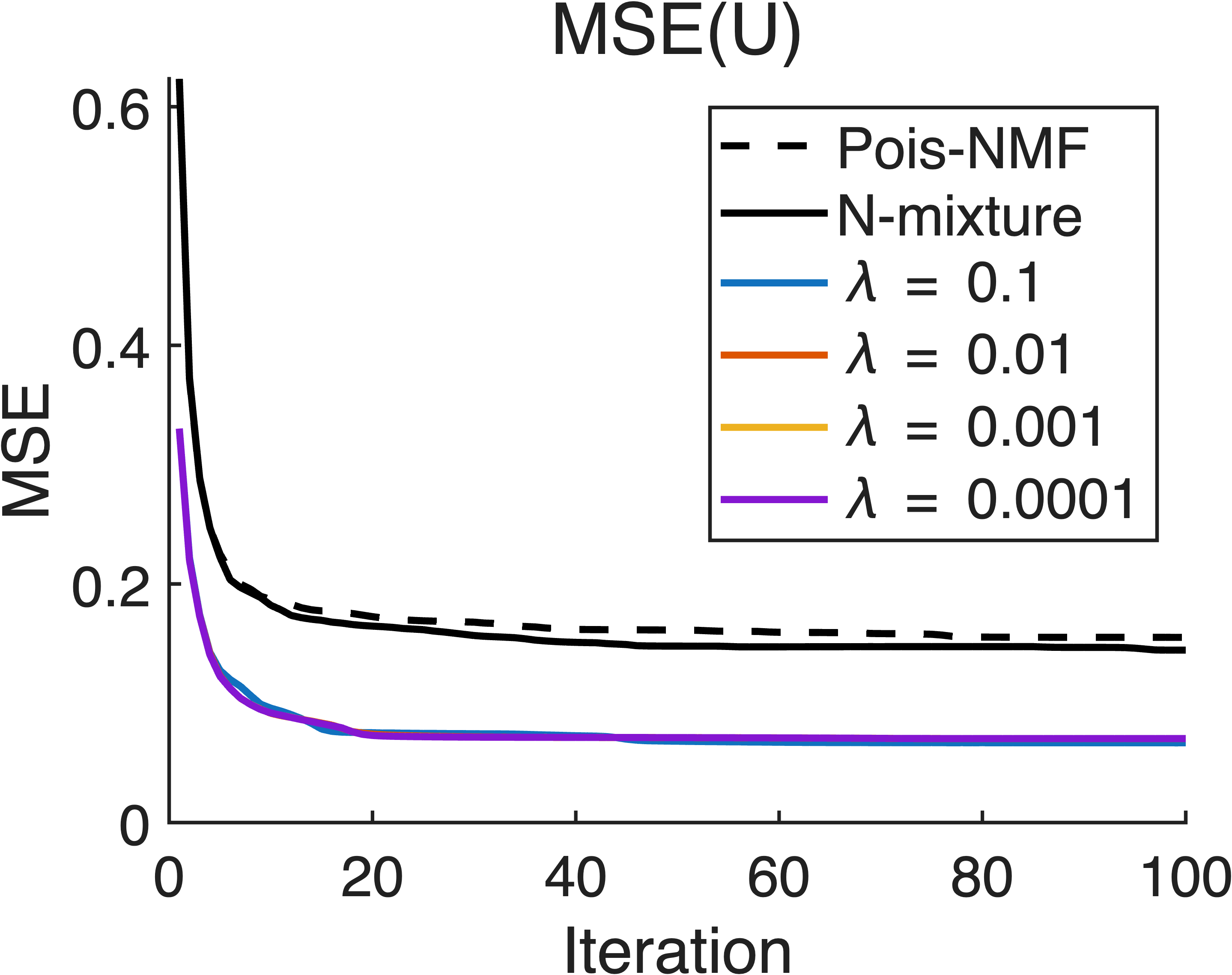}
    \end{subfigure}
    \begin{subfigure}[t]{0.20\textwidth}
        \centering
        \includegraphics[width=\textwidth]{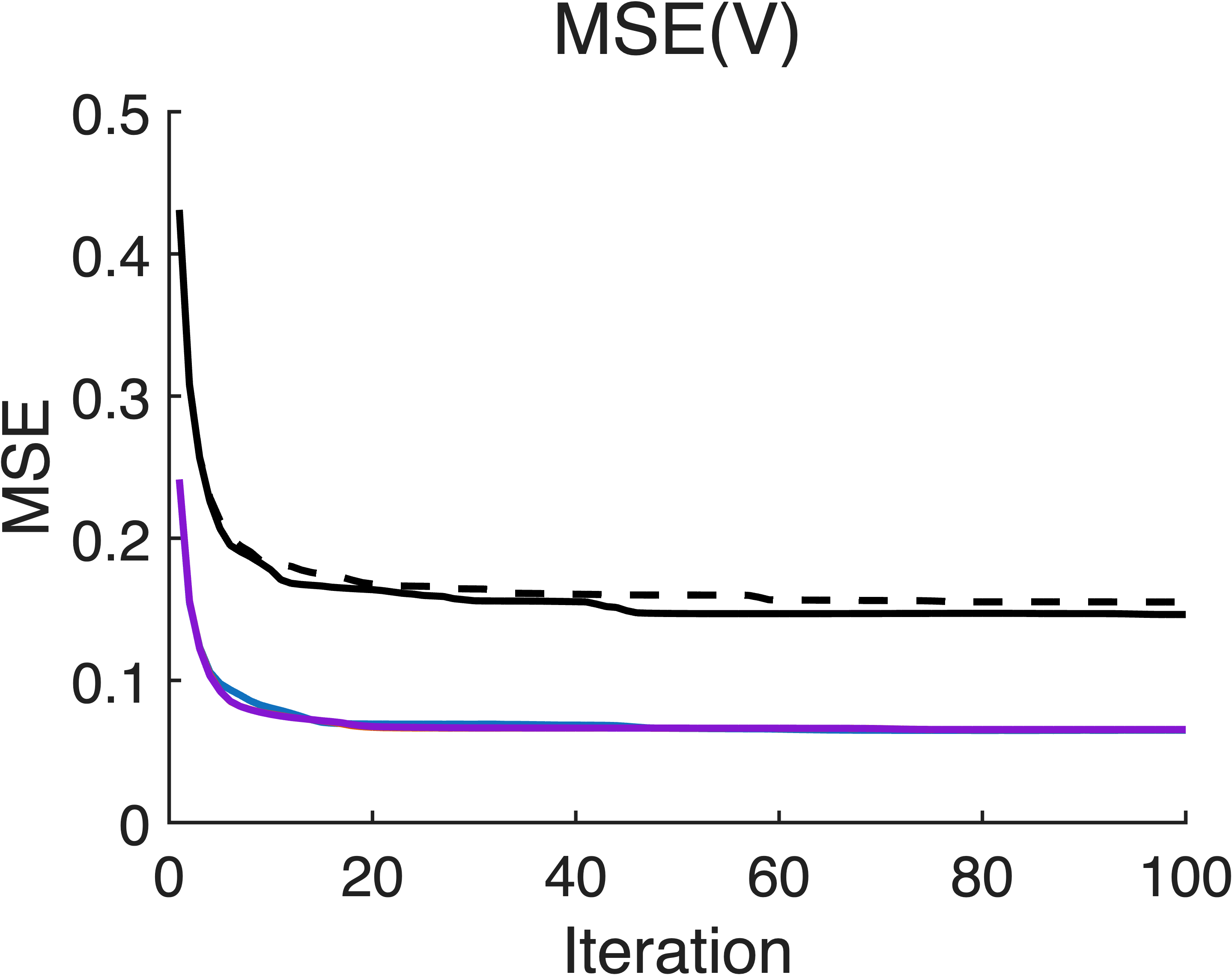}
    \end{subfigure}

    \vspace{1ex}   
    
    \begin{subfigure}[t]{0.20\textwidth}
        \centering
        \includegraphics[width=\textwidth]{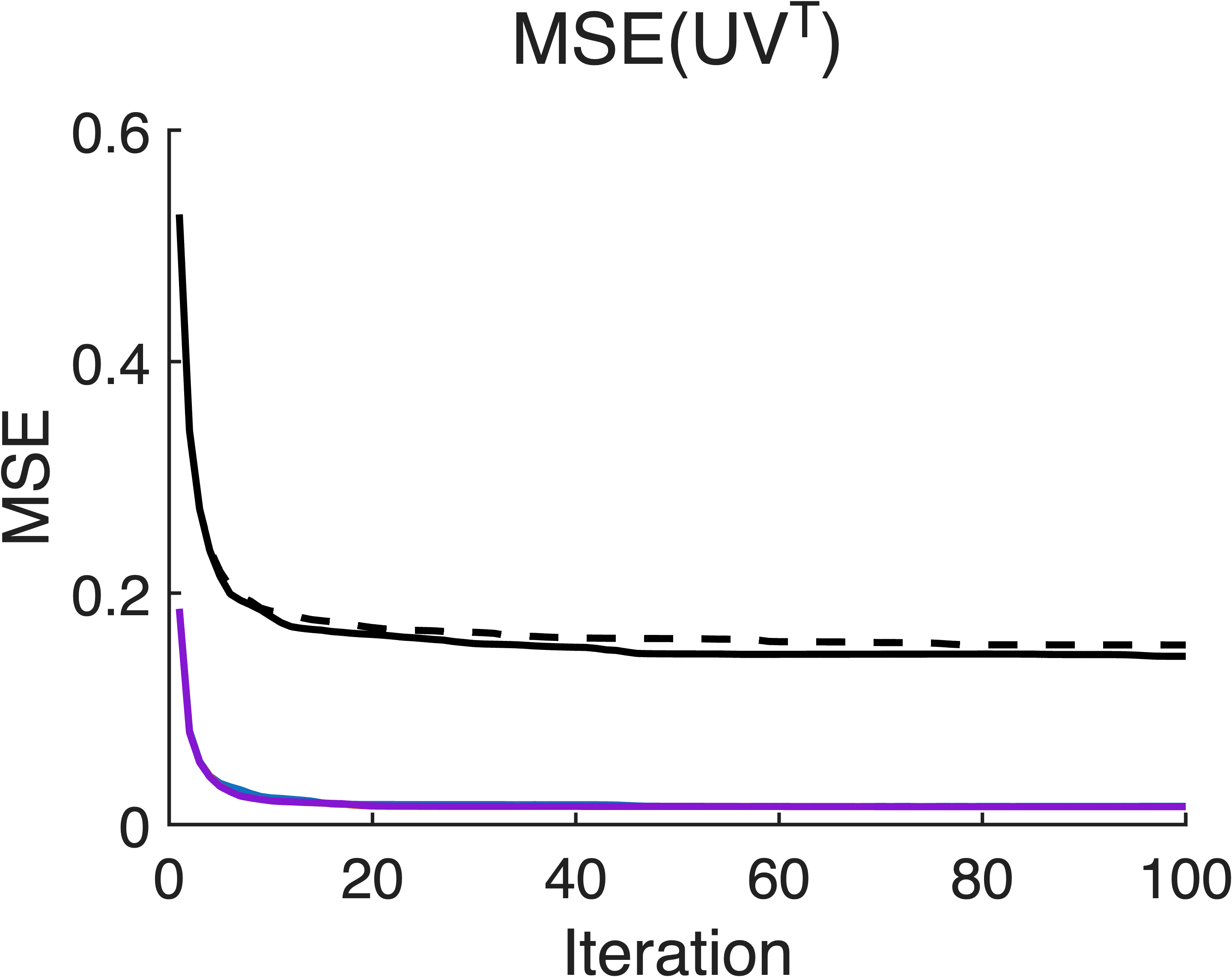}
    \end{subfigure}
    \begin{subfigure}[t]{0.20\textwidth}
        \centering
        \includegraphics[width=\textwidth]{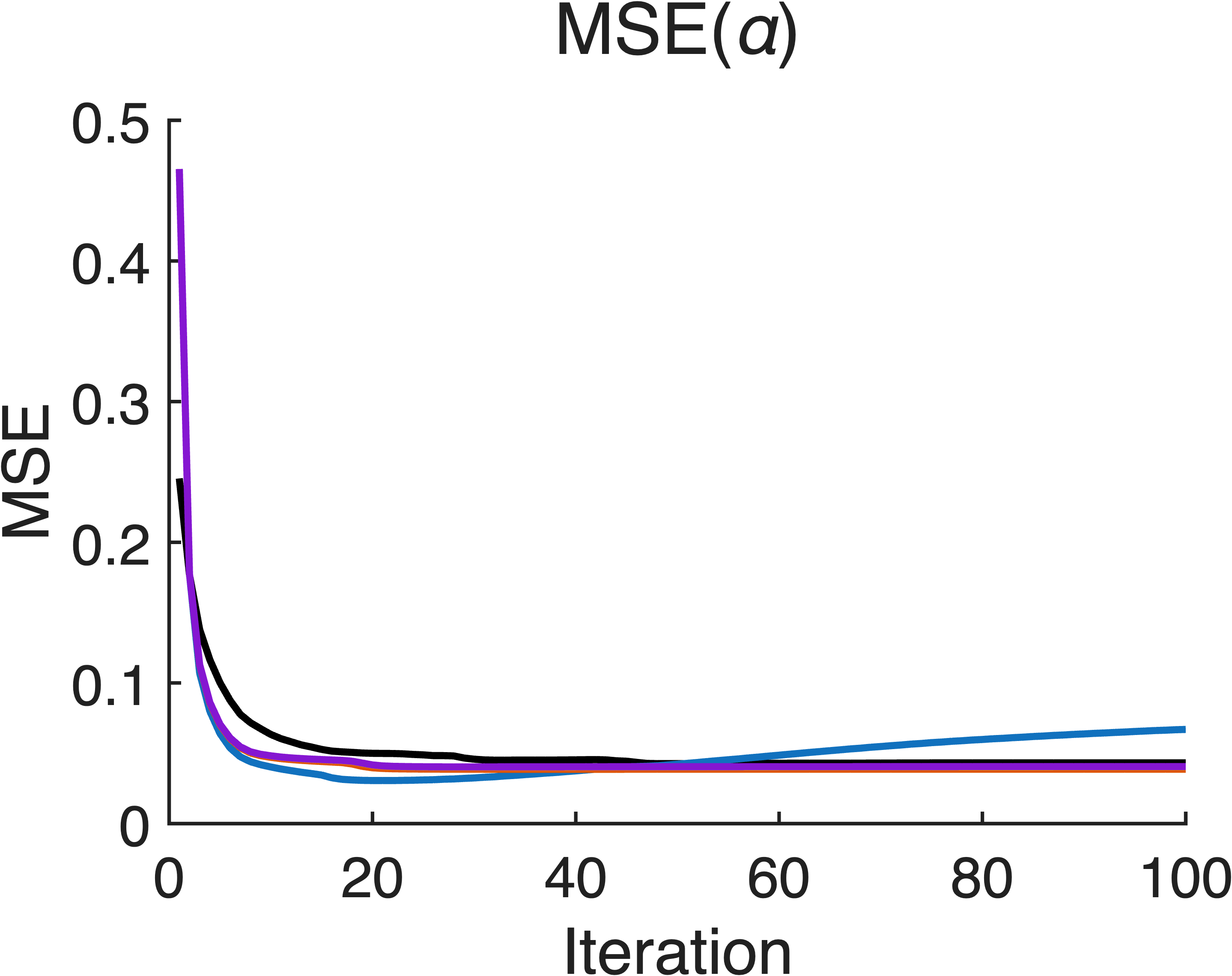}
    \end{subfigure}
    % \vspace{-0.1cm}
    \caption{Recovery performance across different $\lambda$'s.}
\label{fig:ablation_lambda_mse}
 \vspace{-3ex}
\end{figure}

\paragraph{Ablation on $\lambda$} We fix $p_0 = 0.5$, $\rho^0 = 10^{-3}$, $M = 5$, and vary $\lambda \in \{10^{-4}, 10^{-3}, 10^{-2}, 10^{-1}\}$ with $\lambda_{UU} = \lambda_{VV} = \lambda_{UV} = \lambda$. $\rho^0$ is chosen since in the ablation study on $\rho^0$, the algorithm operates in a stable regime with consistently small recovery error across variables with $\rho^0 = 10^{-3}$. Figure~\ref{fig:ablation_lambda_mse} shows that the proposed method is insensitive to $\lambda$ over the range from $10^{-4}$ to $10^{-2}$ with overlapping recovery performances across variables. This indicates that once the penalty is appropriately chosen, moderate changes in the sparsity weight don't alter the recovered latent structure. 
However, when $\lambda = 10^{-1}$, the MSE of $\boldsymbol \alpha$ shows a slow upward drift over iterations while the latent factor structure remains nearly the same. This is consistent with the role of $\lambda$ in the $A_X$-update, where overly large $\lambda$ causes large $\lambda/\rho^k$. As a result, the half-thresholding in $A_X$-update sets many entries of $A_X$ to $0$ and also shrinks the magnitude of surviving entries. 
MSEs of $U$ and $V$ are computed after normalization and are therefore less sensitive to scale distortion. In contrast, $\boldsymbol \alpha$ is sensitive to scale: when the latent interaction scale is overly shrunk, the model compensates by driving $\mathbf p$ to its upper bound, thereby degrading $\boldsymbol \alpha$ recovery.

\end{document}